\numberwithin{equation}{section}
\newtheorem{thm}{Theorem}[section]
\newtheorem{cor}{Corollary}[thm]
\newtheorem{lemma}[thm]{Lemma}
\newtheorem{prop}{Proposition}[section]
\theoremstyle{definition} 
\newtheorem{rmk}{Remark}[section]
\newtheorem{defn}{Definition}[section]
\DeclarePairedDelimiter{\ceil}{\lceil}{\rceil}
\newcommand{\PX}{\hat{P}^n_X}
\newcommand{\potnet}{\textsf{POTNet}\xspace}
\newcommand{\mpw}{\textsf{MPW}\xspace}
\newcommand{\PXj}{\hat{P}^n_{X,j}}
\newcommand{\X}{\mathcal{X}}
\newcommand{\R}{\mathbb{R}}
\newcommand{\N}{\mathcal{N}}
\newcommand{\W}{\mathbf{W}}
\newcommand{\h}{\mathbf{h}}
\DeclareMathOperator*{\argmax}{arg\,max}
\title{Efficient Generative Modeling via Penalized Optimal Transport Network}
\newcommand\CoAuthorMark{\footnotemark[\arabic{footnote}]} 
\author[1]{Wenhui Sophia Lu\footnote{Contributed equally}}
\author[3]{Chenyang Zhong\protect\CoAuthorMark}
\author[1,2]{Wing Hung Wong
  \thanks{Corresponding author; Electronic address: \texttt{whwong@stanford.edu}}}
\affil[1]{Department of Statistics, Stanford University}
\affil[2]{Department of Biomedical Data Science, Stanford University}
\affil[3]{Department of Statistics, Columbia University}
\date{}
\begin{document}

\maketitle
\vspace{-0.35in}
\begin{center}
\date{\today}
\end{center}

\begin{abstract}
The generation of synthetic data with distributions that faithfully emulate the underlying data-generating mechanism holds paramount significance across a wide array of scientific disciplines. 
Wasserstein Generative Adversarial Networks (WGANs) have emerged as a prominent tool for this task because of their remarkable flexibility. 
However, due to the delicate balance between the generator and discriminator networks and the instability of Wasserstein distance in high dimensions, WGAN often manifests the pathological phenomenon of \textit{mode collapse}.
This results in generated samples that converge to a restricted set of outputs and fail to adequately capture tail behaviors of the true distribution. 
Such limitations can lead to serious downstream consequences, potentially compromising the integrity and applicability of synthetic data in critical applications.

To this end, we propose the \textit{Penalized Optimal Transport Network} (\potnet), a versatile deep generative model based on a robust and interpretable marginally-penalized Wasserstein (\mpw) distance. Through the \mpw distance, \potnet effectively leverages low-dimensional marginal information to guide the overall alignment of joint distributions.
Furthermore, our primal-based framework enables direct evaluation of the \mpw distance, thus eliminating the need for a critic network.
This formulation circumvents training instabilities inherent in adversarial approaches and avoids the need for extensive parameter tuning.
Theoretically, we derive non-asymptotic bound on the generalization error of the \mpw loss and establish convergence rates of the generative distribution learned by \potnet. Our theoretical results elucidate the efficacy and reliability of the proposed model in mitigating mode collapse. 
Through comprehensive simulation studies and real-world data applications, we demonstrate the superior performance of \potnet in accurately capturing underlying data structures, including their tail behaviors and minor modalities.
Moreover, our model achieves orders of magnitude speedup during the sampling stage compared to state-of-the-art alternatives, which enables computationally efficient large-scale synthetic data generation. 
\end{abstract}

\noindent%
{\bf Keywords:} mode collapse; synthetic data generation; marginal penalization; generative density estimation; Wasserstein distance.

\section{Introduction}
The generation of synthetic data whose distributions closely mirror the true data generating mechanism holds significant importance across various disciplines, particularly for model evaluation, data augmentation, and model selection. 
When the true parameter of interest is unobserved, such as causal effects in econometrics, researchers 
typically rely on Monte Carlo studies with discretionary and idealized assumptions, which can undermine the real-world applicability of the proposed methods \citep{advani2019mostly, knaus2021machine}. 
In contrast, faithful synthetic data provide a means for researchers to systematically evaluate methods under conditions that more closely reflect real-world scenarios.
On the other hand, synthetic data are increasingly being used to train downstream models. For example, in Bayesian likelihood-free inference, researchers use synthetic data to directly approximate complex posterior distributions and bypass the need of explicit likelihood evaluations \citep{zhou2023deep}. In machine learning, augmenting training sets with synthetic data has been proposed as a solution to address class imbalance issues \citep{ganganwar2024employing}. 
Furthermore, practitioners may use synthetic datasets to evaluate the empirical performance of estimators on specific datasets, and utilize these performance metrics to guide the model selection process \citep{athey2024using}.
Generating high-quality artificial 
datasets is therefore vital for ensuring accurate downstream analyses and informed decision-making.

Deep generative models, including the family of Generative Adversarial Networks (GANs; \citealp{goodfellow2020generative}), have emerged as powerful tools for synthetic data generation.
The GAN formulation considers a minimax problem, where the \textit{generator} network aims to model the distribution of the data generating process, and the \textit{critic} network seeks to distinguish between real and fake data.
Yet, these models are susceptible to \textit{mode collapse}, a pathological phenomenon where the generator ``collapses'' to a restricted set of outputs, repeatedly generating similar data while neglecting tails of the original data distribution \citep{gulrajani2017improved, thanh2020catastrophic}. 
This limitation can lead to synthetic data that underrepresents or entirely omits minority subgroups in the training set, thereby exacerbating biases in downstream applications. 
Numerous studies have examined the profound and far-reaching social ramifications of mode collapse, particularly its potential to produce inequitable outcomes in critical domains such as hiring algorithms, facial recognition, and criminal justice risk assessments \citep{gao2022fairneuron, jain2022imperfect, shumailov2024ai}. 
Moreover, GANs often require extensive parameter tuning, such as determining the appropriate depth of the discriminator and generator networks. 
Despite these efforts, GANs remain notoriously difficult to train, with gradient descent-based optimization frequently failing to achieve convergence in practice \citep{mescheder2018training}. Even for the successor Wasserstein GANs (WGANs), which 
attempts to alleviate the problem of vanishing gradients by using Wasserstein metric, mode collapse remains a persistent challenge \citep{pei2108alleviating}.

Empirical studies and theoretical analyses suggest that mode collapse arises from the saddle-point formulation of the adversarial training process and the highly non-convex optimization landscape, both of which lead to a number of instabilities \citep{gulrajani2017improved, mescheder2018training}.
There have been substantial efforts made to address these issues, such as introducing gradient clipping \citep{arjovsky2017wasserstein} or penalty terms on the discriminator’s gradient \citep{gulrajani2017improved}.
However, empirical observations frequently reveal that, even when the optimization process ostensibly succeeds, the resultant generator often fails to ensure good generalization properties or sufficient diversity in the generated samples \citep{arora2017gans}.
As a direct consequence of these shortcomings, mode collapse and training instabilities persist as critical pathologies. These issues are particularly pronounced in higher-dimensional settings, where the empirical convergence rate of the Wasserstein distance deteriorates exponentially with increasing dimensionality \citep{fournier2015rate}.

Furthermore, tabular data, rectangular arrays of values where each column represents a well-defined attribute and each row corresponds to a distinct record, often exhibit multimodality and consist of heterogeneous features, including categorical and continuous variates. These intricate structures pose additional challenges for GANs and WGANs, which were originally designed for data with homogeneous feature types such as images and text.
As tabular data are ubiquitous across the natural and social sciences, as well as clinical studies, researchers may seek some degree of interpretability in the complex generative models used for synthetic data generation \citep{garcia2023evaluation, valina2024explainable}. For instance, they may wish to evaluate how the model penalizes mismatches between synthetic and real data for a specific covariate, such as \textit{sex} or \textit{cholesterol level}, or prioritize the alignment of certain features while placing less emphasis on others.

Motivated by the aforementioned challenges, we propose the \textit{Penalized Optimal Transport Network} (\potnet), a versatile generative model for synthetic data generation based on an intuitively appealing \textit{Marginally-Penalized Wasserstein} (\mpw) loss. 
Although \potnet is compatible with various data types including image data, the majority of this paper focuses on its utility in the context of tabular data distributions.
Our model consists of two core components: a data transformer that standardizes and encodes data of potentially mixed data types to a uniform scale, 
and a generator network that learns a pushforward map approximating the target measure optimized under the \mpw loss. 
The generator network has a natural conditional extension that enables conditioning on an arbitrary subset of features. This capability is particularly useful for data imputation, whereby users can generate synthetic data as a proxy for missing data by conditioning on the observed features.
The main advantages of the proposed framework and a summary of our contributions are presented below. 
First, unlike approaches that utilize the Kantorovich-Rubinstein duality, our primal-based framework allows for direct evaluation of the \mpw distance via a single minimization step. This formulation eliminates the need for a critic network, thereby fully circumventing potentially undesirable equilibria associated with sharp gradients in discriminator functions during minimax optimization. 
Second, we develop the \mpw distance, which explicitly leverages low-dimensional marginal information to guide the alignment of high-dimensional joint distributions. 
This novel loss function penalizes marginal mass misallocation, thereby effectively accommodating mixed data types and ensuring a fast convergence rate for marginal distributions. The latter is crucial in empirical settings, where interpretable parameters and the focus of inference often concentrate on marginals.
Third, the \mpw loss and \potnet offer desirable theoretical guarantees. We derive non-asymptotic bound on the generalization error of the \mpw loss and establish convergence rates of the generative distribution learned by \potnet to the true data-generating distribution in terms of both marginal and joint distributions. Moreover, through both theoretical and empirical analyses, we demonstrate how \potnet effectively addresses the mode collapse issue, including both mode dampening and tail shrinkage. 

The rest of the the paper is organized as follows. 
Section \ref{Sec:prelim} provides a brief review of the relevant background.
Section \ref{Sec:method} introduces the Marginally-Penalized Wasserstein distance and presents the Penalized Optimal Transport Network methodology. 
Section \ref{Sec:theory} presents a comprehensive theoretical analysis of the \mpw distance and \potnet.
In Section \ref{Sec:sim}, we assess the effectiveness of \potnet through extensive simulation studies. 
In Section \ref{Sec:app}, we illustrate the applicability of the proposed method using four real-world datasets.
Section \ref{Sec:disc} offers concluding remarks and discussion. Technical details, proofs of theoretical results from Section \ref{Sec:theory}, and additional simulation results are provided in the Appendix.
 
\section{Preliminaries}\label{Sec:prelim}
In this section, we provide a concise overview of the relevant preliminaries.

\paragraph{Notation}
We denote by $\|\cdot\|_2$ the Euclidean norm. The following notations apply to any $m\in\mathbb{N}_{+}$. We denote $[m]:=\{1,2,\cdots,m\}$, and denote by $\mathbb{S}_m$ the set of all permutations of size $m$. We denote by $\mathrm{Lip}_{1,m}$ the set of $1$-Lipschitz functions on $\mathbb{R}^m$, i.e., real-valued functions $f$ on $\mathbb{R}^m$ such that $|f(\mathbf{x})-f(\mathbf{y})|\leq \|\mathbf{x}-\mathbf{y}\|_2$ for all $\mathbf{x},\mathbf{y}\in\mathbb{R}^m$. For any $\mathbf{x}_0\in \mathbb{R}^m$ and $r>0$, we define $B_r(\mathbf{x}_0):=\{\mathbf{x}\in\mathbb{R}^m:\|\mathbf{x}-\mathbf{x}_0\|_2\leq r\}$; when $\mathbf{x}_0=(0,0,\cdots,0)\in\mathbb{R}^m$, we write $B_r(\mathbf{x}_0)$ as $B_r$. For any $\mathbf{a}\in\mathbb{R}^m$, we denote by $\delta_{\mathbf{a}}$ the Dirac measure at $\mathbf{a}$.  

\paragraph{Implicit distributions and generative models}
Implicit distributions are defined as probability models from which samples can be easily drawn but whose density functions are not directly available. 
Generative models are prominent examples as they learn a representation of the underlying data distribution without producing explicit estimates of the density functions.
Let $Z \sim P_Z(\cdot)$, where $Z \in \mathcal{Z} \subseteq \mathbb{R}^{d_z}$, represent a low-dimensional noise variable typically drawn from an isotropic Gaussian distribution $\mathcal{N}(0, \mathbf{I}_{d_z})$, and let $P_X(\cdot)$ denote the true data-generating distribution. 
Denote the observed data by $X^{(1)},\cdots,X^{(n)}\in\mathcal{X}\subseteq \mathbb{R}^d$, which are generated i.i.d. from $P_X(\cdot)$, and let $\pmb{X}:=(X^{(1)},\cdots,X^{(n)})$. Implicit models aim to learn a deterministic pushforward mapping $G_\theta: \mathcal{Z} \to \mathcal{X}$, $\theta\in\Theta$, which is typically parameterized by a deep neural network defined later in this section. This mapping is optimized to minimize a well-defined notion of discrepancy between the distribution $P_\theta$ of the generated samples $G_\theta(Z)$ and the empirical measure of real observations, $\PX := n^{-1} \sum_{i=1}^n \delta_{X^{(i)}}$. 
We will refer to $G_\theta$ as the \textit{generator} network and $P_\theta$ as the \textit{generative distribution} henceforth. 
In the conventional GAN framework, discrepancy (e.g., Kullback-Leibler divergence or Wasserstein distance) is evaluated using a \textit{discriminator} or \textit{critic} network without requiring an analytical expression of the densities. Given a batch of synthetic samples and real observations, the discriminator aims to differentiate between real and fake samples \citep{goodfellow2020generative}. Thus the full GAN objective is given by $\min\limits_{\theta\in\Theta} \max\limits_{f\in\mathcal{F}}\big\{\mathbb{E}_{X \sim \PX}[f(X)] - \mathbb{E}_{Z \sim P_Z}[f(G_{\theta}(Z))]\big\}$,
where $\mathcal{F}$ is a suitable class of functions. When $\mathcal{F}$ consists of the family of $1$-Lipschitz functions on $\X$, the resulting GAN is the Wasserstein GAN (WGAN) \citep{arjovsky2017wasserstein}. Empirically, the adversarial training process often requires striking a subtle balance between the complexity and training dynamics of the generator and discriminator networks \citep{gulrajani2017improved}.

\paragraph{Multilayer feedforward neural network}
To estimate the pushforward mapping to the target distribution, we employ a generator network structured as a fully connected feedforward neural network. This network is composed of recursive layers, each consisting of an affine transformation parameterized by weights $\W$ followed by elementwise application of the ReLU activation function, defined as $\sigma(x) = \max\{x, 0\}$.
Let $L$ denote the \textit{number of hidden layers}, and let $\h = (h_0, h_1, \dots, h_L)$ specify the number of hidden neurons in each of the $L+1$ layers. By convention, $h_0$ represents the input dimension, while $h_{L+1}$ indicates the output dimension. A multilayer feedforward neural network with architecture $(L, \h)$ and ReLU activation $\sigma(\cdot)$ defines a real-valued function mapping $\mathbb{R}^{h_0} \to \mathbb{R}^{h_{L+1}}$, expressed as $g(\mathbf{x}) = f_{L+1} \circ \sigma \circ f_L \circ \sigma \cdots \circ \sigma \circ f_1(\mathbf{x})$,
where each layer is represented by a linear transformation.
It is important to note that two neurons are connected if and only if they belong to adjacent layers (hence the name \textit{fully connected}).

\paragraph{Wasserstein distance}

Now we briefly review the background on the Wasserstein distance. Fix a Borel set $\mathcal{X}\subseteq\mathbb{R}^d$. For any $p\geq 1$, we denote by $\mathcal{P}_p(\mathcal{X})$ the set of probability distributions $\mu$ on $\mathcal{X}$ with finite $p$th moment, i.e., $\int_{\mathcal{X}}\|\mathbf{x}\|_2^p d\mu(\mathbf{x})<\infty$. For any probability distributions $\mu,\nu\in\mathcal{P}_p(\mathcal{X})$, the \emph{$p$-Wasserstein distance} between them is defined as 
\begin{equation}\label{PD}
    W_{p}(\mu,\nu):= \Big(\inf_{\pi\in\Pi(\mu,\nu)}\int_{\mathcal{X}\times \mathcal{X}} \|\mathbf{x}-\mathbf{y}\|_2^p d\pi(\mathbf{x},\mathbf{y})\Big)^{1\slash p},
\end{equation}
where $\Pi(\mu,\nu)$ is the set of probability distributions on $\mathcal{X}\times\mathcal{X}$ with first and second marginals given by $\mu$ and $\nu$, respectively. It can be shown that the $W_p(\cdot,\cdot)$ gives a valid metric on $\mathcal{P}_p(\mathcal{X})$ \cite[Chapter 6]{villani2009optimal}. The form \eqref{PD} of the $p$-Wasserstein distance is closely connected to optimal transport theory \citep{villani2009optimal}: if the cost of transporting one unit of mass from $\mathbf{x}$ to $\mathbf{y}$ (where $\mathbf{x},\mathbf{y}\in\mathcal{X}$) is given by $\|\mathbf{x}-\mathbf{y}\|_2^p$, then $\inf\limits_{\pi\in\Pi(\mu,\nu)}\int_{\mathcal{X}\times \mathcal{X}} \|\mathbf{x}-\mathbf{y}\|_2^p d\pi(\mathbf{x},\mathbf{y})$ corresponds to the minimum transport cost for moving the mass from distribution $\mu$ to distribution $\nu$.

The $p$-Wasserstein distance allows the following dual formulation \cite[Theorem 5.10]{villani2009optimal}: for any $\mu,\nu\in\mathcal{P}_p(\mathcal{X})$, 
\begin{equation*}
    W_{p}(\mu,\nu) = \sup\limits_{(\psi,\phi)\in\mathcal{G}_p}\Big( \int_{\mathcal{X}}\phi d\nu- \int_{\mathcal{X}}\psi d\mu \Big)^{1\slash p},
\end{equation*}
where $\mathcal{G}_p$ is the set of pairs of bounded continuous functions $(\psi,\phi)$ on $\mathcal{X}$ that satisfy $\phi(\mathbf{y})-\psi(\mathbf{x})\leq \|\mathbf{x}-\mathbf{y}\|_2^p$ for all $\mathbf{x},\mathbf{y}\in \mathcal{X}$. By Kantorovich-Rubinstein duality \cite[Case 5.16]{villani2009optimal}, when $p=1$,
\begin{equation*}
 W_{1}(\mu,\nu)=\sup\limits_{f\in \mathrm{Lip}_{1}(\mathcal{X})}\Big\{\int_{\mathcal{X}}f d\mu-\int_{\mathcal{X}}f d\nu\Big\},
\end{equation*}
where $\mathrm{Lip}_{1}(\mathcal{X})$ is the set of $1$-Lipschitz functions on $\mathcal{X}$.

\section{Penalized Optimal Transport Network}\label{Sec:method}
In this section, we introduce the Penalized Optimal Transport Network (\potnet), a novel generative model designed to address the challenges of training instabilities and mode collapse frequently encountered in existing approaches.
We highlight two important characteristics of \potnet: (1) it eliminates the need for a critic network, and (2) it utilizes the \mpw loss which incorporates marginal regularization. These distinctions significantly reduce the need of \potnet for extensive hyperparameter tuning, making it a more robust and stable alternative to current generative models.

\subsection{Marginally-Penalized Wasserstein Distance}
The caliber of synthetic samples hinges on the divergence used to quantify proximity between generative and target distributions.
A well-known limitation of the Wasserstein distance is its exponentially deteriorating convergence rate as the dimensionality increases \citep{fournier2015rate}. 
However, when joint distributions are aligned, their marginal distributions along each coordinate axis must coincide.
Building on this observation, we propose the Marginally-Penalized Wasserstein distance (\mpw), which leverages the faster convergence of marginal estimations to enhance the overall alignment of joint distributions.
By construction, the \mpw distance inherently accommodates heterogeneous data types. To lay the groundwork for the proposed framework, we first present key definitions and results necessary for subsequent discussions.

\begin{defn}[Marginal Distribution]\label{def_marginal}
For any $j\in [d]$, we define the projection map $p_j:\mathbb{R}^d\rightarrow\mathbb{R}$ by setting $p_j(\mathbf{x}):= x_j$ for any $\mathbf{x}=(x_1,x_2,\cdots,x_d)\in\mathbb{R}^d$. For any probability distribution $\mu$ on $\mathbb{R}^d$, we denote by $(p_j)_{*}\mu$ the pushforward of $\mu$ by $p_j$, i.e., the probability distribution on $\mathbb{R}$ such that for any Borel set $A\subseteq \mathbb{R}$, $(p_j)_{*}\mu(A)=\mu(p_j^{-1}(A))$, where $p_j^{-1}(A):=\{\mathbf{x}=(x_1,x_2,\cdots,x_d)\in\mathbb{R}^d:x_j\in A\}$. 
\end{defn}

\begin{defn}[Marginally-Penalized Wasserstein (\mpw) Distance]\label{def:mpw-dist}
Recall the notations in Section \ref{Sec:prelim}. For any two probability distributions $\mu,\nu\in\mathcal{P}_1(\mathcal{X})$, we define the \emph{marginally-penalized Wasserstein} (\mpw) \emph{distance} between $\mu$ and $\nu$ as
\begin{align}
    \mathcal{D}(\mu,\nu):= \underbrace{W_1(\mu,\nu)}_{\text{joint distance}}+ \sum_{j=1}^d \underbrace{\lambda_j W_1((p_j)_{*}\mu,(p_j)_{*}\nu)}_{\text{marginal penalty}}, 
    \label{mpw-loss-def}
\end{align}
where $\pmb{\lambda} = (\lambda_1,\lambda_2,\cdots,\lambda_d)$ are hyperparameters.
\end{defn}

\begin{rmk}
    The definition above corresponds to the specific case of the $1$-coordinate \mpw distance chosen for its conceptual clarity and notational brevity. The \textit{Generalized} \mpw distance, computed using a general set of marginals, is detailed in Appendix A.
\end{rmk}

The \mpw distance is a metric on $\mathcal{P}_1(\mathcal{X})$ and allows a dual representation; we refer to Appendix B for details. For discussions on the conceptual motivations underpinning the \potnet framework, including a rationale for the \mpw distance from the perspective of constrained optimization, see Section \ref{Sec:theory}. We also demonstrate in that section that it is theoretically possible to mitigate the curse of dimensionality by accurately estimating low-dimensional marginal distributions.

We now briefly discuss the practical significance of marginal distributions and how the \mpw distance naturally accommodates mixed data types.
\paragraph{Practical significance and interpretability of marginal estimation}
From an empirical perspective, accurate marginal estimation and effective recovery of marginal characteristics are highly important.
Researchers in natural and social sciences frequently focus on quantities derived from marginal distributions due to their meaningful interpretations and practical significance \citep{open2015estimating, serreze2015arctic, pew2015american}. 
For example, econometricians are interested in identifying income composition and pay particular attention to the top tail of the heavy-tailed income distribution \citep{piketty2003income}.
On the other hand, Bayesian researchers frequently report inferences for individual parameters in posterior estimation \citep{drovandi2024improving}. 
Moreover, marginal distributions may inform model identifiability during initial modeling phases.
Precise characterization of the marginals such as their tails is therefore crucial for constructing reliable posterior intervals.

\paragraph{Accommodation of heterogeneous data modalities}
The diverse distributional structures of continuous and categorical variates in tabular data pose a considerable challenge for generative modeling in this domain \citep{xu2019modeling, gorishniy2021revisiting, shwartz2022tabular}. 
By its definition, the \mpw distance is unaffected by this issue:
the joint loss remains well-defined for features normalized to a common scale, while the marginal penalty is inherently unaffected as it operates on each dimension separately (rather than employing stochastic projections on features with disparate typologies).

\paragraph{Related works}
Our proposed distance is broadly related to the Sliced Wasserstein (SW) distance, as both leverage low-dimensional projections of high-dimensional distributions. The SW distance is defined as the integral of Wasserstein distances between radial projections of input distributions onto random directions over the unit sphere.
However, our marginal penalty serves as a form of regularization in generative model training whereas the SW distance was developed to mitigate the computational complexity of Wasserstein minimization for high-dimensional distributions \citep{bonneel2015sliced, kolouri2018sliced}. Furthermore, the \mpw distance emphasizes original coordinate axes and their marginal distributions, which typically carry particular significance and yield meaningful interpretations in many practical applications. Given that researchers often prioritize on the validity of these marginal distributions, the \mpw distance offers a valuable advantage by providing explicit guarantees on these distributions.
Our proposed method also shares conceptual similarities with marginal adjustment in likelihood-free Bayesian inference \citep{drovandi2024improving}.
These studies propose estimating low-dimensional marginal posterior distributions based on the assumption that each parameter depends on only a few key summary statistics. In contrast, our approach uses the low-dimensional marginals to guide the alignment of joint distribution in generative model training. 
Nonetheless, this connection underscores the broad applicability and significance of leveraging marginal information for distributional inference.

\subsection{The \potnet Model}
\potnet performs generative modeling through direct estimation of the \mpw distance, thereby eliminating the need for a critic network.
It comprises two components. The \textit{data transformation} component encodes and maps features of potentially mixed data types in tabular data (i.e., continuous, categorical) to a common scale. 
The \textit{generator neural network} component approximates the pushforward map that transforms low-dimensional noise into the target distribution.
Parameters of the generator network are optimized by minimizing the \mpw loss, as detailed in Section \ref{optim}. We note that the current formulation of \potnet naturally extends to conditional data generation based on an arbitrary subset of covariates.
Details of Conditional \potnet are provided in Appendix C.

\subsubsection{Evaluating the \mpw Distance via Primal Formulation}
The relevant Wasserstein distances in \potnet are computed using the \emph{primal formulation} in \eqref{PD}.
Specifically, we need to evaluate the \mpw distance $\mathcal{D}(\mu,\nu)$ between two probability distributions $\mu$ and $\nu$ that are in the form of empirical distributions of $m$ samples.
We do so through a combination of primal evaluation and marginal sorting, which results in a total complexity of $O(m^2(m+d))$. We compute the joint distance component $W_1(\mu,\nu)$ by solving the optimal transport problem in its primal formulation \eqref{PD} (with $p=1$). We use the algorithm from \cite{bonneel2011displacement}, as implemented in the \texttt{POT} (Python Optimal Transport) library \citep{flamary2021pot}.
The complexity for computing the optimal transport plan is $O(m^3)$ \citep{kuhn1955hungarian, peyre2019computational}, and the complexity for computing $W_1(\mu,\nu)$ based on this optimal transport plan is $O(m^2 d)$. Hence the total computational complexity for evaluating $W_1(\mu,\nu)$ is $O(m^2(m+d))$. 
For each univariate marginal, calculating Wasserstein distances using empirical distributions reduces to sorting, making the task computationally efficient with a time complexity of $O(m\log{m})$.

It has been widely observed empirically that GANs and WGANs typically require extensive parameter tuning (e.g., selecting a suitable network architecture and gradient penalty parameter) due to the adversarial formulation \cite{gulrajani2017improved}. 
By eliminating the critic network, \potnet reduces the optimization problem to only optimizing parameters of the pushforward mapping.
Consequently, \potnet bypasses the need for extensive parameter tuning and improves training stability.
Through comprehensive simulation experiments, we demonstrate the robust performance and rapid convergence of \potnet in Section \ref{Sec:sim}.

\subsubsection{Data Transformation}
There are many subtle yet important distinctions between tabular data and structured data such as images. A significant difference lies in the heterogeneous nature of tabular data, which often comprises a mix of continuous, ordered discrete (ordinal), and categorical features. Properly representing this diverse data structure is crucial for both the training of generative models and the quality of the resulting synthetic datasets.

\paragraph{Categorical variables} 
Let $I_c$ denote the set of indices corresponding to categorical features.
Each categorical feature lacking inherent ordering is transformed using one-hot encoding, a technique that converts categorical variables into a set of binary indicators where 1 and 0 respectively represent the presence or absence of a specific category. Formally, let $X_j, ~j \in I_c$ be a categorical random variable with support $\{1, \dots, {\kappa_j}\}$, where ${\kappa_j}$ denotes the number of categories. The one-hot representation for $X_j$ is a binary vector ${\pmb{X}}^\dagger_j \in \{0,1\}^{\kappa_j}$, where exactly one element is 1 and the rest are 0. For the final activation layer of the generator model, we apply the softmax activation function which outputs a probability distribution in the ${\kappa_j}$-dimensional probability simplex $\Delta^{\kappa_j}$, defined as $\sigma(\pmb{z})_{\ell} = \frac{\exp(z_\ell)}{\sum_{\ell'=1}^{\kappa_j} \exp(z_{\ell'})}$, for $\ell \in [{\kappa_j}]$.
We remark that \potnet, through the elimination of the critic network, entirely circumvents instabilities engendered by the distinction between generated soft-encodings ($[0, 1]^{\kappa_j} \in \Delta^{\kappa_j}$) and real hard encodings ($\{0, 1\}^{\kappa_j}$). Our approach not only avoids the instability and suboptimal performance often encountered in existing GAN models when handling categorical data, but also obviates the need for the Gumbel-Softmax reparameterization typically used by GANs.
During the sample generation phase (but not during training), we apply the $\argmax$ function to the generated probabilities for each categorical variable indexed by $j \in I_c$, transforming them from $[0,1]^{\kappa_j}$ to $[\kappa_j]$.
We then use inverse one-hot encoding to obtain the respective categorical values.

\paragraph{Discrete and continuous variables}
Ordinal categorical variables possess an intrinsic ordering among their categories. For these variables, we map their original values to nonnegative integers and subsequently treat them as discrete features.
We do not differentiate between discrete and continuous features during the preprocessing and training stages. For all variables, we apply min-max scaling and normalize them to the range $[0, 1]$. This transformation ensures that all features, including one-hot encoded categorical variables, are uniformly scaled within the same interval.
The final dense layer of the generator network remain unmodified, with no additional activation function applied to either continuous or discrete features.
For generated variables corresponding to discrete columns, we subsequently round them to the nearest integer value.

\subsubsection{Generator Neural Network}\label{generator}
In \potnet, the generator neural network serves as a functional approximator that transforms low-dimensional noise into the target distribution. The generator consists of $L$ dense hidden layers with each dense layer $\ell$ comprising $h_\ell$ units. 
Following convention, we adopt ReLU activation function following each hidden layer, with a crucial exception at the final layer to preserve the full spectrum of generated values.
As previously described, softmax activation is applied on the output columns corresponding to categorical variable to yield probability distributions over their respective categories.
To enhance model stability and robustness, each nonlinear activation layer is followed by batch normalization and dropout layers.
Batch normalization standardizes activated neurons to zero mean and unit variance, effectively reducing internal covariate shift \citep{ioffe2015batch}, while
dropout randomly omits nodes and their connections during the training stage to reduce overfitting \citep{srivastava2014dropout}.

\subsubsection{Optimization Procedure}\label{optim}
We employ empirical risk minimization with the \mpw loss to train the generator network. Specifically, we aim to find the parameter $\hat{\theta}\in\Theta$ that solves the following minimization problem (recall the notations in Section \ref{Sec:prelim}):
\begin{equation}\label{theta_es}    \hat{\theta}:=\arg\min_{\theta\in\Theta} \mathcal{D}(\PX,P_{\theta}).
\end{equation}
We adopt a minibatch-based optimization procedure \citep{salimans2016improved, fatras2021minibatch} 
to make the Wasserstein minimization computationally feasible. 
This approach involves optimizing the expectation of the \mpw loss over minibatches using stochastic gradient updates. Each minibatch contains $m$ elements, except possibly the last one, which may have fewer if $n$ is not divisible by $m$. Commonly used batch sizes in relevant applications range from 32 to 512 \citep{arjovsky2017wasserstein, gulrajani2017improved}. 
Without loss of generality, we assume that $n$ is divisible by $m$ and each minibatch has size $m$. Specifically, we split the index set $[n]$ into $B=n\slash m$ blocks $\mathcal{I}_b:=\{i\in [n]:(b-1)m+1\leq i\leq bm\}$ for each $b\in[B]$. In each iteration $t\in [T]$ (where $T$ is the total number of iterations), we sample a uniformly random permutation $\pi_t\in\mathbb{S}_n$ and form minibatches of real observations $\mathcal{B}_X^{(t),b}:=\{X^{(\pi_t(i))}:i\in\mathcal{I}_b\}$ for each $b\in [B]$. Additionally, for each $b\in [B]$, we sample $\{Z^{(t),i}\}_{i\in\mathcal{I}_b}$ i.i.d. from the noise distribution $P_Z(\cdot)$ and form a minibatch of latent variables $\mathcal{B}_Z^{(t),b}:=\{Z^{(t),i}:i\in\mathcal{I}_b\}$. For any $b\in[B]$ and $\theta\in\Theta$, we denote by $\widehat{P}_X^{(t),b}$ the empirical distribution formed by $\mathcal{B}_X^{(t),b}$, and by $\widehat{P}_{G_{\theta}(Z)}^{(t),b}$ the empirical distribution formed by $\{G_{\theta}(Z^{(t),i}):i\in\mathcal{I}_b\}$. For each $b\in [B]$, the objective function for the $b$th minibatch in iteration $t$ is given by
\begin{align*}
    \mathcal{L}^{(t),b}(\theta):=\mathcal{D}(\widehat{P}_X^{(t),b},\widehat{P}_{G_{\theta}(Z)}^{(t),b})=W_1\left(\widehat{P}_X^{(t),b},\widehat{P}_{G_{\theta}(Z)}^{(t),b} \right)+\sum_{j=1}^d\lambda_jW_1\left((p_j)_{*}\widehat{P}_X^{(t),b}, (p_j)_{*}\widehat{P}_{G_{\theta}(Z)}^{(t),b}\right),
\end{align*}
for all $\theta\in\Theta$, where $(p_j)_{*}$ is defined as in Definition \ref{def_marginal}.

We employ the AdamW optimizer \citep{loshchilov2019decoupled}, an Adam variant that combines adaptive learning rates with enhanced weight decay regularization. To modulate learning rates, we utilize the cosine annealing scheduler. Detailed update rules are provided in Appendix E.

\subsubsection{Full Algorithmic Procedure}
The pseudo-code for \potnet is provided in Algorithm \ref{alg:POTNet}.

\begin{algorithm}[h!]
 \SetAlgoLined
\caption{Training Procedure for \potnet}
\label{alg:POTNet}
\KwIn{Data $\pmb{X}=(X^{(1)},\cdots,X^{(n)})$, latent dimension $d$, batch size $m$, regularization parameter $\pmb{\lambda} = (\lambda_1, \dots, \lambda_d)$.}
\KwOut{Optimized parameters $\theta$.}

\For{ iteration $t = 1, \dots, T$ }{
     \tcp*[h]{Permute indices}\; 
     $\pi_t: [n]\to[n] \in \mathbb{S}_n$\;
     \vspace{0.1cm}
    \For{$b = 1, \dots, \ceil{n/m}$}{
        Set $\mathcal{I}_b\gets\{i\in [n]:(b-1)m+1\leq i\leq bm\}$\; 
        Form minibatch of real observations $\mathcal{B}_X^{(t),b}\gets\{X^{(\pi_t(i))}:i\in\mathcal{I}_b\}$\; 
        Sample latent variables $\{Z^{(t),i}\}_{i\in \mathcal{I}_b}\sim P_Z(\cdot)$ and form minibatch $\mathcal{B}_Z^{(t),b}\gets\{Z^{(t),i}:i\in\mathcal{I}_b\}$\; 
        \vspace{0.1cm}
        \tcp*[h]{Compute gradient}\;
        $G_{\theta} \gets \nabla_\theta \left[ W_1\left(\widehat{P}_X^{(t),b},\widehat{P}_{G_{\theta}(Z)}^{(t),b} \right)+\sum_{j=1}^d\lambda_jW_1\left((p_j)_{*}\widehat{P}_X^{(t),b}, (p_j)_{*}\widehat{P}_{G_{\theta}(Z)}^{(t),b}\right)\right]$\;
        \vspace{0.1cm}
        \tcp*[h]{Update parameters}\;
         $\theta \gets \theta - \text{AdamW}(\theta, G_\theta)$\; 
    }
}
\end{algorithm}

\subsection{Improved Mode Retention and Diversity}
Traditional generative models, such as WGAN and those employing joint optimal transport loss, often struggle to accurately capture mode weights (Figure \ref{fig:gmm-bivariate_contours}) and tail behavior (Figure \ref{fig:mnist-ffnn}) in high dimensions.
The root cause of these pernicious behaviors lies in the curse of dimensionality affecting the Wasserstein distance: as dimensionality increases, the $W_1$ distance becomes overly sensitive to small perturbations and converges slowly at a rate of $n^{-1/d}$, where $d$ is the dimension. 

In contrast, the marginal penalty in \potnet effectively improves mode retention and sample diversity, as demonstrated in Theorems \ref{Thm_GMM}-\ref{Thm_T} and Corollary \ref{Cor1.1}. 
On a high-level, 
\potnet leverages the rapid convergence of marginal distributions, which occurs at a rapid rate of $n^{-1\slash 2}$. This faster convergence enables the marginal penalty to guide the generative distribution towards better alignment with the true data-generating distribution effectively.

Although mode collapse is a ubiquitously observed phenomenon, the term is often applied broadly to describe various undesired alterations of the target distribution. In the following discussion, we provide a rigorous characterization of two distinct types of mode collapse commonly occurring in practice. 
These two regimes will be referenced throughout subsequent sections of this article including theoretical analysis in Section \ref{Sec:theory} and experimental evaluation in Section \ref{Sec:sim}. 

\begin{defn}[Type I Mode Collapse]
Type I mode collapse occurs when the generative distribution $P_{\hat{\theta}}$ either fails to capture one or more modes of the data-generating distribution $P_X$, or assigns significantly different weights to the modes compared to $P_X$. In heterogeneous populations, failure to capture mixture components or inaccurate allocation of component weights can result in significant misrepresentation of subpopulations within generated samples.
\end{defn}

\begin{defn}[Type II Mode Collapse]
Type II mode collapse occurs when the generative distribution $P_{\hat{\theta}}$ fails to adequately represent the tail behavior of the data-generating distribution $P_X$. 
This results in the potential omission of important extremal information, such as in the case of support shrinkage.
\end{defn}

\section{Theoretical Analysis}\label{Sec:theory}
In this section, we investigate the theoretical properties of the \mpw loss and \potnet. 
In Section \ref{Sect.4.1}, we establish a non-asymptotic generalization bound for the \mpw loss as well as convergence rates of the generative distribution learned by \potnet. Then in Section \ref{Sect.4.2}, we perform a theoretical analysis on how \potnet effectively attenuates mode collapse. 

\subsection{Generalization Bound and Convergence Rates}\label{Sect.4.1}
We revisit the setups outlined in Sections \ref{Sec:prelim} and \ref{Sec:method}. Recall that the observed data $\{X^{(i)}\}_{i=1}^n$ (where $X^{(i)}\in\mathbb{R}^d$) are generated i.i.d. from the true data-generating distribution $P_X(\cdot)$, and that \potnet aims to find the parameter $\hat{\theta}\in\Theta$ that solves the minimization problem \eqref{theta_es}. For any $j\in [d]$ and $\theta\in\Theta$, we denote $P_{X,j}:=(p_j)_{*}P_X$, $P_{\theta,j}:=(p_j)_{*}P_\theta$, and $\PXj:=(p_j)_{*}\PX$.

In Theorem \ref{Theorem1.1} below, we establish a generalization bound for the \mpw loss where we bound the test error $\mathcal{D}(P_X,P_{\theta})$ of \potnet in terms of the training error $\mathcal{D}(\PX,P_{\theta})$.

\begin{thm}\label{Theorem1.1}
Assume that the support of $P_X$ is contained in $B_M=\{\mathbf{x}\in\mathbb{R}^d:\|\mathbf{x}\|_2\leq M\}$, where $M$ is a positive deterministic constant. Then for any $\delta\in (0,1\slash 2)$, with probabiliy at least $1-2\delta$, 
\begin{equation*}
    \mathcal{D}(P_X,P_{\theta})\leq \mathcal{D}(\PX,P_{\theta}) + 2\hat{R}_n(\mathcal{F}_{\pmb{\lambda}})+3M(1+\|\pmb{\lambda}\|_2)\sqrt{\frac{2\log(\delta^{-1})}{n}}, \quad\text{ for all }\theta\in\Theta,
\end{equation*}
where $\pmb{\lambda}$ is defined as in Definition \ref{def:mpw-dist}, $\mathcal{F}_{\pmb{\lambda}}$ is the set of functions $\phi:\mathbb{R}^d\rightarrow\mathbb{R}$ such that $\phi=f+\sum_{j=1}^d\lambda_j f_j\circ p_j$ for some $f\in\mathrm{Lip}_{1,d}$ and $f_1,\cdots,f_d\in\mathrm{Lip}_{1,1}$, and
\begin{equation*}
    \hat{R}_n(\mathcal{F}_{\pmb{\lambda}}):=n^{-1}\mathbb{E}_{\pmb{\epsilon}}\Big[\sup_{\phi\in\mathcal{F}_{\pmb{\lambda}}}\Big\{\sum_{i=1}^n\epsilon_i \big(\phi(X^{(i)})-\phi(0)\big)\Big\}\Big],
\end{equation*}
in which
$\pmb{\epsilon}:=(\epsilon_i)_{i=1}^n$ and $\{\epsilon_i\}_{i=1}^n$ are i.i.d. Rademacher random variables. 
\end{thm}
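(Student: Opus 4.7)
The plan is to apply Kantorovich-Rubinstein duality to turn the \mpw distance into a supremum over $\mathcal{F}_{\pmb{\lambda}}$, then combine McDiarmid's bounded-differences inequality with standard Rademacher-complexity symmetrization. The structure is a textbook uniform-convergence argument; the only piece that requires care is identifying the correct test-function class.

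First, I would establish the dual identity $\mathcal{D}(\mu,\nu) = \sup_{\phi\in\mathcal{F}_{\pmb{\lambda}}}\{\mathbb{E}_\mu[\phi] - \mathbb{E}_\nu[\phi]\}$. The joint term equals $\sup_{f\in\mathrm{Lip}_{1,d}}\{\mathbb{E}_\mu[f] - \mathbb{E}_\nu[f]\}$ by the Kantorovich-Rubinstein duality recalled in Section \ref{Sec:prelim}. Each marginal term equals $\sup_{f_j\in\mathrm{Lip}_{1,1}}\{\mathbb{E}_\mu[f_j\circ p_j] - \mathbb{E}_\nu[f_j\circ p_j]\}$ via the pushforward change-of-variables identity $\int f_j\,d(p_j)_{*}\mu = \int f_j\circ p_j\,d\mu$. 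Because $\lambda_j\ge 0$ and the suprema are taken over independent arguments $f,f_1,\dots,f_d$, their sum equals the supremum of the sum over the product class, which is exactly $\mathcal{F}_{\pmb{\lambda}}$.

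Second, using $\sup A - \sup B \le \sup(A-B)$ yields the uniform-in-$\theta$ reduction $\mathcal{D}(P_X,P_\theta) - \mathcal{D}(\PX,P_\theta) \le \Psi(\pmb{X}) := \sup_{\phi\in\mathcal{F}_{\pmb{\lambda}}}\{\mathbb{E}_{P_X}[\phi] - \mathbb{E}_{\PX}[\phi]\}$, whose right-hand side is free of $\theta$. Subtracting $\phi(0)$ from $\phi$ leaves $\Psi$ unchanged, and for $x\in B_M$ the Lipschitz properties of $f,f_1,\dots,f_d$ combined with Cauchy-Schwarz ($\sum_{j=1}^d\lambda_j|x_j| \le \|\pmb{\lambda}\|_2\|x\|_2$) give the uniform envelope $|\phi(x)-\phi(0)| \le M(1+\|\pmb{\lambda}\|_2)$.

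Third, I would apply McDiarmid's inequality twice. Replacing any single $X^{(i)}\in B_M$ alters both $\Psi(\pmb{X})$ and $\hat{R}_n(\mathcal{F}_{\pmb{\lambda}})$ by at most $2M(1+\|\pmb{\lambda}\|_2)/n$, so each concentrates around its mean at the rate $M(1+\|\pmb{\lambda}\|_2)\sqrt{2\log(\delta^{-1})/n}$ with probability at least $1-\delta$. Standard ghost-sample symmetrization, applied to the centered class $\{\phi-\phi(0):\phi\in\mathcal{F}_{\pmb{\lambda}}\}$, gives $\mathbb{E}[\Psi(\pmb{X})] \le 2\,\mathbb{E}[\hat{R}_n(\mathcal{F}_{\pmb{\lambda}})]$. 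Chaining the three inequalities via a union bound produces the theorem, with the constant $3=1+2$ arising from one concentration bound on $\Psi$, the factor $2$ from symmetrization, and one concentration bound on $\hat{R}_n$, at total probability at least $1-2\delta$.

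The only mildly nontrivial step is the dual representation in the first paragraph: one must verify that three independently indexed Wasserstein-1 suprema amalgamate into a single supremum over $\mathcal{F}_{\pmb{\lambda}}$, which rests on the pushforward identity and the independence of the Lipschitz components defining $\phi$. Beyond this, the argument is routine empirical-process machinery, and I anticipate no substantive obstacles.
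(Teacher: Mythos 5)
Your proposal follows essentially the same route as the paper's proof: establish the dual representation of the \mpw distance over the class $\mathcal{F}_{\pmb{\lambda}}$ (the paper's Theorem B.2), reduce via $\sup A - \sup B \le \sup(A-B)$ to a $\theta$-free empirical-process supremum, bound it via symmetrization by the Rademacher complexity, and apply McDiarmid twice with a union bound to obtain the factor $3$ and the $1-2\delta$ probability. Your bounded-differences constant $2M(1+\|\pmb{\lambda}\|_2)/n$ and the accounting $3 = 1 + 2\cdot 1$ match the paper exactly.
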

\begin{rmk}
Note that $\hat{R}_n(\mathcal{F}_{\pmb{\lambda}})$ is the empirical Rademacher complexity of the centered function class $\{\phi(\cdot)-\phi(0):\phi\in \mathcal{F}_{\pmb{\lambda}}\}$. The centering by $\phi(0)$ in the expression of $\hat{R}_n(\mathcal{F}_{\pmb{\lambda}})$ ensures the finiteness of $\hat{R}_n(\mathcal{F}_{\pmb{\lambda}})$. $\mathcal{F}_{\pmb{\lambda}}$ is related to the dual representation of the \mpw distance (see Appendix B).  
\end{rmk}

Next, we establish quantitative convergence rates of the generative distribution $P_{\hat{\theta}}$ learned by \potnet (see \eqref{theta_es}) to the true data-generating distribution $P_X$ in terms of the $1$-Wasserstein distance for both the joint and marginal distributions.

\begin{thm}\label{Theorem1.2}
Assume that the support of $P_X$ is contained in $B_M=\{\mathbf{x}\in\mathbb{R}^d:\|\mathbf{x}\|_2\leq M\}$, where $M$ is a positive deterministic constant. Let $\hat{\theta}$ be defined as in \eqref{theta_es}. Then there exists a positive deterministic constant $C$ that only depends on $M$, such that
\begin{equation}\label{E2.7}
    \mathbb{E}[W_1(P_X,P_{\hat{\theta}})] \leq \mathbb{E}\big[\inf_{\theta\in\Theta}\{\mathcal{D}(\PX,P_{\theta})\}\big]+C\sqrt{d}\cdot 
    \begin{cases}
        n^{-1\slash 2} & \text{ if } d=1\\
        (\log{n}\slash n)^{1\slash 2} & \text{ if }d=2\\
        n^{-1\slash d} & \text{ if }d\geq 3
    \end{cases},
\end{equation}
and for each $j\in [d]$, 
\begin{equation}\label{E2.8}
    \mathbb{E}[ W_1(P_{X,j},P_{\hat{\theta},j})] \leq \lambda_j^{-1}\mathbb{E}\big[\inf_{\theta\in\Theta}\{\mathcal{D}(\PX,P_{\theta})\}\big]+C n^{-1\slash 2}.
\end{equation}
\end{thm}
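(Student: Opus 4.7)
The plan is to derive both bounds from two triangle inequalities, the defining optimality of $\hat{\theta}$, and the classical non-asymptotic convergence rates of the empirical $1$-Wasserstein distance due to \citet{fournier2015rate}. The structural observation I would exploit throughout is that every summand in $\mathcal{D}(\cdot,\cdot)$ is non-negative, so both the joint $W_1$ term and each $\lambda_j W_1$ marginal term are individually dominated by $\mathcal{D}$.

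For \eqref{E2.7}, I would apply the triangle inequality
$$W_1(P_X,P_{\hat{\theta}}) \leq W_1(P_X,\PX) + W_1(\PX,P_{\hat{\theta}}).$$
The second term satisfies $W_1(\PX,P_{\hat{\theta}}) \leq \mathcal{D}(\PX,P_{\hat{\theta}})$ by non-negativity of the marginal penalties, and by construction $\mathcal{D}(\PX,P_{\hat{\theta}}) = \inf_{\theta\in\Theta}\mathcal{D}(\PX,P_\theta)$. Taking expectations reduces the problem to bounding $\mathbb{E}[W_1(P_X,\PX)]$. Since $P_X$ is supported in $B_M$, the non-asymptotic estimate of \citet{fournier2015rate} yields precisely the trichotomy $n^{-1/2}$, $\sqrt{\log n / n}$, $n^{-1/d}$ in dimensions $d=1$, $d=2$, and $d\geq 3$, with a constant that can be absorbed into $C\sqrt{d}$.

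For \eqref{E2.8}, the key observation is that $\lambda_j W_1(\PXj,P_{\hat{\theta},j})$ is itself a single non-negative summand of $\mathcal{D}(\PX,P_{\hat{\theta}})$ and is therefore bounded by it. Combined with the triangle inequality applied to the $j$th marginal, and after dividing by $\lambda_j$, one obtains
$$W_1(P_{X,j},P_{\hat{\theta},j}) \leq W_1(P_{X,j},\PXj) + \lambda_j^{-1}\mathcal{D}(\PX,P_{\hat{\theta}}).$$
Taking expectations and using the defining property of $\hat{\theta}$, the remaining term $\mathbb{E}[W_1(P_{X,j},\PXj)]$ is the expected univariate empirical Wasserstein error for a distribution supported in $[-M,M]$, which is of order $n^{-1/2}$ with constant controlled by $M$.

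The main obstacle is essentially bookkeeping: verifying that the constants produced by \citet{fournier2015rate} for bounded $d$-dimensional supports aggregate into the advertised $C\sqrt{d}$ dependence, and that a single $C$ can serve both \eqref{E2.7} and \eqref{E2.8}. There is no additional stochastic machinery beyond standard empirical Wasserstein estimates; the \mpw structure enters only through the elementary non-negativity and triangle-inequality manipulations above. This argument also makes transparent the theoretical value of the marginal penalty: \eqref{E2.8} decays at the parametric $n^{-1/2}$ rate independent of $d$, in sharp contrast to the dimension-dependent joint rate in \eqref{E2.7}.
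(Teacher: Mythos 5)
Your proposal is correct and follows essentially the same route as the paper's proof: bound $\mathbb{E}[W_1(\PX,P_X)]$ (resp.\ $\mathbb{E}[W_1(\PXj,P_{X,j})]$) by the classical empirical Wasserstein rates for compactly supported distributions, use the triangle inequality to shift to $\PX$, and observe that $W_1(\PX,P_{\hat\theta})$ and $\lambda_j W_1(\PXj,P_{\hat\theta,j})$ are each non-negative summands of $\mathcal{D}(\PX,P_{\hat\theta})\leq\inf_\theta\mathcal{D}(\PX,P_\theta)$. The only cosmetic difference is that the paper delegates the rate $C\sqrt{d}\,n^{-1/d}$ (and its low-dimensional variants) to a standalone proposition citing the literature, whereas you cite Fournier--Guillin directly and flag the $C\sqrt{d}$ constant as a bookkeeping check.
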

\begin{rmk}
The term $\inf\limits_{\theta\in\Theta}\{\mathcal{D}(\PX,P_{\theta})\}$ in (\ref{E2.7}) and (\ref{E2.8}) represents the training error which is related to how well the generator network can approximate the empirical measure $\PX$ of observed data. 
Notably, due to the exact optimal transport solver used in \potnet, our upper bounds do not involve an error term arising from the critic network (c.f. \cite[Corollary 16]{liang2021well}).  
\end{rmk}
\begin{rmk}
Note that the term $Cn^{-1\slash 2}$ in the marginal error rate (\ref{E2.8}) does not suffer from the curse of dimensionality. Moreover, as we increase the $j$th marginal penalty (i.e., increasing $\lambda_j$), the contribution of the training error to the $j$th marginal error rate decreases. This demonstrates the marginal regularizing effect of \potnet.
\end{rmk}

\subsection{Theoretical Analysis of Type I and II Mode Collapse}\label{Sect.4.2}

In this subsection, we present a theoretical analysis elucidating how \potnet addresses both types of mode collapse through marginal regularization. 
These theoretical insights are validated by our empirical evaluations in Sections \ref{Sec:sim} and \ref{Sec:app}, which demonstrate that \potnet effectively mitigates mode collapse.

We note that the minimization problem in (\ref{theta_es}) is equivalent to the following constrained minimization problem:
\begin{align}\label{constrained_min}
     \min_{\theta\in\Theta}  W_1(\PX,P_{\theta}),\quad
    \text{ subject to } W_1(\PXj,P_{\theta,j})\leq \epsilon_j \text{ for all }j\in [d],
\end{align}
where $\{\epsilon_j\}_{j=1}^d$ ($\epsilon_j\geq 0$ for each $j\in [d]$) are determined by $\{\lambda_j\}_{j=1}^d$. Throughout the rest of this subsection, we assume that $\hat{\theta}$ is the solution to \eqref{constrained_min}, and that the data-generating distribution $P_X$ admits a density $p_X(\cdot)$ with respect to the Lebesgue measure on $\mathbb{R}^d$.

\paragraph{Analysis of type I mode collapse} 
In the following, we show that the generative distribution $P_{\hat{\theta}}$ learned by \potnet accurately recovers cluster weights in finite mixture models. We consider application to data $\{X^{(i)}\}_{i=1}^n$ generated from a $d$-dimensional mixture model with two components. Specifically, we assume $p_X(\mathbf{x})=\sum_{k=1}^2 w_k \rho(\mathbf{x}-\pmb{\mu}_k)$ for all $\mathbf{x}\in\mathbb{R}^d$, where $\rho(\cdot)$ is a fixed probability density function on $\mathbb{R}^d$, $w_1\in [0,1]$, $w_2=1-w_1$, and $\pmb{\mu}_1,\pmb{\mu}_2\in\mathbb{R}^d$. Without loss of generality, we assume that $\pmb{\mu}_1=(\Delta,\cdots,\Delta)$ and $\pmb{\mu}_2=(-\Delta,\cdots,-\Delta)$, where $\Delta>0$. We note that results similar to Theorem \ref{Thm_GMM} below can be established for mixture models with $K$ components for any finite $K\in\mathbb{N}_{+}$.   

As is commonly assumed in the literature on clustering for finite mixture models (see, e.g., \cite{loffler2021optimality}), we assume that $\pmb{\mu}_1$ and $\pmb{\mu}_2$ are well-separated. Specifically, we assume that $\Delta$ grows to infinity as the sample size $n$ increases. To examine type I mode collapse, for any $k\in [2]$ and $r>0$, we define
\begin{equation*}
\hat{w}_{k,r}:=P_{\hat{\theta}}\big(\big\{\mathbf{x}=(x_1,\cdots,x_d)\in\mathbb{R}^d:\max_{j\in[d]}\{|x_j-\mu_{k,j}|\}\leq r\big\}\big),
\end{equation*}
where $\mu_{k,j}$ is the $j$th coordinate of $\pmb{\mu}_k$. Assuming that $\int_{\mathbb{R}^d\backslash [-r,r]^d}\rho(\mathbf{x})d\mathbf{x}\rightarrow 0$ and $r\slash\Delta\rightarrow 0$ as $n\rightarrow\infty$, for each $k\in[2]$, $\hat{w}_{k,r}$ can be interpreted as the mass assigned by the learned generative distribution $P_{\hat{\theta}}$ to the $k$th mode. We denote by $\hat{\mathcal{R}}$ the minimum value of $W_1(\PX,P_{\theta})$ in the constrained minimization problem \eqref{constrained_min}, which represents the training error of the generator neural network. For any $j\in [d]$, we denote by $\rho_j(\cdot)$ the $j$th marginal density of $\rho(\cdot)$. We assume that for all $j\in [d]$ and $r\geq 0$, $\sup\limits_{x\in\mathbb{R}}\rho_j(x)\leq M$ and $\rho_j([-r,r]^c)\leq \psi(r)$, where $M\geq 0$ and $\psi(\cdot)$ is a function that maps $[0,\infty)$ into $[0,\infty)$.

The following result provides a non-asymptotic upper bound on the discrepancy between $\hat{w}_{k,r}$ and $w_k$ for all $k\in [2]$ and $r\in (0,\Delta)$.  

\begin{thm}\label{Thm_GMM}
Assume the preceding setups. Then for any $\delta\in (0,1)$, with probability at least $1-\delta$, for all $k\in [2]$ and $r\in (0,\Delta)$, we have
\begin{equation}
     \big|\hat{w}_{k,r}-w_k\big|\leq 2d\psi(r)+8d\sqrt{M\max_{j\in[d]}\{\epsilon_j\}}+5d\sqrt{\frac{\log(16 d\slash \delta)}{2n}}+\frac{\hat{\mathcal{R}}}{ 2(\Delta-r)}. 
\end{equation}
\end{thm}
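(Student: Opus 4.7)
My plan is to derive matching upper and lower bounds on $\hat{w}_{k,r}$, each invoking the joint constraint $W_1(\PX,P_{\hat{\theta}})\le \hat{\mathcal{R}}$ exactly once through a $1/(2(\Delta-r))$-Lipschitz test function, so the constant $\hat{\mathcal{R}}/(2(\Delta-r))$ appears only a single time in the final bound. Writing $\bar{A}:=A_{1,r}\cup A_{2,r}$ and $B_{k,r,j}:=[\mu_{k,j}-r,\mu_{k,j}+r]$, I note that $A_{k,r}=\bigcap_j\{x_j\in B_{k,r,j}\}$ and $B_{k,r,j}\cap B_{3-k,r,j}=\emptyset$ for $r<\Delta$. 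A direct mixture computation combined with the coordinate union bound and the tail hypothesis $\rho_j([-r,r]^c)\le\psi(r)$ yields $|P_X(A_{k,r})-w_k|\le(d+1)\psi(r)$ and $P_X(\bar{A}^c)\le d\psi(r)$ (using $\psi(2\Delta-r)\le\psi(r)$). Hoeffding--type concentration with a union bound over the $O(d)$ axis-aligned slabs appearing below accounts for the $5d\sqrt{\log(16d/\delta)/(2n)}$ term uniformly over the finite collection of events.

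For the upper bound I use the test function
\[
f(\mathbf{x}):=\min\!\Big\{1,\max\!\Big\{0,\tfrac{\|\mathbf{x}-\pmb{\mu}_{3-k}\|_\infty-r}{2(\Delta-r)}\Big\}\Big\}.
\]
Since $\|\cdot\|_\infty\le\|\cdot\|_2$ and $\|\pmb{\mu}_k-\pmb{\mu}_{3-k}\|_\infty=2\Delta$, a short triangle-inequality check shows $f$ is $1/(2(\Delta-r))$-Lipschitz in the Euclidean metric, equals $1$ on $A_{k,r}$, and equals $0$ on $A_{3-k,r}$. Kantorovich--Rubinstein duality then gives $\hat{w}_{k,r}\le\int f\,dP_{\hat{\theta}}\le\int f\,d\PX+\hat{\mathcal{R}}/(2(\Delta-r))\le\PX(A_{k,r})+\PX(\bar{A}^c)+\hat{\mathcal{R}}/(2(\Delta-r))$, and substituting the analytic and concentration estimates closes this direction.

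For the lower bound, the crucial observation is that the disjointness $B_{k,r,j}\cap B_{3-k,r,j}=\emptyset$ forces any point contributing to the marginal event $\{x_j\in B_{k,r,j}\}$ but lying outside $A_{k,r}$ to lie in $\bar{A}^c$ (it cannot lie in $A_{3-k,r}$), yielding
\[
P_{\hat{\theta},j}(B_{k,r,j}) = \hat{w}_{k,r}+P_{\hat{\theta}}\!\big(\{x_j\in B_{k,r,j}\}\cap\bar{A}^c\big)\le \hat{w}_{k,r}+P_{\hat{\theta}}(\bar{A}^c).
\]
I lower-bound $P_{\hat{\theta},j}(B_{k,r,j})$ by a one-dimensional smoothing argument against $W_1(\PXj,P_{\hat{\theta},j})\le\epsilon_j$: approximate $\mathbf{1}_{B_{k,r,j}}$ from below by a $1/\eta$-Lipschitz trapezoid, bound the shell mass by $2M\eta$ using $\sup\rho_j\le M$, and optimize $\eta\asymp\sqrt{\epsilon_j/M}$ to obtain $P_{\hat{\theta},j}(B_{k,r,j})\ge w_k-\psi(r)-2\sqrt{2M\epsilon_j}-O(\sqrt{\log(d/\delta)/n})$. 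To upper-bound $P_{\hat{\theta}}(\bar{A}^c)$, I decompose $\bar{A}^c\subseteq\bigcup_{j=1}^d\{x_j\in B_{1,r,j}^c\cap B_{2,r,j}^c\}\cup\{\text{misaligned}\}$, where \emph{misaligned} means coordinates all lie near some mode but not the same mode across $j$. The union is handled coordinate-wise by the same smoothing argument; for the misaligned set, a direct distance calculation shows any such point lies at Euclidean distance at least $2(\Delta-r)$ from $\bar{A}$ (any coordinate whose sign pattern must flip costs exactly $2(\Delta-r)$ in $\ell_2$), so a second $1/(2(\Delta-r))$-Lipschitz function equal to $1$ on the misaligned set and $0$ on $\bar{A}$ yields $P_{\hat{\theta}}(\text{misaligned})\le\PX(\bar{A}^c)+\hat{\mathcal{R}}/(2(\Delta-r))$ via Kantorovich--Rubinstein.

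The main obstacle is avoiding a doubling of the $\hat{\mathcal{R}}/(2(\Delta-r))$ term in the lower-bound direction. This is exactly the purpose of routing through the marginal identity $P_{\hat{\theta},j}(B_{k,r,j})\le\hat{w}_{k,r}+P_{\hat{\theta}}(\bar{A}^c)$ rather than through a symmetric upper bound on $P_{\hat{\theta}}(A_{3-k,r})$: the joint training error $\hat{\mathcal{R}}$ enters the lower bound only through the misaligned-region argument, and hence appears exactly once. The remaining work is clerical---optimizing the per-coordinate smoothing parameter to extract the $\sqrt{M\epsilon_j}$ dependence, applying Dvoretzky--Kiefer--Wolfowitz concentration with a union bound over the $O(d)$ relevant events to produce the $\log(16d/\delta)$ factor, and summing per-coordinate contributions to obtain the overall $d$-factors in the stated bound.
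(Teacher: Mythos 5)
Your proof proposal is essentially the same argument as the paper's, expressed in the Kantorovich–Rubinstein dual rather than primal form. The paper works with the optimal coupling $\Upsilon$ of $P_{\hat\theta}$ and $\PX$ and observes that the mass $\Upsilon$ places on pairs of regions at Euclidean distance $\geq 2(\Delta-r)$ is at most $\hat{\mathcal{R}}/(2(\Delta-r))$; you instead realize the same bound through explicit $1/(2(\Delta-r))$-Lipschitz test functions. Your one-dimensional ``trapezoid'' smoothing is exactly the paper's Lemma B.2, and the DKW/Hoeffding machinery and the use of marginals only in the lower-bound direction are identical. The one genuinely distinct organizational choice is your route through the identity $P_{\hat\theta,j}(B_{k,r,j})=\hat w_{k,r}+P_{\hat\theta}(\{x_j\in B_{k,r,j}\}\cap\bar A^c)$; the paper instead compares $P_{\hat\theta}(A_{k,r})$ to $\PX(A_{k,r})$ directly via the coupling, then to $P_X(A_{k,r})$. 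Both close the argument with the same ingredients.

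One small but real issue with the proposal as written: your lower bound pays twice for coordinate $j$. You first lower-bound $P_{\hat\theta,j}(B_{k,r,j})$ (cost $\psi(r)+4\sqrt{M\epsilon_j}+\cdots$), and then upper-bound $P_{\hat\theta}(\bar A^c)$ by the union over \emph{all} $d$ coordinates plus the misaligned piece, which yields $(2d+1)\psi(r)$ and $(8d+4)\sqrt{M\max_j\epsilon_j}$ rather than the stated $2d\psi(r)$ and $8d\sqrt{M\max_j\epsilon_j}$. The fix is contained in your own ``crucial observation'': since $\{x_j\in B_{k,r,j}\}$ is disjoint from $\{x_j\in B_{1,r,j}^c\cap B_{2,r,j}^c\}$, the set $\{x_j\in B_{k,r,j}\}\cap\bar A^c$ is covered by $\bigcup_{j'\neq j}\{x_{j'}\in B_{1,r,j'}^c\cap B_{2,r,j'}^c\}\cup\{\text{misaligned}\}$, so you should sum only over $j'\neq j$, which restores the $2d$ and $8d$ coefficients. (Similarly, your analytic estimate $|P_X(A_{k,r})-w_k|\leq(d+1)\psi(r)$ should be $\leq d\psi(r)$: the cross-mode contribution is at most $\rho_1([-(2\Delta-r),2\Delta-r]^c)\leq\psi(r)\leq d\psi(r)$, and the same-mode deficit is at most $d\psi(r)$.) With that refinement the constants match and the argument is sound.
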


Theorem \ref{Thm_GMM} leads to
the following corollary on two-component Gaussian mixture models.

\begin{cor}\label{Cor1.1}
Let $\rho(\cdot)$ be the density of $\mathcal{N}(0,\sigma^2\mathbf{I}_d)$, where $\sigma>0$. Then for any $\delta\in (0,1)$, with probability at least $1-\delta$, for all $k\in [2]$ and $r\in (0,\Delta)$, we have
\begin{align}\label{E4.25}
    \big|\hat{w}_{k,r}-w_{k}\big|\leq4d\exp\Big(-\frac{r^2}{2\sigma^2}\Big)+8 d\sqrt{\frac{\max_{j\in[d]}\{\epsilon_j\}}{\sigma}}+5d\sqrt{\frac{\log(16d\slash \delta)}{2n}}+\frac{\hat{\mathcal{R}}}{ 2(\Delta-r)}.
\end{align}
\end{cor}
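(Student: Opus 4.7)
The plan is to apply Theorem \ref{Thm_GMM} directly with the explicit choices of $M$ and $\psi(\cdot)$ dictated by the Gaussian density. Since $\rho(\cdot)$ is the density of $\mathcal{N}(0,\sigma^2\mathbf{I}_d)$, each marginal density $\rho_j(\cdot)$ is the univariate $\mathcal{N}(0,\sigma^2)$ density, so the two verifications needed to invoke the theorem are (i) a uniform upper bound on $\rho_j(\cdot)$, and (ii) a tail bound on $\rho_j([-r,r]^c)$.

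For (i), I would simply note that $\sup_{x\in\mathbb{R}}\rho_j(x)=1/(\sqrt{2\pi}\sigma)\leq 1/\sigma$, so we may take $M=1/\sigma$. For (ii), I would use the standard Gaussian (Chernoff) tail inequality: for $X\sim\mathcal{N}(0,\sigma^2)$,
\begin{equation*}
\rho_j([-r,r]^c)=\mathbb{P}(|X|>r)\leq 2\exp\bigl(-r^2/(2\sigma^2)\bigr),
\end{equation*}
so $\psi(r):=2\exp(-r^2/(2\sigma^2))$ satisfies the hypothesis of Theorem \ref{Thm_GMM} for all $r\geq 0$.

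Substituting these choices into the bound of Theorem \ref{Thm_GMM}, the term $2d\psi(r)$ becomes $4d\exp(-r^2/(2\sigma^2))$, and the term $8d\sqrt{M\max_{j\in[d]}\{\epsilon_j\}}$ becomes $8d\sqrt{\max_{j\in[d]}\{\epsilon_j\}/(\sqrt{2\pi}\sigma)}\leq 8d\sqrt{\max_{j\in[d]}\{\epsilon_j\}/\sigma}$, which is exactly the form appearing in \eqref{E4.25}. The remaining two terms $5d\sqrt{\log(16d/\delta)/(2n)}$ and $\hat{\mathcal{R}}/(2(\Delta-r))$ transfer verbatim from Theorem \ref{Thm_GMM}.

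There is essentially no hard step here: the corollary is a plug-and-chug specialization of Theorem \ref{Thm_GMM}. The only minor point to be careful about is the constant bookkeeping when replacing $1/(\sqrt{2\pi}\sigma)$ by $1/\sigma$ in the radical (a monotone upper bound, hence harmless) and verifying that the Gaussian tail inequality used in (ii) holds uniformly in $r\geq 0$ without additional conditions. Both of these are standard facts, so the proof reduces to an essentially one-line substitution.
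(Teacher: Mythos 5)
Your proof is correct and takes essentially the same approach as the paper: choose $M=(2\pi\sigma^2)^{-1/2}$ (or its looser bound $1/\sigma$) and $\psi(r)=2\exp(-r^2/(2\sigma^2))$, then substitute into Theorem \ref{Thm_GMM} and use $(2\pi)^{-1/2}\leq 1$ to simplify the radical. The only quibble is a small exposition wobble where you announce $M=1/\sigma$ but then substitute $M=1/(\sqrt{2\pi}\sigma)$ before bounding; either choice is valid and gives the stated result.
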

\begin{rmk}
Assuming that $\sigma\sqrt{\log{d}}\ll r\ll \Delta$ as $n\rightarrow\infty$, for each $k\in[2]$, $\hat{w}_{k,r}$ can be interpreted as the mass assigned by $P_{\hat{\theta}}$ to the $k$th mode. Further assuming that $\max_{j\in[d]}\{\epsilon_j\}\ll \sigma d^{-2}$, $n\gg d^2\log{d}$, and $\hat{\mathcal{R}}\ll \Delta$ (note that these assumptions apply to a broad range of practical applications), it follows from \eqref{E4.25} that $\hat{w}_{k,r}$ gives a consistent approximation to $w_k$. This shows that the learned generative distribution $P_{\hat{\theta}}$ effectively recovers the correct weight of each mode and avoids type I mode collapse. The $n^{-1\slash 2}$ dependence on the sample size $n$ in the error term $5d\sqrt{\log(16d\slash \delta)\slash(2n)}$ in \eqref{E4.25} arises from the fast convergence rate of the marginal distribution $\PXj$ to $P_{X,j}$ for all $j\in [d]$.
\end{rmk}

\paragraph{Analysis of type II mode collapse}

In the following, we show that the generative distribution $P_{\hat{\theta}}$ learned by \potnet retains the tail decay exponent of the data-generating distribution $P_X$ under mild conditions, i.e, it can effectively mitigate type II mode collapse.

\begin{thm}\label{Thm_T}
Assume that $P_X(\{\mathbf{x}\in\mathbb{R}^d:\|\mathbf{x}\|_2\geq r\})\geq c_0\exp(-C_0 r^{\alpha})$ for sufficiently large $r$, where $c_0,C_0,\alpha$ are positive constants that only depend on $p_X(\cdot),d$. Then for any $\delta\in (0,1)$, with probability at least $1-\delta$, we have
\begin{equation}\label{E4.12}
     P_{\hat{\theta}}(\{\mathbf{x}\in\mathbb{R}^d:\|\mathbf{x}\|_2\geq r\})\geq c\exp(-C r^{\alpha})-\sqrt{\frac{2\log(4d\slash \delta)}{n}}-2\max_{j\in[d]}\{\epsilon_j\}
\end{equation}
for all sufficiently large $r$, where $c,C$ are positive constants that only depend on $p_X(\cdot),d$. 
\end{thm}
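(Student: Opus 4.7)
The plan is to reduce the $d$-dimensional tail bound to a one-dimensional bound along a carefully chosen coordinate, then exploit the marginal Wasserstein constraint from \eqref{constrained_min}. The starting observation is the elementary inclusion $\{\|\mathbf{x}\|_2\geq r\}\subseteq\bigcup_{j=1}^d\{|x_j|\geq r/\sqrt{d}\}$; combined with the union bound and the tail hypothesis on $P_X$, this produces an index $j^\star=j^\star(r)\in[d]$ with
\[
P_{X,j^\star}(\{|x|\geq r/\sqrt{d}\})\geq d^{-1}c_0\exp(-C_0 r^\alpha)
\]
for all sufficiently large $r$. Since conversely $\{|x_{j^\star}|\geq s\}\subseteq\{\|\mathbf{x}\|_2\geq s\}$, it suffices to transport this marginal tail mass from $P_{X,j^\star}$ through $\widehat{P}^n_{X,j^\star}$ to $P_{\hat\theta,j^\star}$ and then drop the subscript.

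The first transfer, from $P_{X,j}$ to $\widehat{P}^n_{X,j}$, is supplied by the Dvoretzky--Kiefer--Wolfowitz inequality applied coordinatewise with a union bound over $j\in[d]$: with probability at least $1-\delta$, $\widehat{P}^n_{X,j}(\{|x|\geq s\})\geq P_{X,j}(\{|x|\geq s\})-\sqrt{2\log(4d/\delta)/n}$ uniformly in both $s\in\mathbb{R}$ and $j\in[d]$. This yields the middle error term in \eqref{E4.12}, and the uniformity is what lets us legitimately insert the data-dependent index $j^\star(r)$. The second transfer, from $\widehat{P}^n_{X,j}$ to $P_{\hat\theta,j}$, uses the constraint $W_1(\widehat{P}^n_{X,j},P_{\hat\theta,j})\leq\epsilon_j$ from \eqref{constrained_min} via a coupling/Markov argument: if $\pi_j$ is an optimal $W_1$-coupling and $\eta>0$, then on the event $\{|x|\geq s,\,|y|<s-\eta\}$ we have $|x-y|>\eta$, so by Markov's inequality
\[
\widehat{P}^n_{X,j}(\{|x|\geq s\})\leq P_{\hat\theta,j}(\{|y|\geq s-\eta\})+\epsilon_j/\eta.
\]
Choosing $\eta=1/2$ produces the $2\max_j\epsilon_j$ correction in \eqref{E4.12}.

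Chaining the two transfers at $j=j^\star(r)$ and $s=r/\sqrt{d}$, and using the trivial inclusion $P_{\hat\theta}(\{\|\mathbf{x}\|_2\geq s-\eta\})\geq P_{\hat\theta,j^\star}(\{|x|\geq s-\eta\})$, yields a lower bound on $P_{\hat\theta}(\{\|\mathbf{x}\|_2\geq r/\sqrt{d}-1/2\})$ of the claimed form. The argument finishes by reparametrizing $\tilde r=r/\sqrt{d}-1/2$, equivalently $r=\sqrt{d}(\tilde r+1/2)$, and absorbing $d^{-1}c_0$ and $C_0d^{\alpha/2}$ into fresh constants $c,C$ depending only on $p_X$ and $d$; the bound $(\sqrt d(\tilde r+1/2))^\alpha\leq C_{\alpha,d}\tilde r^\alpha$ valid for large $\tilde r$ takes care of the shift inside the exponential. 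The main technical care lies in juggling the three radii $r/\sqrt d$, $r/\sqrt d-\eta$, and $\tilde r$ and in cleanly absorbing all the dimensional constants so that the final prefactor depends only on $p_X$ and $d$; the subtlest conceptual point is ensuring that the concentration in Step 1 holds simultaneously across the data-dependent coordinate $j^\star(r)$ and all thresholds $s$, which is handled automatically by the $(s,j)$-uniform DKW-plus-union-bound estimate.
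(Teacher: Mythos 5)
Your proof is correct and follows the same structure as the paper's argument for Theorem~\ref{Thm_T}: reduce to a marginal tail via $\{\|\mathbf{x}\|_2\geq r\}\subseteq\bigcup_{j}\{|x_j|\geq r/\sqrt{d}\}$, select a coordinate $j^\star$ carrying at least a $1/d$ fraction of the tail mass, pass from $P_{X,j^\star}$ to $\hat{P}^n_{X,j^\star}$ on a $(j,t)$-uniform DKW event, and then shift the tail threshold into $P_{\hat\theta,j^\star}$ using the constraint $W_1(\hat{P}^n_{X,j},P_{\hat\theta,j})\leq\epsilon_j$, finishing by the inclusion $P_{\hat\theta}(\{\|\mathbf{x}\|_2\geq s\})\geq P_{\hat\theta,j^\star}(\{|x|\geq s\})$ and a reparametrization of $r$. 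The one technical divergence is in the Wasserstein-to-tail step: the paper's Lemma~\ref{Lem:KS} works in the Kantorovich dual with ramp test functions applied separately to the two half-lines, yielding $P_{\hat\theta,j}((-t,t)^c)\geq \hat{P}^n_{X,j}((-(t+\gamma),t+\gamma)^c)-2\gamma^{-1}\epsilon_j$ and taking $\gamma=1$, whereas you argue in the primal with an optimal coupling $\pi_j$ and Markov's inequality on $\{|x-y|>\eta\}$, obtaining $\hat{P}^n_{X,j}(\{|x|\geq s\})\leq P_{\hat\theta,j}(\{|y|\geq s-\eta\})+\eta^{-1}\epsilon_j$. The two are interchangeable; your primal version is in fact marginally sharper because it avoids the factor $2$ coming from the two-tail dual argument (with $\eta=1$ you would get $\max_j\epsilon_j$ in place of $2\max_j\epsilon_j$), and your choice $\eta=1/2$ simply matches the stated constant. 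One small correction to your own commentary: $j^\star(r)$ is determined by the population distribution $P_X$ and by $r$, so it is deterministic, not data-dependent; the $(j,t)$-uniformity of DKW is still what you need, but because $j^\star$ and the threshold vary with $r$, not because $j^\star$ is random.
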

\begin{rmk}
Assuming that $n\gg \log{d}$ and $\max_{j\in[d]}\{\epsilon_j\}\rightarrow 0$ as $n\rightarrow\infty$, it follows from \eqref{E4.12} that the generative distribution $P_{\hat{\theta}}$ learned by \potnet effectively preserves the tail decay exponent of the data-generating distribution $P_X$. The $n^{-1\slash 2}$ dependence on the sample size $n$ in the error term $\sqrt{2\log(4d\slash \delta)\slash n}$ in \eqref{E4.12} arises from the fast convergence rate of the marginal distribution $\PXj$ to $P_{X,j}$ for all $j\in [d]$. 
\end{rmk}

\paragraph{Marginal regularization attenuates mode collapse}
Here we summarize the theoretical intuitions that underpin the desirable performance of \potnet provided by Theorems \ref{Thm_GMM}-\ref{Thm_T} and Corollary \ref{Cor1.1}.
\textit{For type I mode collapse}: the marginal penalty enforces stringent constraints on the possible mass allocations by the generative distribution to different regions of $\mathbb{R}^d$. Generative distributions that satisfy these constraints but assign inaccurate mode weights would necessarily have a large $W_1$ distance to the true data-generating distribution (see the proof for Theorem \ref{Thm_GMM} presented in Appendix D). 
Therefore, generative distributions suffering from type I mode collapse will not be selected by \potnet and consequently, the learned generative distribution assigns accurate weights to each mode. 
\textit{For type II mode collapse}: the fast convergence rate of marginal distributions ensures that empirical marginal distributions preserve the tail information of the true marginal distributions well. The marginal penalty thus forces the learned generative distribution to align well with the target data-generating distribution even in low-density regions, ensuring the preservation of accurate tail characteristics.

\section{Empirical Evaluations}\label{Sec:sim}
In this section, we provide empirical evidence demonstrating the efficacy of \potnet in comparison with four other methods. 
Our analysis includes the state-of-the-art generative model for tabular data, CTGAN\footnote{We utilized the implementation provided at \url{https://github.com/sdv-dev/CTGAN}} \citep{xu2019modeling}, and WGAN with Gradient Penalty \citep{gulrajani2017improved}.
To underscore the effectiveness of the \mpw loss, we further evaluate \potnet against variants of the same generative model---utilizing an identical generator network---but optimized using alternative loss functions, including the unpenalized 1-Wasserstein distance (denoted as OT) and the Sliced Wasserstein distance (denoted as SW).
In the following, we present results from four distinct simulation experiments with dataset-specific benchmarking criteria.
All methods adopt identical network structures, learning rates, and other relevant hyperparameters (such as the latent dimension, number of epochs) where applicable.

\paragraph{Bayesian Likelihood-free inference}
Simulator models, models which generate samples easily but lack explicit likelihoods, are widely used for stochastic system modeling in science and engineering.
However, conducting parametric statistical inference on observed data from such models often proves challenging due to the unavailability or intractability of likelihood computations \citep{papamakarios2019sequential}.
Approximate Bayesian Computation (ABC) tackles this challenge by utilizing likelihood-free rejection sampling to identify parameters where the simulated data is close to the real observation. 
An important step in many ABC methods involves generating new samples from approximate posterior distributions \citep{wang2022adversarial}.

We consider a widely-used benchmark model  in likelihood-free inference \citep{papamakarios2019sequential, wang2022adversarial}.
In this setup, $\pmb{\phi}$ is a 5-dimensional vector. 
For each $\pmb{\phi}$, we observe four sets of bivariate Gaussian samples $\pmb{x} \in \R^8$, whose mean and covariance depend on $\pmb{\phi}$ as follows:
\begin{align*}
    \phi_i &\sim \mathrm{Unif}[-3, 3] \quad \text{for } i = 1, \dots, 5, \\
    x_j &\sim \mathcal{N}(\mu_\phi, \Sigma_\phi) \quad \text{for } j \in [4], \\
    \text{where} \quad \mu_\phi &= 
    \begin{pmatrix}
        \phi_1 \\ 
        \phi_2
    \end{pmatrix}, \quad
    s_1 = \phi_3^2, \quad s_2 = \phi_4^2, \quad \rho = \tanh(\phi_5), \\
    \Sigma_\phi &= 
    \begin{pmatrix}
        s_1^2 & \rho s_1 s_2 \\
        \rho s_1 s_2 & s_2^2
    \end{pmatrix}.
\end{align*}
The posterior distribution of this simple model has truncated support and features four unidentifiable modes since the signs of $\phi_3$ and $\phi_4$ are indistinguishable. 
The ground truth dataset is obtained via rejection sampling where a parameter is retained if the corresponding $\pmb{X}^{(i)}$ falls within an $\epsilon$-ball of the observed data $\pmb{X}_{\mathrm{obs}} \in \R^8$. Formally, our dataset is $\{\pmb{\phi}^{(i)} \,|\, \|\pmb{X}^{(i)} - \pmb{X}_\mathrm{obs}\|_1 < \epsilon\}_{i=1}^{3000}$ with $\epsilon = 1.5$.

\begin{figure}[htbp]
    \centering
    \subfloat[Marginal distributions.]{{
        \includegraphics[width=.6\textwidth]{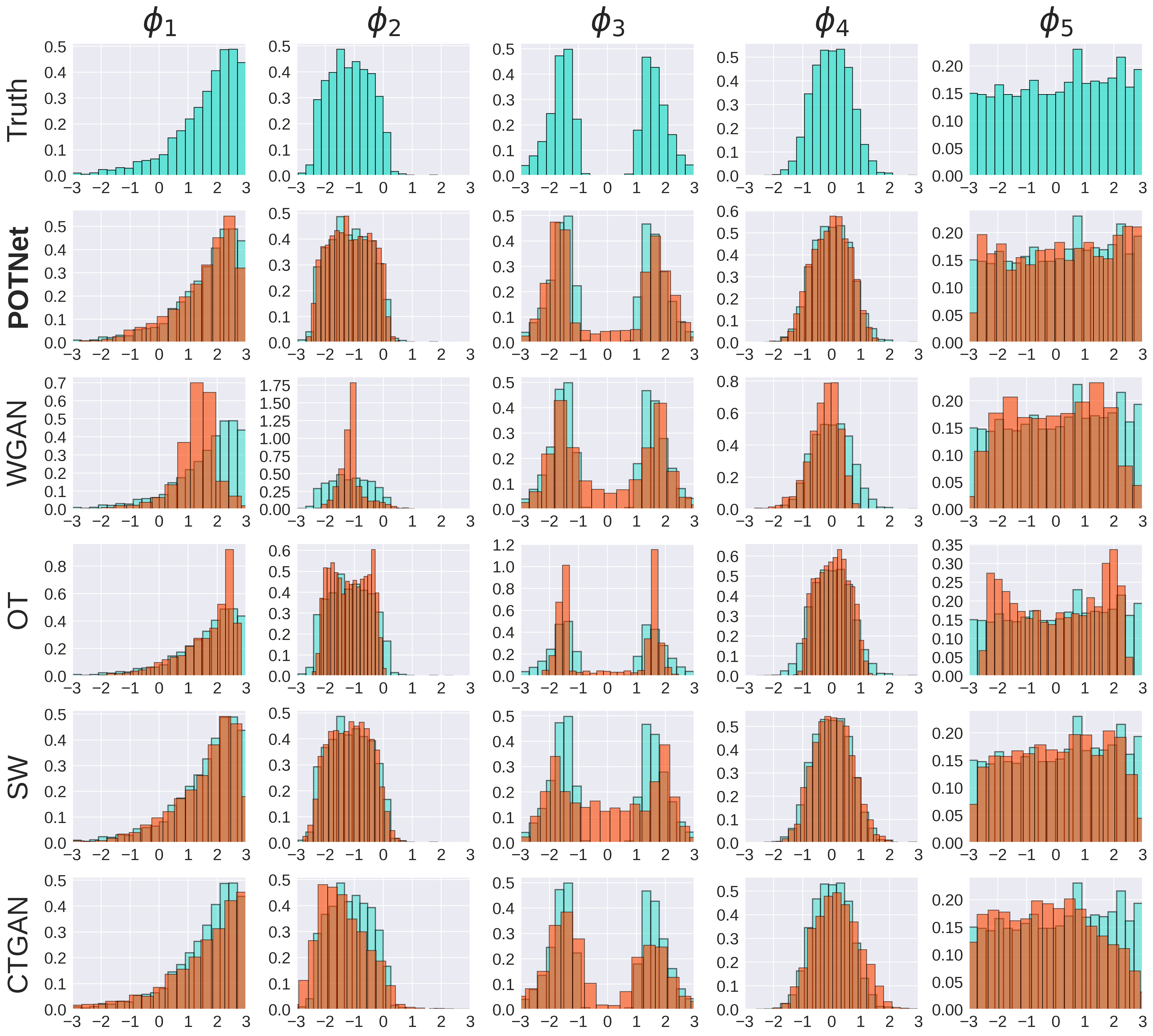}\label{fig:abc-marginal_distributions}
    }} \\
    \subfloat[Bivariate contours.]{{
        \includegraphics[width=.6\textwidth]{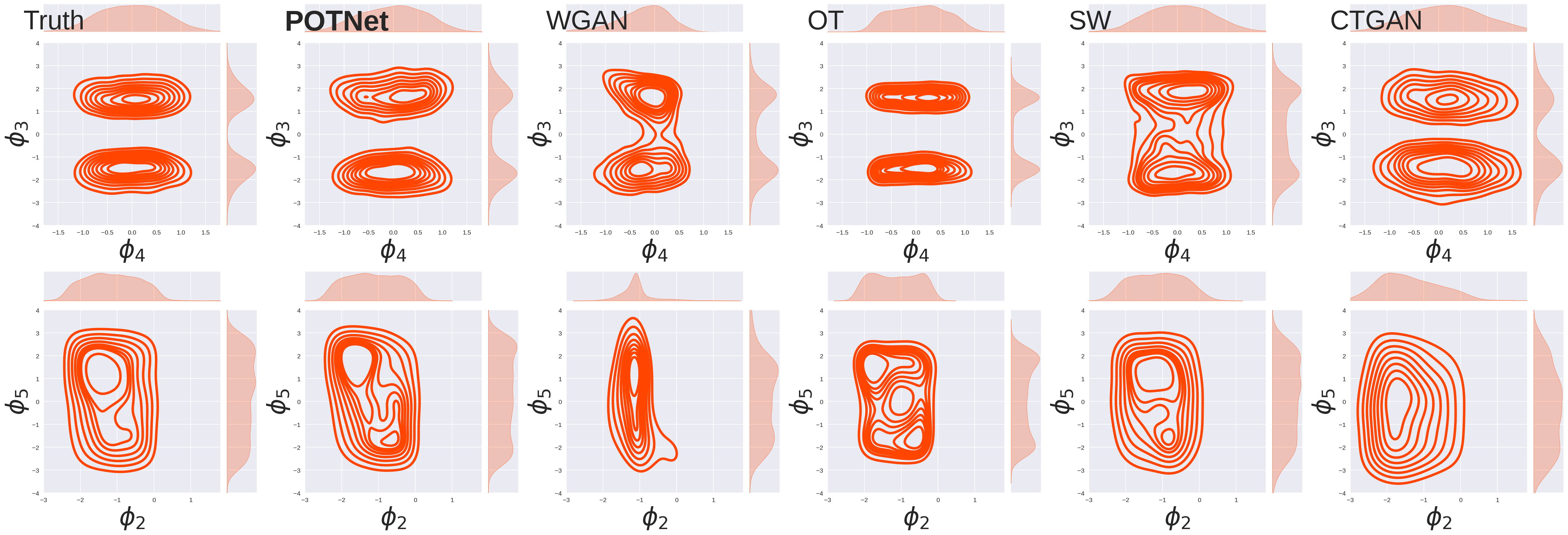}\label{fig:abc-bivariate}
    }} \\
    \subfloat[Deviation of sample covariance matrix.]{{
        \includegraphics[width=.6\textwidth]{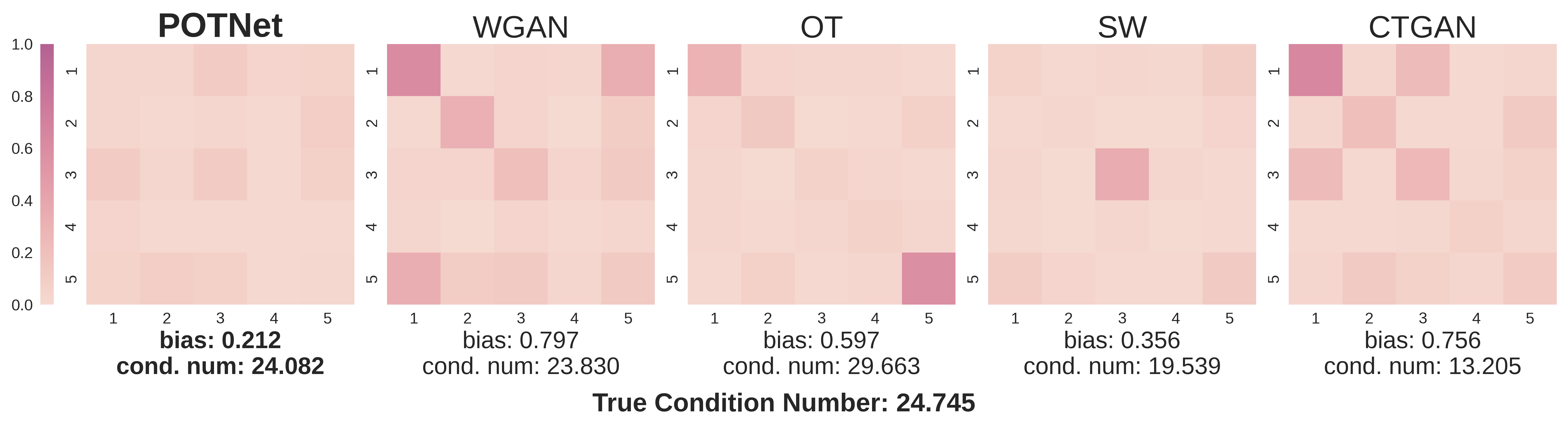}\label{fig:covar-mat}
    }}
    \caption{Performance of five methods for estimating the approximate posterior distribution of $\pmb{\phi}$. 
    \textbf{(a)} Marginal distributions for each $\phi_j$, $j=1,\dots,5$, by column. Cyan: ground truth; orange: synthetic samples. 
    \textbf{(b)} Bivariate density contour plots: $\phi_4$ vs $\phi_3$ (top panel) and $\phi_2$ vs $\phi_5$ (bottom panel). 
    \textbf{(c)} Heatmap of absolute deviation of sample covariance matrix from estimated covariance matrix of the ground truth dataset (lighter color indicates smaller deviation). Bias: spectral norm of difference between synthetic and real covariance (lower is better). Condition number: $\lambda_{\mathrm{max}} / \lambda_{\mathrm{min}}$ of covariance matrix.}
    \label{fig:abc-comparison}
\end{figure}

The comparative analysis of various generative models is presented in Figure \ref{fig:abc-comparison}. In plot (a), WGAN and OT exhibit severe mode collapse for $\phi_2$ and $\phi_3$, respectively, while CTGAN distorts the distribution of $\phi_5$. In contrast, \potnet effectively learns the distribution without experiencing mode collapse for any parameter. Plot (b) further reveals that OT exhibits significant type II mode collapse, evidenced by vertically compressed support in bivariate distributions. WGAN and SW fail to capture the symmetric bimodality of $\phi_3$, whereas CTGAN incorrectly learns the mode of $\phi_5$. Plot (c) demonstrates the superior performance and robustness of \potnet in preserving joint relationships, achieving the smallest bias, which is defined as the spectral norm of deviation between synthetic and real data sample covariance matrices. Moreover, the estimated condition number of the sample covariance matrix from \potnet (24.1) closely approximates that of the real data (24.7), where the condition number is defined as $\lambda_{\mathrm{max}} / \lambda_{\mathrm{min}}$ with $\lambda_{\boldsymbol{\cdot}}$ denoting the eigenvalues of the sample covariance matrix. 
We remark that \potnet demonstrates robust performance across random initializations and exhibits rapid convergence in practice. 
For more details, including a three-dimensional comparison of the generated samples, please refer to Appendix E.

Next, we evaluate the quality of generated data using two dissimilarity-type metrics, \emph{Maximum Mean Discrepancy} (MMD) with Gaussian kernel and \emph{Total Variation Distance} (TV dist.) summarized in Table \ref{tab:abc-comparison}.
MMD quantifies the maximum deviation in the expectation of the kernel function evaluated on samples. Total variation distance is an \textit{f}-divergence that measures the maximal difference between assignments of probabilities of two distributions (see Appendix E for details on empirical estimation of TV distance via discretization).
\potnet achieves highest performance in synthetic data quality under both metrics.

\begin{table}[t!]
\begin{center}
\begin{sc}
\begin{small}
    \begin{tabular}{l|ccccc}
    \toprule
    Metric & \textbf{\potnet} & WGAN & OT & SW & CTGAN \\
    \midrule
    TV dist.        & \textbf{0.560}   & 0.775         & 0.648       & 0.663       & 0.634         \\
    MMD (log)       & \textbf{-6.366}  & -3.474        & -5.587      & -5.900      & -4.953        \\
    \bottomrule
\end{tabular}
\end{small}
\end{sc}
\end{center}
\caption{Evaluation on the ABC dataset using dissimilarity measures: Total Variation (TV) distance and log Maximum Mean Discrepancy (MMD). Lower value indicates better performance. Boldfaced value identifies the best performance in each category.}
\label{tab:abc-comparison}
\end{table}

Additionally, we assess the computational performance of each model in terms of training and sampling time averaged over seven trials. All models are implemented using \texttt{PyTorch} \citep{paszke2019pytorch} and executed on a Tesla T4 GPU. 
The models are trained to convergence, with WGAN requiring 2,000 epochs and the remaining models 200 epochs. For each model, we generate 3,000 samples.
The results are summarized in Table \ref{tab:runtime-comparison}. \potnet requires significantly shorter time for training and sampling compared to CTGAN and WGAN (achieving approximately an 80-fold speedup in the sampling stage compared to CTGAN and a 3-fold speedup compared to WGAN). 
This efficiency makes \potnet particularly well-suited for generating large volumes of synthetic samples, a common need in practical applications \citep{dahmen2019synsys}.

\begin{table}[h!]
\vspace{1em}
\begin{center}
\begin{sc}
\begin{small}
    \begin{tabular}{l|ccccc}
    \toprule
    Runtime& \textbf{\potnet} & WGAN & OT & SW & CTGAN \\
    \midrule
    Train (s)      & 38.1     & 235      & 33.5     & \textbf{7.75} & 56     \\
    Sample ($\mu$s) & \textbf{667} & 1.8$\times 10^3$ & 853      & 759      & 54$\times 10^3$ \\
    \bottomrule
\end{tabular}
\end{small}
\end{sc}
\end{center}
\caption{Runtime comparison of training and sampling times across different methods. \potnet achieves approximately an $80$x speedup in sample generation compared to CTGAN.}
\label{tab:runtime-comparison}
\end{table}

\paragraph{Mixture of Gaussians}
In this example, we investigate the effectiveness of \potnet in mitigating both type I and type II mode collapse.
To do so, we generate a mixture of three 20-dimensional Gaussians with unbalanced weights, creating one large cluster and two small clusters of weights $0.8, 0.1, 0.1$, respectively. Detailed specifications of the true distribution and training procedures are provided in Appendix E.

Figure \ref{fig:gmm-bivariate_contours} displays bivariate contour plots of the first two dimensions for both real and synthetic data, with contours estimated using kernel density estimation.
SW and CTGAN fail to produce high-quality samples: SW incorrectly estimates the covariance of the largest cluster, while CTGAN fails to capture the overall joint relationship.
Notably, both WGAN and OT exhibit type I mode collapse in which they fail to allocate sufficient weight to the smaller clusters. 
OT additionally demonstrates severe support shrinkage and significantly reduces sample diversity. 
This is evident in both the tails (black contour) and the mode, particularly in the largest cluster (purple contour), where the mode displays horizontal compression.
In Appendix E, we demonstrate that the mode collapsing behavior of synthetic data generated by OT leads to catastrophic consequences for downstream retraining tasks.

\begin{figure}[htbp]
\vspace{1em}
\begin{center}
    \includegraphics[width=1\textwidth]{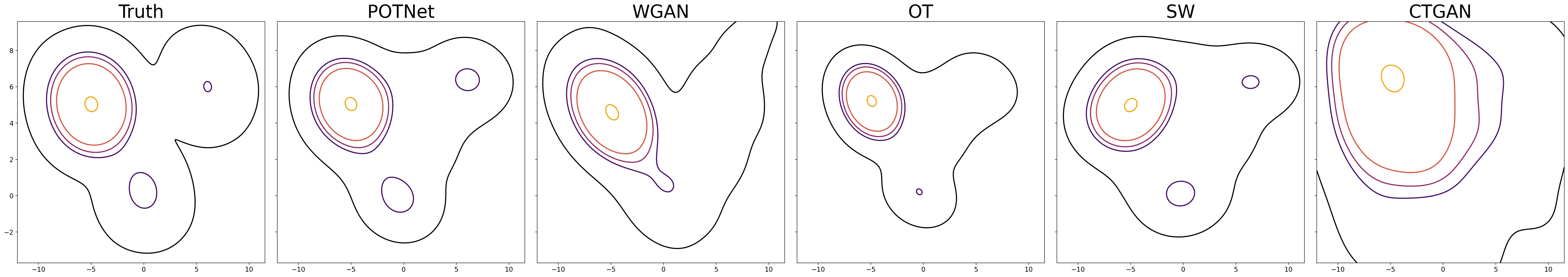}
\caption{Bivariate contour plots of the first two dimensions of a 20D Gaussian mixture model with 3 components. WGAN shows type I (mode dampening) mode collapse while OT exhibit both type I and type II (support shrinkage) mode collapse.}
\label{fig:gmm-bivariate_contours}
\end{center}
\end{figure}

\paragraph{2D manifold embedded in 3D space}
We now evaluate the ability of \potnet to capture complex underlying manifold structure. 
For this assessment, we use the classic S-curve dataset \citep{ma2012manifold}, a standard benchmark for comparing nonlinear dimensionality reduction and manifold learning methods. Each data point $\pmb{X} = (X_1, X_2, X_3)$ is generated by
\begin{align*}
U \sim \mathrm{Unif}(-3\pi/2, 3\pi/2),\quad
    X_1 = \sin(U), \quad X_2 \sim &\mathrm{Unif}(0, 2), \quad X_3 = \mathrm{sgn}(U) (\cos(U)-1)
\end{align*}
Although the ambient dimension (in this case, three) exceeds the manifold dimension (which is two), we set the latent dimension to three for all methods. 
Each method is trained for 1,000 epochs, with the exception of WGAN, which required 2,000 epochs to converge.
Figure \ref{fig:s-curve} presents two-dimensional (top panel) and three-dimensional (bottom panel) visualizations of the synthetic data generated by each model. The plot clearly shows the superior quality of samples generated by \potnet compared to other methods. 
WGAN exhibits support shrinkage with generated samples skewed towards the right. 
Both SW and CTGAN fail to accurately capture the underlying data structure. 
OT demonstrates acceptable performance, although it produces an excess of samples along the left and right edges that are not present in the real data.

\begin{figure}[htbp]
\vspace{1em}
\begin{center}
    \includegraphics[width=1\textwidth]{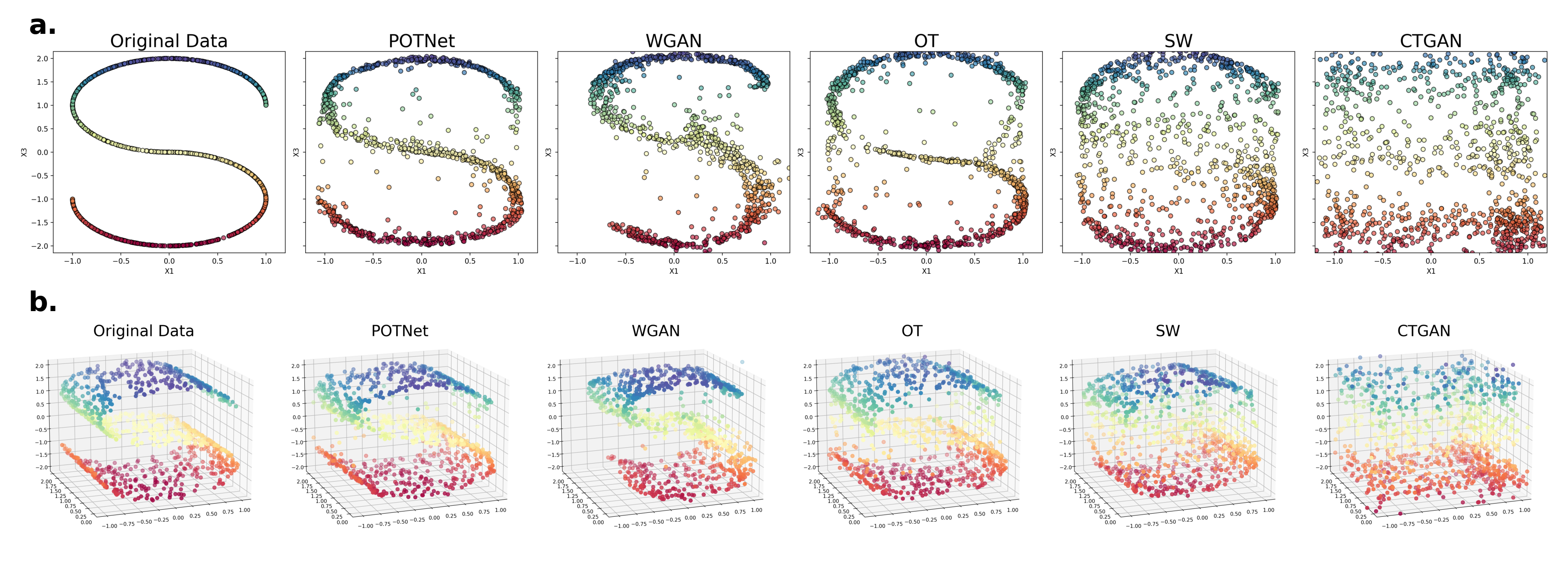}
\caption{Comparison of synthetic data generated by each method in 2D (top) and 3D (bottom) when the underlying manifold is complex and lower-dimensional. }
\label{fig:s-curve}
\end{center}
\end{figure}

\paragraph{Assessment of estimator performance using synthetic data}
In our final simulation, we assess the utility of synthetic data in evaluating estimator performance. 
This setting corresponds to the use case described in \cite{athey2024using}, where a practitioner is interested in using synthetic data to gauge the empirical performance of estimators, such as the coverage rate of confidence intervals. 
As \cite{athey2024using} states, for this purpose, it is important that the generated data \textit{accurately} resembles the true data generating mechanism. 
We generate a real dataset of 300 observations from a 3-dimensional Gaussian distribution $\N(0, \mathbf{\Sigma})$ where 
\begin{align*}
    \mathbf{\Sigma} &= 
    \begin{bmatrix}
    30 & 10.3 & -20.2 \\
    10.3 & 20 & 0.3 \\
    -20.2 & 0.3 & 20
    \end{bmatrix} \in \mathbb{R}^{3 \times 3}
\end{align*} 
Each observation consists of three features, i.e., $\pmb{X} = (X_1, X_2, X_3)$.

We use this dataset to train five models, each of which then generates 1,000 synthetic datasets of 300 samples each. Our analyses focus on inference with respect to the first variable $X_1$. For each synthetic dataset $\{\widetilde{\pmb{X}}^{(b, i)}\}_{i=1}^{300}, b\in [1000]$, we compute the sample mean $\bar{X}_{1,(b)}^*$ and the sample standard deviation $\widehat{\sigma}_{1,(b)}^*$ of the first coordinate, resulting in a vector of 1,000 sample means $\{\bar{X}_{1,(b)}^*\}_{b=1}^{1000}$, and 1,000 confidence intervals. We evaluate:
\begin{enumerate}
    \item[\textbf{S1:}] The bootstrap confidence interval for $\mu_1$, computed via the percentile bootstrap method using the empirical distribution of sample means $\{\bar{X}_{1,(b)}^*\}_{b=1}^{1000}$ from the synthetic datasets. This interval is given by $[q_{0.025}, q_{0.975}]$, where $q_{\alpha}$ denotes the $\alpha$-quantile of the empirical distribution. 
    \item[\textbf{S2:}] The coverage rate of 95\%-confidence intervals for $\mu_1$ computed using each synthetic dataset. Each interval is given by $\{\bar{X}_{1,(b)}^* \pm t_{\alpha/2, n-1} \widehat{\sigma}_{1,(b)}^*/\sqrt{300}\}$ for $\alpha = 0.05$, where $\widehat{\sigma}_{1,(b)}^*$ is the sample estimate of standard deviation for the $b$th synthetic dataset.
\end{enumerate}
Given that the true distribution of $X_1$ is $\mathcal{N}(0, \Sigma_{1,1}) = \mathcal{N}(0, 30)$, we can assess the quality of our synthetic data generation process by comparing the results to this known distribution. 
Specifically, if the synthetic datasets accurately represent samples from the true distribution, 
the percentile bootstrap should deliver an interval close to the true interval of $(\pm z_{\alpha/2} \sqrt{30/300})$, and the sample-based confidence intervals derived from these datasets should exhibit approximately the correct nominal coverage rate of 95$\%$ for the true parameter $\mu_1 = 0$.
This comparison provides a measure of how well the synthetic data capture statistical properties of the original distribution. The results are included in Table \ref{tab:confidence_intervals}.
In setting S1, \potnet produces confidence intervals whose lower and upper bounds deviate the least from those of the true confidence interval.
Additionally, \potnet generates the most symmetric bootstrapped-intervals around the true parameter $\mu_1 = 0$. In contrast, the intervals provided by all other methods exhibit some degree of skewness, with the interval generated by CTGAN notably failing to cover the true parameter value of 0 entirely.
In S2, \potnet-generated synthetic data yields 95\%-confidence intervals with a coverage rate of nearly 95\%, accurately capturing the true parameter at the nominal level. 
Such performance suggests that the generative distribution of \potnet closely approximates the statistical properties of the true distribution, thereby allowing for reliable inference.

\begin{table}[t!]
\begin{center}
\begin{small}
\begin{sc}
\begin{tabular}{l|cc|c}
\toprule
Method & Lower Bound (S1) & Upper Bound (S1) & Coverage (S2) \\
\midrule
\textcolor{blue}{\textbf{Truth}} & \textcolor{blue}{\textbf{-0.62}} & \textcolor{blue}{\textbf{0.62}} & \textcolor{blue}{\textbf{95.0\%}} \\
\textbf{\potnet} & \textbf{-0.50} & \textbf{0.63} & \textbf{93.8\%} \\
WGAN & -0.18 & 0.99 & 77.4\% \\
OT & -0.29 & 0.85 & 82.8\% \\
SW & -0.84 & 0.39 & 87.5\% \\
CTGAN & 0.37 & 1.03 & 73.6\% \\
\bottomrule
\end{tabular}
\end{sc}
\end{small}
\end{center}
\caption{Comparison of confidence intervals and coverage rates across five synthetic data generation methods. Each dataset is of size $n=300$.}
\label{tab:confidence_intervals}
\end{table}

\section{Applications to Real Data}\label{Sec:app}
In this section, we evaluate the generative capability of the proposed method on four real-world datasets from diverse domains to demonstrate the wide applicability of \potnet and the \mpw loss. These datasets include three tabular datasets and one image dataset.
The tabular datasets are: (1) The \emph{California Housing} dataset, a widely used benchmark for regression analysis \citep{pace1997sparse}, and two classification benchmarks from the UCI Machine Learning Repository: (2) the \emph{Breast Cancer} diagnostic data, comprising 9 categorical features and one discrete feature \citep{misc_breast_cancer_14}, and (3) the \emph{Heart Disease} dataset, consisting of 8 categorical and 6 numeric features \citep{misc_heart_disease_45}. 
To further showcase that the \mpw loss is not limited to tabular data modeling, we apply it to digit generation using the MNIST handwritten digits dataset, a standard benchmark widely used for both digit recognition and generative tasks in computer vision.
As first three tabular datasets comprise a mix of continuous, discrete, and categorical features, WGAN cannot be easily adapted to model these datasets with feature heterogeneity; therefore, we omit it from the experiments involving these tabular data.

\subsection{Assessment of Tabular Data Generation}
To comprehensively assess the quality of the generated tabular data distributions, we will compare the performance of \potnet, OT, SW, and CTGAN using two evaluation criteria: \emph{machine learning efficacy} and bivariate joint plot inspection.

\paragraph{Machine learning efficacy}
An ubiquitous metric for examining generative modeling on real datasets is the machine learning efficacy, which measures the effectiveness of synthetic data in performing prediction tasks \citep{xu2019modeling}. 
Our evaluation process consists of four steps:
\begin{enumerate}
    \item Split the real data into training and test sets.
    \item Train generative models on the real training set to produce synthetic datasets.
    \item Train classifiers or regressors on synthetic training sets.
    \item Assess the performance of classifier/regressor on the real test set.
\end{enumerate}
To reduce the influence of any particular discriminative model on our evaluation, we employ both Random Forest (RF) and Decision Tree (DT) methods in our evaluation.
Table \ref{tab:real-tab-data} summarizes the results from five repetitions of the experiments. 
For the regression task on the \emph{California Housing} dataset, we evaluate performance using mean squared error (MSE), with lower values indicating better performance. For the \emph{Breast} and \emph{Heart} datasets, performance is assessed using classification accuracy, where higher values are better. The results demonstrate that \potnet consistently outperforms all competing methods on all three datasets, each characterized by distinct structures and feature compositions.

\begin{table}[htbp]
\vspace{1em}
\begin{center}
\begin{small}
\begin{sc}
\resizebox{.8\textwidth}{!}{%
\begin{tabular}{lcc|cccc}
\toprule
Data  & Metric &  & \potnet & OT & SW & CTGAN \\
\midrule
\multirow{4}{*}{CA housing} & \multirow{2}{*}{DT (MSE) $\downarrow$} & mean & \textbf{0.155} & 0.174 & 0.222 & 0.307 \\
& & std & 0.0088 & 0.0126 & 0.0501 & 0.0962 \\ 
& \multirow{2}{*}{RF (MSE) $\downarrow$} & mean & \textbf{0.107} & 0.192 & 0.169 & 0.462 \\
& & std & 0.0035 & 0.0437 & 0.0292 & 0.0869 \\
\midrule 
\multirow{4}{*}{Breast} 
& \multirow{2}{*}{DT $\uparrow$} & mean & \textbf{0.696} & 0.644 & 0.650 & 0.627 \\
& & std & 0.0382 & 0.0363 & 0.0500 & 0.0478 \\ 
& \multirow{2}{*}{RF $\uparrow$} & mean & \textbf{0.707} & 0.664 & 0.683 & 0.671 \\
& & std & 0.0273 & 0.0316 & 0.0294 & 0.0407 \\ 
\midrule 
\multirow{4}{*}{Heart} &  \multirow{2}{*}{DT $\uparrow$} & mean & \textbf{0.653} & 0.633 & 0.649 & 0.525 \\
&& std & 0.0339 & 0.0371 & 0.0338 & 0.1014 \\
& \multirow{2}{*}{RF $\uparrow$} & mean & \textbf{0.699} & 0.686 & 0.673 & 0.611 \\
& & std & 0.0264 & 0.0170 & 0.0182 & 0.0596 \\
\bottomrule
\end{tabular}
}
\caption{Machine learning efficacies (first row: mean squared error (MSE), \emph{lower} values indicate better performance; last two rows: accuracy, \emph{higher} values indicate better performance) on real datasets computed over 5 repetitions.}
\label{tab:real-tab-data}
\end{sc}
\end{small}
\end{center}
\end{table}

\begin{figure}[h!]
\begin{center}
    \includegraphics[width=.9\textwidth]{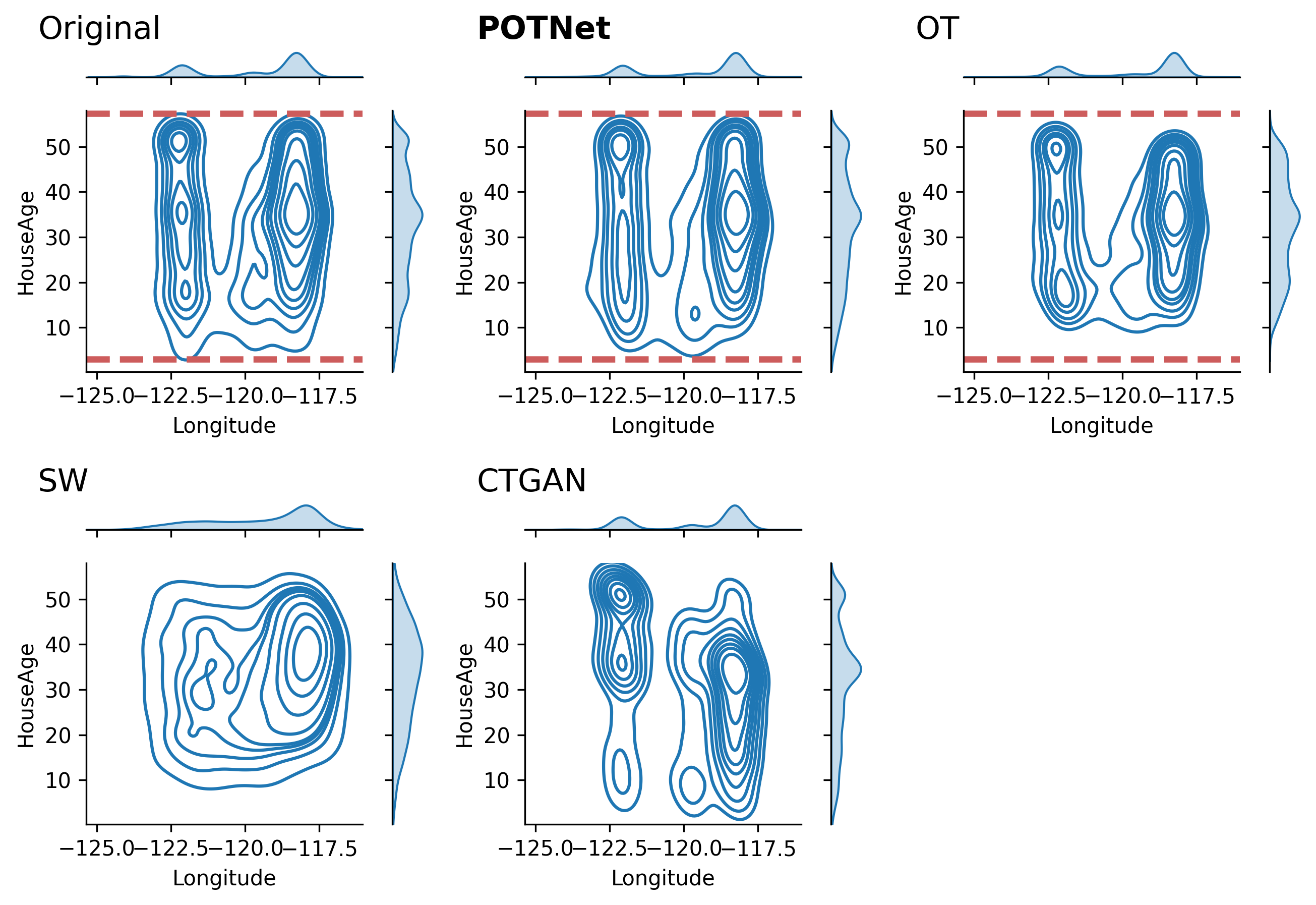}
 \caption{A comparison of the bivariate joint distribution of \textit{Longitude} and \textit{HouseAge} for the California Housing dataset. Red lines: upper and lower bounds for contour lines of the original data. OT fails to adequately capture the variance of \textit{HouseAge}.}
\label{fig:ca-housing}
\end{center}
\end{figure}

Next, we conduct a visual assessment of the synthetic data generated by each method through examination of bivariate distributions.
Figure \ref{fig:ca-housing} presents joint density plots for two multimodal features in the \emph{California Housing} dataset: \emph{Longitude} and \emph{HouseAge}.
The plot reveals that SW fails to capture modal separation, likely due to the rotational invariance of its random projections. CTGAN misrepresents the modal spread and positioning. OT exhibits severe mode collapse, failing to fully capture the data support with significant shrinkage in the distribution tails.
In contrast, the axis-dependent nature and marginal regularization of \potnet yield high-fidelity synthetic data that faithfully mimics the target distribution.

\subsection{Comparative Evaluation of Image Generation}
To demonstrate the versatility of \potnet beyond tabular data modeling, we apply it to data with grid-like structure, specifically images. 
For this purpose, we focus on image generation using the MNIST dataset, a collection of 70,000 grayscale images (28x28 pixels) depicting handwritten digits from 0 to 9.
\paragraph{Setting 1} We evaluate the performance of \potnet, OT, and WGAN using the same conventional convolutional architecture.
Figure \ref{fig:mnist-convolution} displays the digits generated after 100 epochs of training.
The results demonstrate that \potnet produces the most consistently clear and recognizable handwritten digits.
OT produces slightly less defined results (e.g., first digit of third row), while WGAN renders output with greater variability and reduced clarity in some digit formations (e.g., the first digit of the first row appears to be a combination of ``4'' and ``9'').

\begin{figure}[h!]
\begin{center}
    \includegraphics[width=1.0\columnwidth]{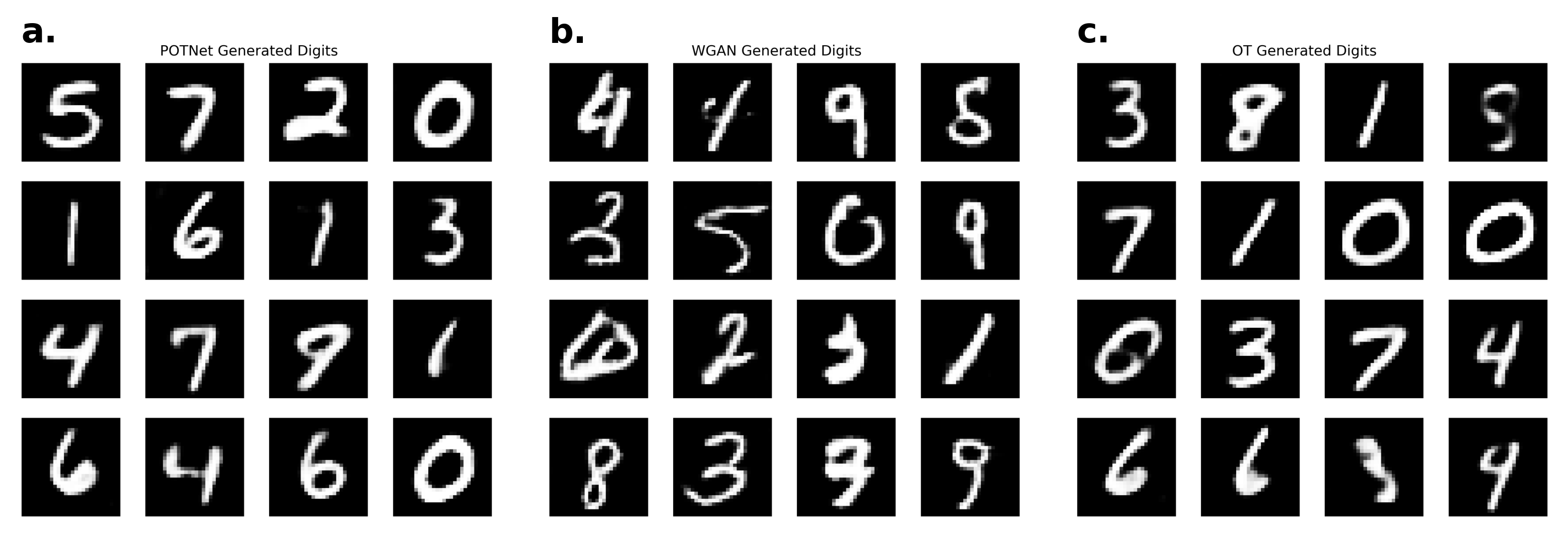}
\caption{Comparison of generated MNIST handwritten digits after 100 epochs of training using a convolutional network architecture. (a). \potnet; (b). WGAN; (c). OT.}
\label{fig:mnist-convolution}
\end{center}
\end{figure}

\paragraph{Setting 2} 
Now we demonstrate the ability of \potnet to capture the underlying manifold structure without relying on convolutional layers, and the pronounced mode collapse behavior manifested by OT. 
To illustrate these points, we employ a four-layered feedforward network for both methods instead of using convolutional layers.
Each model is trained for 300 epochs to convergence. The resulting generated digits are presented in Figure \ref{fig:mnist-ffnn}.
Overall, \potnet demonstrates robust performance and 
produces well-defined and easily identifiable digits. 
In contrast, OT generates digits that exhibit significant degradation, resulting in many indistinct or incomplete forms. The results show that OT suffers from pronounced mode collapse and fails to capture the full range of pixel values present in the MNIST dataset.
\begin{figure}[htbp]
\begin{center}
    \includegraphics[width=1.0\columnwidth]{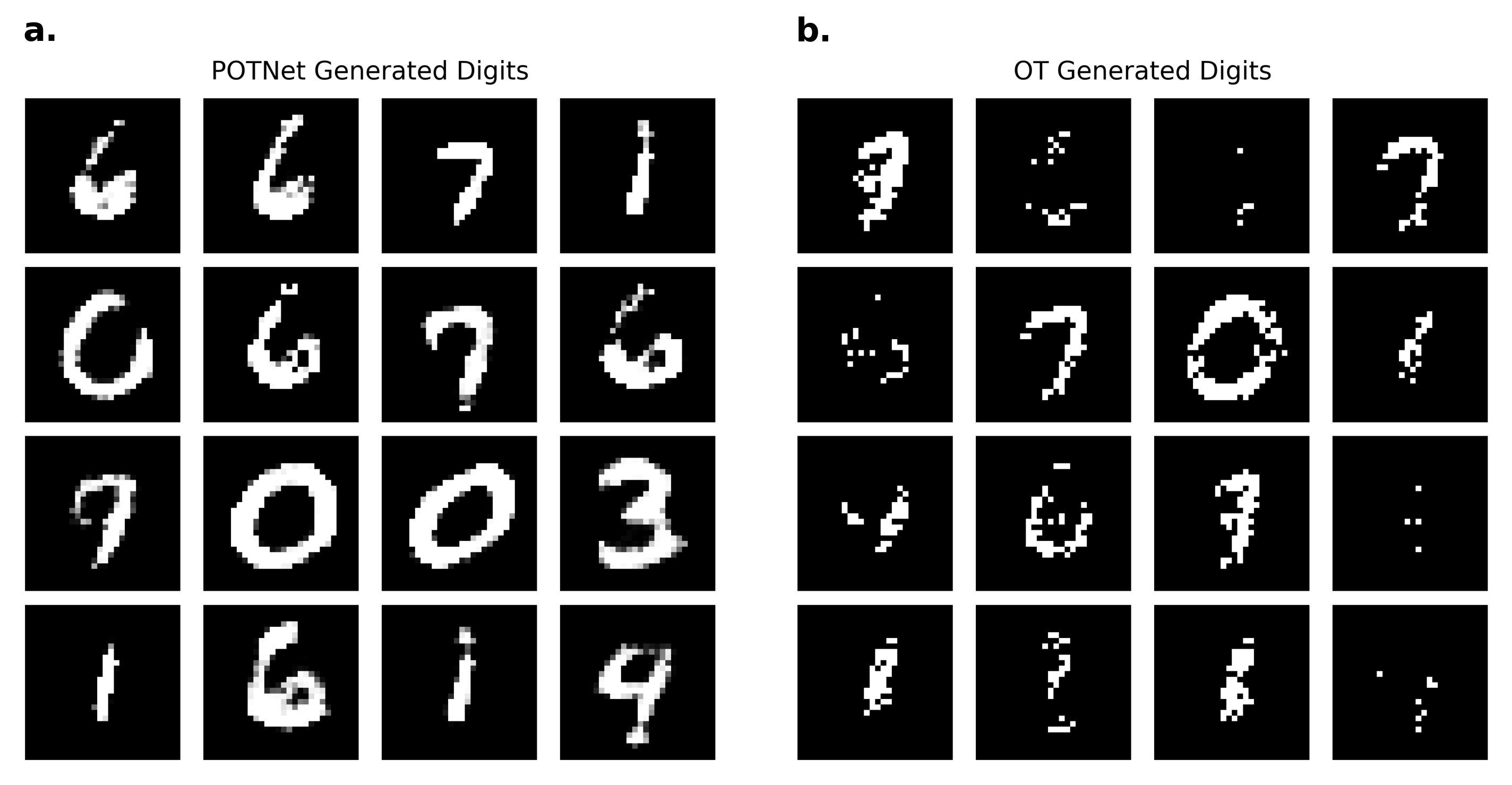}
\caption{Comparison of generated MNIST handwritten digits using a feedforward network architecture. Each model is trained for 300 epochs. (a). \potnet; (b). OT.}
\label{fig:mnist-ffnn}
\end{center}
\end{figure}

\section{Concluding Remarks}\label{Sec:disc}
To tackle the challenges of generating high-quality synthetic data, we introduce the novel Marginally-Penalized Wasserstein (\mpw) distance and propose the Penalized Optimal Transport Network (\potnet), a versatile generative model based on the \mpw distance.
The \mpw distance explicitly penalizes marginal mismatch along each dimension as a form of regularization to ensure proper joint alignment. It also naturally accommodates mixed data types, offers intuitive interpretations, and effectively captures multimodal distributions.
By leveraging low-dimensional marginal information, \potnet circumvents the curse of dimensionality typically experienced by the Wasserstein distance in high-dimensional settings.
Unlike WGANs which use duality-based distance approximation, \potnet employs the primal formulation of the Wasserstein distance, thereby eliminating the need for a critic network and extensive parameter tuning.
These modifications enable \potnet to significantly mitigate both mode dropping (type I mode collapse) and tail shrinkage (type II mode collapse). Our extensive simulation studies and real-world applications demonstrate \potnet's enhanced robustness and computational efficiency in generating high-quality synthetic samples. Additionally, we derive non-asymptotic bound on the generalization error of the \mpw loss and establish convergence rates of the generative distribution learned by \potnet. Our theoretical analysis also sheds light on the underpinnings of \potnet's efficacy in addressing mode collapse and producing desirable samples. These theoretical results substantiate the reliability and usefulness of the proposed model.

Several interesting avenues for future research emerge from this work. One natural direction is the application of the \mpw loss to other generative models, including flow-based models. This extension could potentially enhance the performance and stability of these models, especially in high-dimensional spaces where traditional approaches often struggle. Another compelling direction stems from our demonstration of the effectiveness and utility of low-dimensional marginal distributions. This insight could be leveraged to address a wide range of high-dimensional problems beyond generative modeling. For instance, it might prove valuable in dimensionality reduction techniques, feature selection methods, or in developing more robust clustering algorithms for high-dimensional data.

\section*{Acknowledgements}
The authors would like to thank Aldo Carranza, X.Y. Han, Chen Liu, and Julie Zhang for insightful discussions and valuable comments. 
During this work, W.S.L. is partially supported by the Stanford Data Science Graduate Fellowship and the Two Sigma Graduate Fellowship Fund. 
W.H.W.'s research is partially funded by the NSF grant 2310788.

\bibliographystyle{plainnat}
\bibliography{refs}

\newpage
\begin{appendices}

{\LARGE\bfseries\section*{Appendix}}
\noindent The Appendix section contains detailed proofs of theoretical results, supplementary empirical findings and plots, as well as the most general formulation of the model framework.

\section{Generalization of the \mpw Distance to Multi-coordinate Marginals}\label{appendix:k-coord-mpw}

\begin{defn}[$k$-Coordinate Marginal Distribution]
Consider any $k\in[d]$. For any $S=\{i_1,\cdots,i_k\}\subseteq [d]$ (where $i_1<\cdots<i_k$), we define the projection map $p_S:\mathbb{R}^d\rightarrow\mathbb{R}^{k}$ by setting $p_S(\pmb{x}):=(x_{i_1},\cdots,x_{i_k})$ for any $\pmb{x}=(x_1,x_2,\cdots,x_d)\in\mathbb{R}^d$. For any probability distribution $\mu$ on $\mathbb{R}^d$, we denote by $(p_S)_{*}\mu$ the pushforward of $\mu$ by $p_S$, i.e., the probability distribution on $\mathbb{R}^k$ such that for any Borel set $A\subseteq \mathbb{R}^k$, $(p_S)_{*}\mu(A)=\mu(p_S^{-1}(A))$. 
\end{defn}

\begin{defn}[Generalized \mpw Distance]
For any two probability distributions $\mu,\nu\in\mathcal{P}_1(\mathcal{X})$ and any collection $\mathcal{A}$ of subsets of $[d]$, we define the \emph{Generalized \mpw distance} between $\mu$ and $\nu$ as 
\begin{equation*}
    \mathcal{D}_{\mathcal{A}}(\mu,\nu):=W_1(\mu,\nu)+\sum_{S\in\mathcal{A}}\lambda_S W_1((p_S)_{*}\mu,(p_S)_{*}\nu),
\end{equation*}
where $\{\lambda_S\}_{S\in\mathcal{A}}$ are hyperparameters. 
\end{defn}

\section{Properties of the \mpw Distance}

In this section, we present additional properties of the \mpw Distance $\mathcal{D}(\cdot,\cdot)$.

\begin{defn}[Function Class for Dual Formulation]\label{defn_functions}
Let $\pmb{\lambda}=(\lambda_1,\cdots,\lambda_d)$ be given as in Definition \textcolor{red}{3.2} of the paper. We define the function class
\begin{align*}\label{def_fam}
    \mathcal{F}_{\pmb{\lambda}}:=\bigg\{\phi:\mathbb{R}^d\rightarrow\mathbb{R}:\, &\phi(\mathbf{x})=f(\mathbf{x})+\sum_{j=1}^d\lambda_j f_j(x_j)\text{ for any }\mathbf{x}=(x_1,x_2,\cdots,x_d)\in\mathbb{R}^d,\nonumber\\
    & \text{ where }f\in\mathrm{Lip}_{1,d}\text{ and }f_j\in\mathrm{Lip}_{1,1}\text{ for each }j\in [d]\bigg\}. 
\end{align*}
\end{defn}

\begin{thm}\label{dual_represent}
The \mpw distance $\mathcal{D}(\cdot,\cdot)$ is a metric on $\mathcal{P}_1(\mathcal{X})$. Moreover, it has the following dual representation:
for any $\mu,\mu'\in\mathcal{P}_1(\mathcal{X})$,
\begin{equation*}
    \mathcal{D}(\mu,\mu')=\sup\limits_{\phi\in\mathcal{F}_{\pmb{\lambda}}}\Big\{\int_{\mathbb{R}^d}\phi d\mu-\int_{\mathbb{R}^d}\phi d\mu'\Big\}.
\end{equation*}
\end{thm}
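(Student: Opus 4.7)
The plan is to establish the two assertions separately, addressing the metric properties first and the dual representation second. Both arguments will rely on corresponding facts for the $1$-Wasserstein distance together with the observation that the pushforward operation $(p_j)_*$ is linear, so each marginal distance $W_1((p_j)_{*}\mu,(p_j)_{*}\nu)$ inherits its metric and duality structure from the classical Kantorovich-Rubinstein theory on $\mathbb{R}$.

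For the metric axioms on $\mathcal{P}_1(\mathcal{X})$, I would argue as follows. Non-negativity and symmetry of $\mathcal{D}$ are immediate because every $W_1$ term is non-negative and symmetric and the hyperparameters $\lambda_j$ are non-negative. For identity of indiscernibles, $\mathcal{D}(\mu,\nu)=0$ forces $W_1(\mu,\nu)=0$ (since all summands are non-negative), which then gives $\mu=\nu$ because $W_1$ is a metric on $\mathcal{P}_1(\mathcal{X})$; conversely, $\mu=\nu$ implies that all pushforwards $(p_j)_{*}\mu=(p_j)_{*}\nu$ agree, so every $W_1$ term vanishes. The triangle inequality follows by combining the triangle inequality for $W_1$ on $\mathbb{R}^d$ with the triangle inequalities for each $W_1$ on $\mathbb{R}$, weighted by $1,\lambda_1,\dots,\lambda_d$.

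For the dual representation, the main tool is Kantorovich-Rubinstein duality applied to each term in the definition of $\mathcal{D}$. On $\mathbb{R}^d$ this gives $W_1(\mu,\mu')=\sup_{f\in\mathrm{Lip}_{1,d}}\{\int f\,d\mu-\int f\,d\mu'\}$; on $\mathbb{R}$, combined with the change-of-variable formula $\int g\,d(p_j)_{*}\mu=\int (g\circ p_j)\,d\mu$, it gives $W_1((p_j)_{*}\mu,(p_j)_{*}\mu')=\sup_{f_j\in\mathrm{Lip}_{1,1}}\{\int(f_j\circ p_j)\,d\mu-\int(f_j\circ p_j)\,d\mu'\}$. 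Because the $d+1$ suprema are taken over independent families of test functions, they can be merged into a single supremum over tuples $(f,f_1,\dots,f_d)$; writing $\phi:=f+\sum_{j=1}^d \lambda_j(f_j\circ p_j)$ yields exactly a generic element of $\mathcal{F}_{\pmb{\lambda}}$, and linearity of the integral delivers $\mathcal{D}(\mu,\mu')=\sup_{\phi\in\mathcal{F}_{\pmb{\lambda}}}\{\int\phi\,d\mu-\int\phi\,d\mu'\}$.

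I do not anticipate a substantive obstacle; the only mild technical point is ensuring integrability of the test functions against $\mu,\mu'\in\mathcal{P}_1(\mathcal{X})$. Since $\int\phi\,d\mu-\int\phi\,d\mu'$ is invariant under adding constants to $f$ and to each $f_j$, one may normalize so that $f(0)=0$ and $f_j(0)=0$; the $1$-Lipschitz bounds then give $|\phi(\mathbf{x})|\le\|\mathbf{x}\|_2+\sum_{j}\lambda_j|x_j|$, and the finite first-moment assumption on elements of $\mathcal{P}_1(\mathcal{X})$ secures absolute integrability, so each supremum is well-posed and the merging step is rigorous.
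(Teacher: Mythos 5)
Your argument is correct and follows essentially the same route as the paper: verify the metric axioms term by term using those of $W_1$, then invoke Kantorovich--Rubinstein duality for each of the $d+1$ Wasserstein distances, rewrite the marginal terms via the change-of-variables $\int g\,d(p_j)_{*}\mu=\int (g\circ p_j)\,d\mu$, and merge the independent suprema into one over $\mathcal{F}_{\pmb\lambda}$. Your closing remark on normalizing $f(0)=f_j(0)=0$ to secure integrability is a small refinement that the paper leaves implicit but does not change the structure of the argument.
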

\begin{proof}[Proof of Theorem \ref{dual_represent}]

For any $\mu\in\mathcal{P}_1(\mathcal{X})$, 
\begin{equation*}
    \mathcal{D}(\mu,\mu)=W_1(\mu,\mu)+\sum_{j=1}^d \lambda_j W_1((p_j)_{*}\mu,(p_j)_{*}\mu)=0.
\end{equation*}
For any $\mu,\mu'\in\mathcal{P}_1(\mathcal{X})$,
\begin{align*}
    \mathcal{D}(\mu,\mu')&=W_1(\mu,\mu')+\sum_{j=1}^d \lambda_j W_1((p_j)_{*}\mu,(p_j)_{*}\mu')\\
    &=W_1(\mu',\mu)+\sum_{j=1}^d \lambda_j W_1((p_j)_{*}\mu',(p_j)_{*}\mu)=\mathcal{D}(\mu',\mu).
\end{align*}
If $\mu\neq \mu'$, we have $\mathcal{D}(\mu,\mu')\geq W_1(\mu,\mu')>0$. For any $\mu,\mu',\mu''\in\mathcal{P}_1(\mathcal{X})$, 
\begin{align*}
    \mathcal{D}(\mu,\mu'')&=W_1(\mu,\mu'')+\sum_{j=1}^d \lambda_j W_1((p_j)_{*}\mu,(p_j)_{*}\mu'')\nonumber\\
    &\leq (W_1(\mu,\mu')+W_1(\mu',\mu''))+\sum_{j=1}^d \lambda_j (W_1((p_j)_{*}\mu,(p_j)_{*}\mu')+W_1((p_j)_{*}\mu',(p_j)_{*}\mu''))\\
    &=\mathcal{D}(\mu,\mu')+\mathcal{D}(\mu',\mu'').
\end{align*}
Therefore, $\mathcal{D}(\cdot,\cdot)$ is a metric on $\mathcal{P}_1(\mathcal{X})$.

By the dual formulation of the $1$-Wasserstein distance \cite[Remark 6.5]{villani2009optimal}, for any two probability distributions $\mu,\mu'$ on $\mathbb{R}^d$, we have   
\begin{equation*}
    W_1(\mu,\mu')=\sup_{f\in \mathrm{Lip}_{1,d}}\Big\{\int_{\mathbb{R}^d} fd\mu-\int_{\mathbb{R}^d} fd\mu'\Big\},
\end{equation*}
and for any $j\in [d]$,
\begin{align*}
    W_1((p_j)_{*}\mu,(p_j)_{*}\mu')&=\sup_{f_j\in \mathrm{Lip}_{1,1}}\Big\{\int_{\mathbb{R}}f_jd((p_j)_{*}\mu)-\int_{\mathbb{R}}f_jd((p_j)_{*}\mu')\Big\}\nonumber\\
    &=\sup_{f_j\in \mathrm{Lip}_{1,1}}\Big\{\int_{\mathbb{R}^d}f_j\circ p_jd\mu-\int_{\mathbb{R}^d}f_j\circ p_jd\mu'\Big\}.
\end{align*}
Hence
\begin{align*}
    \mathcal{D}(\mu,\mu')&=W_1(\mu,\mu')+\sum_{j=1}^d\lambda_jW_1((p_j)_{*}\mu,(p_j)_{*}\mu')\nonumber\\
    &=\sup_{\substack{f\in\mathrm{Lip}_{1,d},\\f_j\in\mathrm{Lip}_{1,1}\text{ for every }j\in [d]}}\Big\{\int_{\mathbb{R}^d}\big(f+\sum_{j=1}^d\lambda_j f_j\circ p_j\big)d\mu-\int_{\mathbb{R}^d}\big(f+\sum_{j=1}^d\lambda_j f_j\circ p_j\big)d\mu'\Big\}\nonumber\\
    &=\sup_{\phi\in\mathcal{F}_{\pmb{\lambda}}} \Big\{\int_{\mathbb{R}^d}\phi d\mu-\int_{\mathbb{R}^d}\phi d\mu'\Big\}. 
 \end{align*}

\end{proof}

\section{Conditional \potnet}

The theoretical basis for conditional generative modeling is the noise-outsourcing lemma \cite{zhou2023deep}. The pseudo-code for conditional \potnet is provided in Algorithm \ref{alg:ConditionPOTNet}, where $\widehat{P}_X^{(t),b}$ is the empirical distribution formed by $\mathcal{B}_X^{(t),b}$ and $\widehat{P}_{G_{\theta}(\tilde{Z})}^{(t),b}$ is the empirical distribution formed by $\{G_{\theta}(\tilde{Z}^{(t),i}):i\in\mathcal{I}_b\}$.

\begin{algorithm}[h!]
 \SetAlgoLined
\caption{Training Procedure for Conditional \potnet}
\label{alg:ConditionPOTNet}
\KwIn{Data $(X^{(1)},Y^{(1)}),\cdots,(X^{(n)},Y^{(n)})$ ($Y^{(i)}$ are conditional features),\\ latent dimension $d$, dimension of conditional features $d_c$, batch size $m$,\\ regularization parameter $\pmb{\lambda} = (\lambda_1, \dots, \lambda_d)$.}
\KwOut{Optimized parameters $\theta$.}

\For{ iteration $t = 1, \dots, T$ }{
     \tcp*[h]{Permute indices}\; 
     $\pi_t: [n]\to[n] \in \mathbb{S}_n$\;
     \vspace{0.1cm}
    \For{$b = 1, \dots, \ceil{n/m}$}{
        Set $\mathcal{I}_b\gets\{i\in [n]:(b-1)m+1\leq i\leq bm\}$\; 
        Form minibatch of real observations $\mathcal{B}_X^{(t),b}\gets\{X^{(\pi_t(i))}:i\in\mathcal{I}_b\}$\; 
        Obtain the corresponding conditional features $\{Y^{(\pi_t(i))}\}_{i\in\mathcal{I}_b}$\;
        Sample latent variables $\{Z^{(t),i}\}_{i\in \mathcal{I}_b}\sim P_Z(\cdot)$\;
        $\{\tilde{Z}^{(t),i}\}_{i\in \mathcal{I}_b}\gets$ concatenate conditional features $\{Y^{(\pi_t(i))}\}_{i\in\mathcal{I}_b}$ with source of noise $\{Z^{(t),i}\}_{i\in \mathcal{I}_b}$\;
        Form minibatch $\mathcal{B}_{\tilde{Z}}^{(t),b}\gets\{\tilde{Z}^{(t),i}:i\in\mathcal{I}_b\}$\; 
        \vspace{0.1cm}
        \tcp*[h]{Compute gradient}\;
        $G_{\theta} \gets \nabla_\theta \left[ W_1\left(\widehat{P}_X^{(t),b},\widehat{P}_{G_{\theta}(\tilde{Z})}^{(t),b} \right)+\sum_{j=1}^d\lambda_jW_1\left((p_j)_{*}\widehat{P}_X^{(t),b}, (p_j)_{*}\widehat{P}_{G_{\theta}(\tilde{Z})}^{(t),b}\right)\right]$\;
        \vspace{0.1cm}
        \tcp*[h]{Update parameters}\;
         $\theta \gets \theta - \text{AdamW}(\theta, G_\theta)$\; 
    }
}
\end{algorithm}

\section{Proofs of Theoretical Results from Section 4}\label{Sec: proofs}

In this section, we present the proofs of Theorems 4.1-4.4 and Corollary 4.3.1 from Section 4. 

\subsection{Proof of Theorem 4.1}

In this subsection, we present the proof of Theorem 4.1. 

\begin{proof}[Proof of Theorem 4.1]

By Theorem \ref{dual_represent}, we have
\begin{align}\label{E2}
   &\mathcal{D}(P_X,P_{\theta})-\mathcal{D}(\PX,P_{\theta})\nonumber\\
   =&\sup_{\phi\in\mathcal{F}_{\pmb{\lambda}}} \Big\{\int_{\mathbb{R}^d}\phi dP_X-\int_{\mathbb{R}^d}\phi dP_{\theta}\Big\}-\sup_{\phi\in\mathcal{F}_{\pmb{\lambda}}} \Big\{\int_{\mathbb{R}^d}\phi d\PX-\int_{\mathbb{R}^d}\phi dP_{\theta}\Big\}\nonumber\\
   \leq& \sup_{\phi\in\mathcal{F}_{\pmb{\lambda}}} \Big\{\Big(\int_{\mathbb{R}^d}\phi dP_X-\int_{\mathbb{R}^d}\phi dP_{\theta}\Big)-\Big(\int_{\mathbb{R}^d}\phi d\PX-\int_{\mathbb{R}^d}\phi dP_{\theta}\Big)\Big\}\nonumber\\
   =& \sup_{\phi\in\mathcal{F}_{\pmb{\lambda}}} \Big\{\int_{\mathbb{R}^d}\phi dP_X-\int_{\mathbb{R}^d}\phi d\PX\Big\}. 
\end{align}

Recall that $B_M=\{\mathbf{x}\in\mathbb{R}^d:\|\mathbf{x}\|_2\leq M\}$. For any $x^{(1)},\cdots,x^{(n)}\in B_M$, we define
\begin{equation*}
    H(x^{(1)},\cdots,x^{(n)}):=\sup_{\phi\in\mathcal{F}_{\pmb{\lambda}}} \Big\{\int_{\mathbb{R}^d}\phi dP_X-\frac{1}{n}\sum_{i=1}^n\phi(x^{(i)}) \Big\}.
\end{equation*}
Note that 
\begin{equation}\label{E2.1}
    \sup_{\phi\in\mathcal{F}_{\pmb{\lambda}}} \Big\{\int_{\mathbb{R}^d}\phi dP_X-\int_{\mathbb{R}^d}\phi d\PX\Big\}=H(X^{(1)},\cdots,X^{(n)}).
\end{equation}
For any $i\in [n]$ and $y^{(i)}\in B_M$, by the Cauchy-Schwarz inequality,
\begin{align}\label{E2.2}
    & \big|H(x^{(1)},\cdots,x^{(i)},\cdots,x^{(n)})-H(x^{(1)},\cdots,y^{(i)},\cdots,x^{(n)})\big|\leq \frac{1}{n}\sup_{\phi\in\mathcal{F}_{\pmb{\lambda}}}|\phi(x^{(i)})-\phi(y^{(i)})| \nonumber\\
    \leq& \frac{1}{n}\big\|x^{(i)}-y^{(i)}\big\|_2+\frac{1}{n}\sum_{j=1}^d \lambda_j \big|x^{(i)}_j-y^{(i)}_j\big|\leq \frac{1+\|\pmb{\lambda}\|_2}{n}\big\|x^{(i)}-y^{(i)}\big\|_2\leq \frac{2(1+\|\pmb{\lambda}\|_2)M}{n}.
\end{align}
Hereafter, we denote by $\mathbb{P}_X$ and $\mathbb{E}_X$ the probability and expectation operators with respect to $X^{(1)},\cdots,X^{(n)}\sim P_X$. 
By \eqref{E2.1}, \eqref{E2.2}, and McDiarmid's inequality (see, e.g., \cite[Theorem 6]{BLM2013}), for any $t\geq 0$,
\begin{align*}
    & \mathbb{P}_X\bigg(\sup_{\phi\in\mathcal{F}_{\pmb{\lambda}}} \Big\{\int_{\mathbb{R}^d}\phi dP_X-\int_{\mathbb{R}^d}\phi d\PX\Big\}> \mathbb{E}_X\Big[\sup_{\phi\in\mathcal{F}_{\pmb{\lambda}}} \Big\{\int_{\mathbb{R}^d}\phi dP_X-\int_{\mathbb{R}^d}\phi d\PX\Big\}\Big]+t\bigg)\nonumber\\
    =& \mathbb{P}_X\big(H(X^{(1)},\cdots,X^{(n)})> \mathbb{E}_X\big[H(X^{(1)},\cdots,X^{(n)})\big]+t\big)\leq \exp\Big(-\frac{n t^2}{2(1+\|\pmb{\lambda}\|_2)^2M^2}\Big).
\end{align*}
Taking $t=M(1+\|\pmb{\lambda}\|_2)\sqrt{2\log(\delta^{-1})\slash n}$, we obtain that with probability at least $1-\delta$,
\begin{align}\label{E2.3}
    &\sup_{\phi\in\mathcal{F}_{\pmb{\lambda}}} \Big\{\int_{\mathbb{R}^d}\phi dP_X-\int_{\mathbb{R}^d}\phi d\PX\Big\}\nonumber\\
    \leq & \mathbb{E}_X\Big[\sup_{\phi\in\mathcal{F}_{\pmb{\lambda}}} \Big\{\int_{\mathbb{R}^d}\phi dP_X-\int_{\mathbb{R}^d}\phi d\PX\Big\}\Big]+M(1+\|\pmb{\lambda}\|_2)\sqrt{\frac{2\log(\delta^{-1})}{n}}.
\end{align}

By the symmetrization inequality (see, e.g., \cite[Lemma 2.3.1]{VaartWellner}), we have
\begin{align}\label{E2.5}
    & \mathbb{E}_X\bigg[\sup_{\phi\in\mathcal{F}_{\pmb{\lambda}}} \Big\{\int_{\mathbb{R}^d}\phi dP_X-\int_{\mathbb{R}^d}\phi d\PX\Big\}\bigg]\nonumber\\
    =&\mathbb{E}_X\bigg[\sup_{\phi\in\mathcal{F}_{\pmb{\lambda}}} \Big\{\int_{\mathbb{R}^d}\big(\phi(x)-\phi(0)\big) dP_X(x)-\int_{\mathbb{R}^d}\big(\phi(x)-\phi(0)\big) d\PX(x)\Big\}\bigg]\nonumber\\
    \leq& \frac{2}{n} \mathbb{E}_{X,\pmb{\epsilon}}\bigg[\sup_{\phi\in\mathcal{F}_{\pmb{\lambda}}}\Big\{\sum_{i=1}^n \epsilon_i\big(\phi(X^{(i)})-\phi(0)\big)\Big\}\bigg]=2\mathbb{E}_X[\hat{R}_n(\mathcal{F}_{\pmb{\lambda}})],
\end{align}
in which $\pmb{\epsilon}:=(\epsilon_i)_{i=1}^n$ and $\{\epsilon_i\}_{i=1}^n$ are i.i.d. Rademacher random variables.  

Now for any $x^{(1)},\cdots,x^{(n)}\in B_M$, we let 
\begin{equation*}
   \tilde{H}(x^{(1)},\cdots,x^{(n)}):=n^{-1}\mathbb{E}_{\pmb{\epsilon}}\Big[\sup_{\phi\in\mathcal{F}_{\pmb{\lambda}}}\Big\{\sum_{i=1}^n\epsilon_i \big(\phi(x^{(i)})-\phi(0)\big)\Big\}\Big].
\end{equation*}
Note that $\hat{R}_n(\mathcal{F}_{\pmb{\lambda}})=\tilde{H}(X^{(1)},\cdots,X^{(n)})$. For any $i\in [n]$ and $y^{(i)}\in B_M$, arguing as in \eqref{E2.2}, we obtain that 
\begin{align*}
   &  \big|\tilde{H}(x^{(1)},\cdots,x^{(i)},\cdots,x^{(n)})-\tilde{H}(x^{(1)},\cdots,y^{(i)},\cdots,x^{(n)})\big|\nonumber\\
   \leq &\frac{1}{n}\sup_{\phi\in\mathcal{F}_{\pmb{\lambda}}}|\phi(x^{(i)})-\phi(y^{(i)})| \leq \frac{2(1+\|\pmb{\lambda}\|_2)M}{n}.
\end{align*}
Hence by a similar argument that leads to \eqref{E2.3}, we obtain that with probability at least $1-\delta$, 
\begin{equation}\label{E2.4}
    \hat{R}_n(\mathcal{F}_{\pmb{\lambda}})\geq \mathbb{E}_X[\hat{R}_n(\mathcal{F}_{\pmb{\lambda}})]-M(1+\|\pmb{\lambda}\|_2)\sqrt{\frac{2\log(\delta^{-1})}{n}}.
\end{equation}

Combining \eqref{E2} and \eqref{E2.3}-\eqref{E2.4}, we conclude that with probability at least $1-2\delta$, 
\begin{equation*}
    \mathcal{D}(P_X,P_{\theta})-\mathcal{D}(\PX,P_{\theta})\leq 2\hat{R}_n(\mathcal{F}_{\pmb{\lambda}})+3M(1+\|\pmb{\lambda}\|_2)\sqrt{\frac{2\log(\delta^{-1})}{n}}. 
\end{equation*}

\end{proof}

\subsection{Proof of Theorem 4.2}

The proof of Theorem 4.2 relies on the following result on the convergence rate of the empirical measure $\PX$ to the true data generating distribution $P_X$. We refer the reader to \cite[Chapter 2]{chewi2024statistical} for a proof of this result and a review of related literature; see also \cite{fournier2015rate, weed2019sharp}.   

\begin{prop}\label{Pro2.1}
Assume that the support of $P_X$ is contained in $B_M=\{\mathbf{x}\in\mathbb{R}^d:\|\mathbf{x}\|_2\leq M\}$. Then there exists a deterministic constant $C$ that only depends on $M$, such that 
\begin{equation*}
    \mathbb{E}[W_1(\PX,P_X)]\leq C\sqrt{d}\cdot 
    \begin{cases}
        n^{-1\slash 2} & \text{ if } d=1\\
        (\log{n}\slash n)^{1\slash 2} & \text{ if }d=2\\
        n^{-1\slash d} & \text{ if }d\geq 3
    \end{cases}, 
\end{equation*}
and for any $j\in [d]$,
\begin{equation*}
    \mathbb{E}[W_1(\PXj,P_{X,j})]\leq C n^{-1\slash 2}.
\end{equation*}
\end{prop}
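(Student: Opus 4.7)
The plan is to treat the marginal and joint bounds separately, since the marginal bound admits an elementary one-dimensional argument while the joint bound requires a multiscale dyadic decomposition in the spirit of Fournier-Guillin and Weed-Bach.

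For the marginal bound (which also disposes of the $d=1$ case of the joint bound), I will exploit the fact that each pushforward $(p_j)_{*}P_X$ is supported in $[-M,M]$. Applying the classical identity $W_1(\mu,\nu)=\int_{\mathbb{R}}|F_{\mu}(x)-F_{\nu}(x)|\,dx$ for measures on $\mathbb{R}$, together with Fubini's theorem, yields
\begin{equation*}
\mathbb{E}[W_1(\PXj,P_{X,j})]=\int_{-M}^{M}\mathbb{E}\big|F_{\PXj}(x)-F_{P_{X,j}}(x)\big|\,dx.
\end{equation*}
For each fixed $x$, $nF_{\PXj}(x)$ is Binomial$(n,F_{P_{X,j}}(x))$, so Jensen's inequality gives the pointwise bound $\mathbb{E}|F_{\PXj}(x)-F_{P_{X,j}}(x)|\leq 1/(2\sqrt{n})$. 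Integrating over $[-M,M]$ produces the desired $O(n^{-1/2})$ rate with a constant depending only on $M$.

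For $d\geq 2$, my plan is a dyadic partition argument on $B_M$ together with an explicit cube-by-cube transport construction. Fix a maximum depth $L$ (to be optimized) and, for each scale $k\in\{0,\ldots,L\}$, partition $B_M$ into at most $N_k\lesssim 2^{kd}$ cubes $\{Q_{k,i}\}$ of side $M2^{-k}$ and diameter $\sqrt{d}\,M2^{-k}$. I will build a transport between $\PX$ and $P_X$ that at scale $k$ only rebalances mass between cubes sharing a common parent; the cost at scale $k$ is then bounded by the diameter at that scale times the total mass imbalance, leading to
\begin{equation*}
\mathbb{E}[W_1(\PX,P_X)]\leq C\sqrt{d}\,M\sum_{k=0}^{L}2^{-k}\sum_{i}\mathbb{E}\big|\PX(Q_{k,i})-P_X(Q_{k,i})\big|+2\sqrt{d}\,M\cdot 2^{-L}.
\end{equation*}
Invoking Cauchy-Schwarz with the Bernoulli variance bound $\mathbb{E}|\PX(Q_{k,i})-P_X(Q_{k,i})|\leq\sqrt{P_X(Q_{k,i})/n}$ gives $\sum_{i}\mathbb{E}|\PX(Q_{k,i})-P_X(Q_{k,i})|\lesssim 2^{kd/2}/\sqrt{n}$. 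The three cases of the statement then emerge from optimizing $L$: the series $\sum_{k}2^{k(d/2-1)}$ is summable for $d=1$ (giving the $n^{-1/2}$ rate), diverges logarithmically for $d=2$ (giving $\sqrt{\log n/n}$ after choosing $L\sim\log_2 n$), and diverges geometrically for $d\geq 3$, where balancing the series against the truncation term $2^{-L}$ produces $2^{L}\sim n^{1/d}$ and the rate $n^{-1/d}$.

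The main obstacle will be careful bookkeeping of constants so that the prefactor $\sqrt{d}$ in the statement emerges cleanly from the cube diameter and the Cauchy-Schwarz step, without introducing hidden dimension-dependent factors from the partition-counting constants or from the balancing of scales. Since the underlying inequality is classical and the detailed constants are worked out in \cite{chewi2024statistical}, my proof will rely on that reference for the precise bookkeeping rather than reproduce the full multiscale calculation in detail.
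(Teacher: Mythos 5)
The paper does not prove Proposition \ref{Pro2.1} at all; it simply cites \cite[Chapter 2]{chewi2024statistical} and the earlier literature (\cite{fournier2015rate, weed2019sharp}). Your sketch therefore cannot be compared against an in-paper argument, but it can be assessed on its own.

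Your one-dimensional argument for the marginals is correct and self-contained: writing $W_1$ on the line as the $L^1$ distance between CDFs, reducing $\mathbb{E}|F_{\PXj}(x)-F_{P_{X,j}}(x)|$ to a binomial variance computation via Jensen, and integrating over $[-M,M]$ gives the $O(Mn^{-1/2})$ rate cleanly. The multiscale dyadic argument is likewise the right tool for $d\geq 3$: with cubes of diameter $\sqrt{d}\,M2^{-k}$, the Cauchy--Schwarz step $\sum_i \sqrt{P_X(Q_{k,i})/n}\leq 2^{kd/2}/\sqrt{n}$, and the final balance $2^{L(d/2-1)}/\sqrt{n}\approx 2^{-L}$ yield $L\sim d^{-1}\log_2 n$ and the $n^{-1/d}$ rate with a $\sqrt{d}$ prefactor from the cube diameter, as you describe.

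There is, however, a genuine gap in your $d=2$ case. You claim the series $\sum_{k\leq L}2^{k(d/2-1)}$ "diverges logarithmically" and that choosing $L\sim \log_2 n$ yields $(\log n/n)^{1/2}$. It does not. For $d=2$ each scale contributes $2^{-k}\cdot 2^{k}/\sqrt{n}=1/\sqrt{n}$, so the sum up to $L$ is $(L+1)/\sqrt{n}$; adding the truncation term $\sqrt{d}\,M\,2^{-L}$ and optimizing gives $L\approx\tfrac12\log_2 n$ and a bound of order $\log n/\sqrt{n}$, not $\sqrt{\log n/n}$. No choice of $L$ (including $L\sim\log_2 n$, which gives the sum term $\log n/\sqrt{n}$ and truncation $1/n$) closes that factor of $\sqrt{\log n}$. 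This is a known limitation of the naive dyadic/Cauchy--Schwarz bound: it is precisely what \cite{fournier2015rate} obtain ($n^{-1/2}\log(1+n)$ for $d=2$), whereas the sharper $\sqrt{\log n/n}$ rate stated in the proposition goes back to Ajtai--Koml\'os--Tusn\'ady and its generalizations, which require a fundamentally different argument (Fourier-analytic or PDE-based) rather than scale-by-scale Cauchy--Schwarz. If you intend to prove the proposition as stated, the $d=2$ case must be handled by a different technique or you must settle for the weaker $n^{-1/2}\log n$ bound (which, incidentally, would still suffice for the downstream uses in the paper). As written, the $d=2$ step would fail to reproduce the claimed rate.
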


Based on Proposition \ref{Pro2.1}, we complete the proof of Theorem 4.2 as follows. 

\begin{proof}[Proof of Theorem 4.2]

By the definition of $\hat{\theta}$, we have
\begin{equation*}
    W_1(\PX,P_{\hat{\theta}})+\sum_{j=1}^d \lambda_j W_1(\PXj,P_{\hat{\theta},j}) =\mathcal{D}(\PX,P_{\hat{\theta}})\leq \inf_{\theta\in\Theta}\{\mathcal{D}(\PX,P_{\theta})\}. 
\end{equation*}
Hence by Proposition \ref{Pro2.1},
\begin{align*}
     \mathbb{E}[W_1(P_X,P_{\hat{\theta}})]\leq& \mathbb{E}[W_1(\PX,P_{\hat{\theta}})]+\mathbb{E}[W_1(\PX,P_X)]\nonumber\\
    \leq & \mathbb{E}\big[\inf_{\theta\in\Theta}\{\mathcal{D}(\PX,P_{\theta})\}\big]+C\sqrt{d}\cdot 
    \begin{cases}
        n^{-1\slash 2} & \text{ if } d=1\\
        (\log{n}\slash n)^{1\slash 2} & \text{ if }d=2\\
        n^{-1\slash d} & \text{ if }d\geq 3
    \end{cases},
\end{align*}
and for any $j\in [d]$,
\begin{align*}
    \mathbb{E}[W_1(P_{X,j},P_{\hat{\theta},j})]&\leq \mathbb{E}[W_1(\PXj,P_{\hat{\theta},j})]+\mathbb{E}[W_1(\PXj,P_{X,j})]\nonumber\\
    &\leq \lambda_j^{-1}\mathbb{E}\big[\inf_{\theta\in\Theta}\{\mathcal{D}(\PX,P_{\theta})\}\big]+C n^{-1\slash 2},
\end{align*}
where $C$ is a deterministic constant that only depends on $M$. 
    
\end{proof}

\subsection{Proofs of Theorems 4.3-4.4 and Corollary 4.3.1}\label{Sect.4.3}

In this subsection, we present the proofs of Theorems 4.3-4.4 and Corollary 4.3.1. 

We start with the following two lemmas. For any function $f:\mathbb{R}\rightarrow\mathbb{R}$, we denote by $\|f\|_{\mathrm{Lip}}$ the Lipschitz norm of $f$.

\begin{lemma}\label{Lem:KS}
For any two probability distributions $\mu$ and $\mu'$ on $\mathbb{R}$ and any $t\in\mathbb{R},\gamma>0$, we have
\begin{equation}\label{KSE1}
  \mu((-\infty,t-\gamma])-\gamma^{-1} W_1(\mu,\mu')\leq  \mu'((-\infty,t])\leq \mu((-\infty,t+\gamma])+ \gamma^{-1}W_1(\mu,\mu').
\end{equation}
\end{lemma}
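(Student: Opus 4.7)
The plan is to prove both inequalities by constructing $1/\gamma$-Lipschitz piecewise-linear approximations to the indicator $\mathbf{1}_{(-\infty,t]}$, and then invoking the Kantorovich-Rubinstein duality $W_1(\mu,\mu')=\sup_{f\in\mathrm{Lip}_{1,1}}\{\int f\,d\mu-\int f\,d\mu'\}$ as recalled in Section \ref{Sec:prelim}. The trapezoidal cutoff is the standard device that converts a Wasserstein bound into a CDF bound, and it is what drives the factor $\gamma^{-1}$ on the right-hand side.

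For the upper bound $\mu'((-\infty,t])\leq \mu((-\infty,t+\gamma])+\gamma^{-1}W_1(\mu,\mu')$, I would define
\begin{equation*}
f_\gamma(x):=\begin{cases}1 & x\leq t,\\ 1-\gamma^{-1}(x-t) & t<x\leq t+\gamma,\\ 0 & x> t+\gamma.\end{cases}
\end{equation*}
Then $\mathbf{1}_{(-\infty,t]}\leq f_\gamma\leq \mathbf{1}_{(-\infty,t+\gamma]}$ pointwise, and $\gamma f_\gamma\in\mathrm{Lip}_{1,1}$. Hence
\begin{equation*}
\mu'((-\infty,t])\leq \int f_\gamma\,d\mu'=\int f_\gamma\,d\mu+\Big(\int f_\gamma\,d\mu'-\int f_\gamma\,d\mu\Big)\leq \mu((-\infty,t+\gamma])+\gamma^{-1}W_1(\mu,\mu'),
\end{equation*}
where the last step uses $\int f_\gamma\,d\mu'-\int f_\gamma\,d\mu=\gamma^{-1}\bigl(\int \gamma f_\gamma\,d\mu'-\int \gamma f_\gamma\,d\mu\bigr)\leq \gamma^{-1}W_1(\mu,\mu')$ by the dual formulation.

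For the lower bound $\mu((-\infty,t-\gamma])-\gamma^{-1}W_1(\mu,\mu')\leq \mu'((-\infty,t])$, I would mirror the construction with
\begin{equation*}
g_\gamma(x):=\begin{cases}1 & x\leq t-\gamma,\\ 1-\gamma^{-1}(x-(t-\gamma)) & t-\gamma<x\leq t,\\ 0 & x>t,\end{cases}
\end{equation*}
which satisfies $\mathbf{1}_{(-\infty,t-\gamma]}\leq g_\gamma\leq \mathbf{1}_{(-\infty,t]}$ and $\gamma g_\gamma\in\mathrm{Lip}_{1,1}$. The same duality argument applied in the opposite direction yields $\int g_\gamma\,d\mu-\int g_\gamma\,d\mu'\leq \gamma^{-1}W_1(\mu,\mu')$, and therefore
\begin{equation*}
\mu((-\infty,t-\gamma])\leq \int g_\gamma\,d\mu\leq \int g_\gamma\,d\mu'+\gamma^{-1}W_1(\mu,\mu')\leq \mu'((-\infty,t])+\gamma^{-1}W_1(\mu,\mu').
\end{equation*}

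There is no real obstacle here: the only delicate point is verifying that the sandwich inequalities on $f_\gamma$ and $g_\gamma$ hold at the boundary points $t,t\pm\gamma$ (which they do by the closed-endpoint convention in $\mathbf{1}_{(-\infty,\cdot]}$), and that the Lipschitz constant is exactly $\gamma^{-1}$ so that scaling by $\gamma$ places the test function in $\mathrm{Lip}_{1,1}$. Both checks are immediate from the piecewise-linear definition.
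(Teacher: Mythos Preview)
Your proof is correct and follows essentially the same approach as the paper: both use the piecewise-linear ``trapezoid'' $\psi_{t,\gamma}$ (your $f_\gamma$) together with the Kantorovich--Rubinstein duality to sandwich the CDF. The only cosmetic difference is that the paper writes down a single function $\psi_{t,\gamma}$ and obtains the lower bound by the substitution $t\mapsto t-\gamma$, whereas you introduce a second function $g_\gamma$ explicitly; since $g_\gamma=\psi_{t-\gamma,\gamma}$, the arguments are identical.
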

\begin{proof}
By the dual formulation of the $1$-Wasserstein distance \cite[Remark 6.5]{villani2009optimal}, we have  
\begin{equation}\label{dual.W1}
    W_1(\mu,\mu')=\sup_{f\in\mathrm{Lip}_{1,1}}\Big\{\int_{\mathbb{R}} fd\mu-\int_{\mathbb{R}} fd\mu'\Big\}, 
\end{equation}
where $\mathrm{Lip}_{1,1}$ is the set of $1$-Lipschitz functions on $\mathbb{R}$. For any $t\in\mathbb{R}$, $\gamma>0$, and $x\in\mathbb{R}$, we define
\begin{equation*}
    \psi_{t,\gamma}(x) := \begin{cases}
        1 & x\leq t\\
         1-\frac{x-t}{\gamma} & t<x<t+\gamma\\
        0 & x\geq t+\gamma
    \end{cases}. 
\end{equation*}
Note that  $\|\psi_{t,\gamma}\|_{\mathrm{Lip}}\leq \gamma^{-1}$. Hence by \eqref{dual.W1},
\begin{equation}\label{E4.1}
    \Big|\int_{\mathbb{R}}\psi_{t,\gamma}d\mu-\int_{\mathbb{R}}\psi_{t,\gamma}d\mu'\Big|\leq \gamma^{-1} W_1(\mu,\mu'). 
\end{equation}
Now note that
\begin{equation}\label{E4.2}
  \mu((-\infty,t])-\mu'((-\infty,t+\gamma]) \leq \int_{\mathbb{R}}\psi_{t,\gamma}d\mu-\int_{\mathbb{R}}\psi_{t,\gamma}d\mu'\leq \mu((-\infty,t+\gamma])-\mu'((-\infty,t]).
\end{equation}
Combining \eqref{E4.1} and \eqref{E4.2}, we have
\begin{equation}\label{E4.5}
    \mu((-\infty,t+\gamma])-\mu'((-\infty,t])\geq -\gamma^{-1} W_1(\mu,\mu'),
\end{equation}
\begin{equation}\label{E4.3}
  \mu((-\infty,t])-\mu'((-\infty,t+\gamma])\leq \gamma^{-1} W_1(\mu,\mu'). 
\end{equation}
Replacing $t$ by $t-\gamma$ in \eqref{E4.3}, we obtain that
\begin{equation}\label{E4.4}
    \mu((-\infty,t-\gamma])-\mu'((-\infty,t])\leq  \gamma^{-1} W_1(\mu,\mu'). 
\end{equation}
\eqref{KSE1} follows from \eqref{E4.5} and \eqref{E4.4}. 

\end{proof}

\begin{lemma}\label{Lemma2.2}
For any $\delta\in (0,1)$, let $\mathcal{E}_{\delta}$ be the event that for all $j\in [d]$,
\begin{align}\label{E4.6}
   & \sup_{t\in\mathbb{R}}\big|\PXj((-\infty,t])-P_{X,j}((-\infty,t])\big|\leq \sqrt{\frac{\log(4d\slash \delta)}{2n}},\nonumber\\
   & \sup_{t\in\mathbb{R}}\big|\PXj([t,\infty))-P_{X,j}([t,\infty))\big|\leq \sqrt{\frac{\log(4d\slash \delta)}{2n}}.
\end{align}
Then we have $\mathbb{P}(\mathcal{E}_{\delta}^c)\leq\delta$. Moreover, when the event $\mathcal{E}_{\delta}$ holds, for any $j\in [d]$ and $t>0$, 
\begin{align}
    \big|\PXj((-t,t)^c)-P_{X,j}((-t,t)^c)\big|\leq \sqrt{\frac{2\log(4d\slash \delta)}{n}};
\end{align}
for any $j\in [d]$ and $t_1<t_2$, 
\begin{align}
    \big|\PXj([t_1,t_2])-P_{X,j}([t_1,t_2])\big|\leq \sqrt{\frac{2\log(4d\slash \delta)}{n}}. 
\end{align}
\end{lemma}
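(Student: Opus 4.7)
The plan is to derive the three claims in sequence: first, a union bound over $2d$ applications of the Dvoretzky--Kiefer--Wolfowitz (DKW) inequality; then two elementary triangle inequality arguments for the tail- and interval-probability bounds on the event $\mathcal{E}_\delta$.

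For the probability bound $\mathbb{P}(\mathcal{E}_\delta^c)\leq\delta$, I would first fix $j\in[d]$ and observe that $\PXj$ is the empirical measure of the i.i.d. sample $\{p_j(X^{(i)})\}_{i=1}^n$ drawn from $P_{X,j}$. Applying DKW to the CDF $t\mapsto P_{X,j}((-\infty,t])$ gives
\begin{equation*}
\mathbb{P}\Big(\sup_{t\in\mathbb{R}}\big|\PXj((-\infty,t])-P_{X,j}((-\infty,t])\big|>\varepsilon\Big)\leq 2\exp(-2n\varepsilon^2),
\end{equation*}
and applying DKW analogously to the survival function $t\mapsto P_{X,j}([t,\infty))$ yields the same tail bound. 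Choosing $\varepsilon=\sqrt{\log(4d/\delta)/(2n)}$ makes each of these $2d$ probabilities at most $\delta/(2d)$, and a union bound over the $2d$ events produces $\mathbb{P}(\mathcal{E}_\delta^c)\leq\delta$.

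For the tail bound, I would use the disjoint decomposition $(-t,t)^c=(-\infty,-t]\cup[t,\infty)$, valid for any $t>0$, so that both $\PXj((-t,t)^c)$ and $P_{X,j}((-t,t)^c)$ split additively over these pieces. The triangle inequality then gives
\begin{equation*}
\big|\PXj((-t,t)^c)-P_{X,j}((-t,t)^c)\big|\leq\big|\PXj((-\infty,-t])-P_{X,j}((-\infty,-t])\big|+\big|\PXj([t,\infty))-P_{X,j}([t,\infty))\big|,
\end{equation*}
and on $\mathcal{E}_\delta$ both summands are bounded by $\sqrt{\log(4d/\delta)/(2n)}$, yielding the claimed $\sqrt{2\log(4d/\delta)/n}$.

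For the interval bound, the key identity is $\mu([t_1,t_2])=\mu((-\infty,t_2])+\mu([t_1,\infty))-1$, applied to both $\mu=\PXj$ and $\mu=P_{X,j}$. Subtracting and using the triangle inequality reduces the error to the sum of a CDF-type error at $t_2$ and a survival-function-type error at $t_1$, each bounded by $\sqrt{\log(4d/\delta)/(2n)}$ on $\mathcal{E}_\delta$. Summing gives the claimed $\sqrt{2\log(4d/\delta)/n}$. There is no substantive obstacle in this argument; the only subtle point is the choice to invoke DKW for both the CDF \emph{and} the survival function (rather than just the CDF, which would force working with left limits to handle atoms at the endpoints cleanly), which is precisely why the union bound runs over $2d$ rather than $d$ events.
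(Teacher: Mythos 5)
Your proposal is correct and follows essentially the same route as the paper: $2d$ applications of DKW (to the CDF and to the survival function, precisely to avoid the left-limit bookkeeping at atoms that you mention), a union bound with $\varepsilon=\sqrt{\log(4d/\delta)/(2n)}$, and triangle-inequality decompositions on $\mathcal{E}_\delta$. The only cosmetic difference is in the interval bound, where you invoke the inclusion–exclusion identity $\mu([t_1,t_2])=\mu((-\infty,t_2])+\mu([t_1,\infty))-1$ directly, whereas the paper first passes to the complement $[t_1,t_2]^c=(-\infty,t_1)\cup(t_2,\infty)$ and then rewrites each half-line error as the corresponding closed-half-line error; these are algebraically equivalent.
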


\begin{proof}

By the Dvoretzky–Kiefer–Wolfowitz inequality \citep{dvoretzky1956asymptotic, massart1990tight}, for any $j\in [d]$ and $s\geq 0$,
\begin{equation}\label{E4.10}
    \mathbb{P}\big(\sup_{t\in\mathbb{R}}\big|\PXj((-\infty,t])-P_{X,j}((-\infty,t])\big|> s\big)\leq 2\exp(-2n s^2). 
\end{equation}
Similarly, for any $j\in [d]$ and $s\geq 0$,
\begin{equation}\label{E4.10n}
    \mathbb{P}\big(\sup_{t\in\mathbb{R}}\big|\PXj([t,\infty))-P_{X,j}([t,\infty))\big|> s\big)\leq 2\exp(-2n s^2). 
\end{equation}
Taking $s=\sqrt{\log(4d\slash \delta)\slash 2n}$ in \eqref{E4.10}-\eqref{E4.10n} and applying the union bound, we obtain that $\mathbb{P}(\mathcal{E}_{\delta}^c)\leq\delta$. 

Now assume that the event $\mathcal{E}_{\delta}$ holds. For any $j\in [d]$ and $t>0$, 
\begin{align*}
   & \big|\PXj((-t,t)^c)-P_{X,j}((-t,t)^c)\big|\nonumber\\
   =& \big|\PXj((-\infty,-t])+\PXj([t,\infty))-P_{X,j}((-\infty,-t])-P_{X,j}([t,\infty))\big| \nonumber\\
   \leq& \big|\PXj((-\infty,-t])-P_{X,j}((-\infty,-t])\big|+\big|\PXj([t,\infty))-P_{X,j}([t,\infty))\big|\nonumber\\
   \leq& 2\sqrt{\frac{\log(4d\slash \delta)}{2n}}=\sqrt{\frac{2\log(4d\slash \delta)}{n}}.
\end{align*}
For any $j\in [d]$ and $t_1<t_2$, 
\begin{align*}
    & \big|\PXj([t_1,t_2])-P_{X,j}([t_1,t_2])\big|=\big|\PXj([t_1,t_2]^c)-P_{X,j}([t_1,t_2]^c)\big|\nonumber\\
   =& \big|\PXj((-\infty,t_1))+\PXj((t_2,\infty))-P_{X,j}((-\infty,t_1))-P_{X,j}((t_2,\infty))\big| \nonumber\\
   \leq& \big|\PXj((-\infty,t_1))-P_{X,j}((-\infty,t_1))\big|+\big|\PXj((t_2,\infty))-P_{X,j}((t_2,\infty))\big|\nonumber\\
   =& \big|\PXj([t_1,\infty))-P_{X,j}([t_1,\infty))\big|+\big|\PXj((-\infty,t_2])-P_{X,j}((-\infty,t_2])\big|\nonumber\\
   \leq& 2\sqrt{\frac{\log(4d\slash \delta)}{2n}}=\sqrt{\frac{2\log(4d\slash \delta)}{n}}.
\end{align*}

\end{proof}

\begin{proof}[Proof of Theorem 4.3]

We fix any $\delta\in (0,1)$, and take $\mathcal{E}_{\delta,1}$ to be the event that for all $j\in [d]$ and $t_1<t_2$, 
\begin{align}\label{E4.13}
   \big|\PXj([t_1,t_2])-P_{X,j}([t_1,t_2])\big|\leq \sqrt{\frac{2\log(8d\slash \delta)}{n}}.
\end{align}
By Lemma \ref{Lemma2.2}, we have $\mathbb{P}(\mathcal{E}_{\delta,1}^c)\leq \delta\slash 2$.

We define
\begin{align*}
    \mathcal{K}:=([-\Delta-r,-\Delta+r]\cup [\Delta-r,\Delta+r])^d,\quad \mathcal{K}_0:=[-\Delta-r,-\Delta+r]^d\cup [\Delta-r,\Delta+r]^d,
\end{align*}
and let 
$\mathcal{E}_{\delta,2}$ be the event that 
\begin{equation}\label{E4.20}
    \PX(\mathbb{R}^d\backslash\mathcal{K}_0)\leq d\psi(r)+\sqrt{\frac{\log(8\slash \delta)}{2n}}.
\end{equation}
Define $W:=\PX(\mathbb{R}^d\backslash\mathcal{K}_0)=n^{-1}\sum_{i=1}^n\mathbbm{1}_{X^{(i)}\in\mathbb{R}^d\backslash\mathcal{K}_0}$. By Hoeffding's inequality (see, e.g., \cite[Theorem 2.8]{BLM2013}), for any $t\geq 0$,
\begin{equation*}
    \mathbb{P}(W-\mathbb{E}[W]\geq t)\leq \exp(-2n t^2),
\end{equation*}
where $\mathbb{E}[W]= P_X(\mathbb{R}^d\backslash \mathcal{K}_0)\leq \sum_{j=1}^d\rho_j([-r,r]^c)\leq d\psi(r)$. Taking $t=\sqrt{\log(8\slash \delta)\slash 2n}$, we obtain that $\mathbb{P}(\mathcal{E}_{\delta,2}^c)\leq\delta\slash 8$. 

Similarly, letting $\mathcal{E}_{\delta,3}$ be the event that
\begin{align}\label{E4.23}
    &|\PX([-\Delta-r,-\Delta+r]^d)-P_X([-\Delta-r,-\Delta+r]^d)|\leq \sqrt{\frac{\log(16\slash \delta)}{2n}},\nonumber\\
    &|\PX([\Delta-r,\Delta+r]^d)-P_X([\Delta-r,\Delta+r]^d)|\leq \sqrt{\frac{\log(16\slash \delta)}{2n}},
\end{align}
we can deduce using Hoeffding's inequality that $\mathbb{P}(\mathcal{E}_{\delta,3}^c)\leq\delta\slash 4$.

Throughout the rest of the proof, we assume that the event $\mathcal{E}_{\delta,1}\cap \mathcal{E}_{\delta,2} \cap \mathcal{E}_{\delta,3}$ holds.

Consider any $k\in [2]$ and $j\in [d]$. Recall that $\mu_{k,j}$ is the $j$th coordinate of $\pmb{\mu}_k$. By the definition of $\hat{\theta}$ (the solution to the constrained minimization problem in equation (\textcolor{red}{4.3}) of the paper), we have $W_1(\PXj,P_{\hat{\theta},j})\leq \epsilon_j$. From Lemma \ref{Lem:KS}, we can deduce that for any $\gamma>0$, 
\begin{align}
 & \PXj([\mu_{k,j}-r+\gamma,\mu_{k,j}+r-\gamma])-2\gamma^{-1}\epsilon_j\nonumber\\
  \leq &  P_{\hat{\theta},j}([\mu_{k,j}-r,\mu_{k,j}+r])\leq \PXj([\mu_{k,j}-r-\gamma,\mu_{k,j}+r+\gamma])+2\gamma^{-1}\epsilon_j.
\end{align}
Hence
\begin{align*}
    \big|P_{\hat{\theta},j}([\mu_{k,j}-r,\mu_{k,j}+r])-P_{X,j}([\mu_{k,j}-r,\mu_{k,j}+r])\big| \leq 2\gamma^{-1}\epsilon_j+2M\gamma+\sqrt{\frac{2\log(8d\slash \delta)}{n}}. 
\end{align*}
Taking $\gamma=\sqrt{\epsilon_j\slash M}$, we obtain that
\begin{align}\label{E4.14}
    \big|P_{\hat{\theta},j}([\mu_{k,j}-r,\mu_{k,j}+r])-P_{X,j}([\mu_{k,j}-r,\mu_{k,j}+r])\big| \leq 4\sqrt{M\epsilon_j}+\sqrt{\frac{2\log(8d\slash \delta)}{n}}.
\end{align}
Therefore, we have
\begin{align}\label{E4.15}
   & \big|P_{\hat{\theta},j}(([-\Delta-r,-\Delta+r]\cup [\Delta-r,\Delta+r])^c)\nonumber\\
   & -P_{X,j}(([-\Delta-r,-\Delta+r]\cup [\Delta-r,\Delta+r])^c)\big|\nonumber\\
   =& \big|P_{\hat{\theta},j}([-\Delta-r,-\Delta+r]\cup [\Delta-r,\Delta+r])\nonumber\\
   &-P_{X,j}([-\Delta-r,-\Delta+r]\cup [\Delta-r,\Delta+r])\big|\nonumber\\
   =& \big|P_{\hat{\theta},j}([-\Delta-r,-\Delta+r])+P_{\hat{\theta},j}([\Delta-r,\Delta+r])\nonumber\\
   &-P_{X,j}([-\Delta-r,-\Delta+r])-P_{X,j}([\Delta-r,\Delta+r])\big|\nonumber\\
   \leq & \big|P_{\hat{\theta},j}(([-\Delta-r,-\Delta+r])-P_{X,j}(([-\Delta-r,-\Delta+r])\big|\nonumber\\
   & + \big|P_{\hat{\theta},j}(([\Delta-r,\Delta+r])-P_{X,j}([\Delta-r,\Delta+r])\big|\nonumber\\
   \leq & 8\sqrt{M\epsilon_j}+2\sqrt{\frac{2\log(8d\slash \delta)}{n}}.
\end{align}

Note that
\begin{equation*}
    \mathbb{R}^d\backslash\mathcal{K}=\bigcup_{j=1}^d \{\mathbf{x}=(x_1,\cdots,x_j)\in\mathbb{R}^d:x_j\in([-\Delta-r,-\Delta+r]\cup [\Delta-r,\Delta+r])^c \}.
\end{equation*}
Hence by \eqref{E4.15} and the union bound,
\begin{align}\label{E4.22}
    P_{\hat{\theta}}(\mathbb{R}^d\backslash\mathcal{K})\leq&\sum_{j=1}^d P_{\hat{\theta}}(\{\mathbf{x}=(x_1,\cdots,x_j)\in\mathbb{R}^d:x_j\in([-\Delta-r,-\Delta+r]\cup [\Delta-r,\Delta+r])^c \})\nonumber\\
    =&\sum_{j=1}^d P_{\hat{\theta},j}(([-\Delta-r,-\Delta+r]\cup [\Delta-r,\Delta+r])^c)\nonumber\\
    \leq&\sum_{j=1}^d P_{X,j}(([-\Delta-r,-\Delta+r]\cup [\Delta-r,\Delta+r])^c)\nonumber\\
    &+8d\sqrt{M\max_{j\in[d]}\{\epsilon_j\}}+2d\sqrt{\frac{2\log(8d\slash \delta)}{n}}\nonumber\\
    \leq&\sum_{j=1}^d\rho_{j}([-r,r]^c)+8d\sqrt{M\max_{j\in[d]}\{\epsilon_j\}}+2d\sqrt{\frac{2\log(8d\slash \delta)}{n}}\nonumber\\
    \leq& d\psi(r)+8d\sqrt{M\max_{j\in[d]}\{\epsilon_j\}}+2d\sqrt{\frac{2\log(8d\slash \delta)}{n}}.
\end{align}

By the definition of $\hat{\mathcal{R}}$, 
\begin{align}\label{E4.16}
    W_1(\PX,P_{\hat{\theta}})\leq \hat{\mathcal{R}}. 
\end{align}
Let $\Upsilon$ be the optimal coupling of $P_{\hat{\theta}}$ and $\PX$ on $\mathbb{R}^d\times\mathbb{R}^d$, such that
\begin{equation}\label{E4.17}
    \int_{\mathbb{R}^d\times\mathbb{R}^d} \|\mathbf{x}-\mathbf{y}\|_2 d\Upsilon(\mathbf{x},\mathbf{y}) = W_1(\PX,P_{\hat{\theta}}). 
\end{equation}
For any $(\mathbf{x},\mathbf{y})\in [-\Delta-r,-\Delta+r]^d\times ((\mathcal{K}\backslash \mathcal{K}_0)\cup [\Delta-r,\Delta+r]^d)$, we have $\|\mathbf{x}-\mathbf{y}\|_2\geq 2(\Delta-r)$. Hence by \eqref{E4.16} and \eqref{E4.17},
\begin{equation}\label{E4.18}
    \Upsilon([-\Delta-r,-\Delta+r]^d\times ((\mathcal{K}\backslash \mathcal{K}_0)\cup [\Delta-r,\Delta+r]^d))\leq \frac{\hat{\mathcal{R}}}{ 2(\Delta-r)}. 
\end{equation}
Similarly,
\begin{equation}\label{E4.19}
    \Upsilon(
    ((\mathcal{K}\backslash \mathcal{K}_0)\cup [\Delta-r,\Delta+r]^d)
    \times [-\Delta-r,-\Delta+r]^d)\leq \frac{\hat{\mathcal{R}}}{ 2(\Delta-r)}.
\end{equation}

Now by \eqref{E4.20} and \eqref{E4.18},
\begin{align*}
   & P_{\hat{\theta}}([-\Delta-r,-\Delta+r]^d)-\PX([-\Delta-r,-\Delta+r]^d) \nonumber\\
   \leq& P_{\hat{\theta}}([-\Delta-r,-\Delta+r]^d)-\Upsilon([-\Delta-r,-\Delta+r]^d\times[-\Delta-r,-\Delta+r]^d)\nonumber\\
   =&\Upsilon([-\Delta-r,-\Delta+r]^d\times((\mathbb{R}^d\backslash\mathcal{K})\cup (\mathcal{K}\backslash\mathcal{K}_0)\cup[\Delta-r,\Delta+r]^d))\nonumber\\
   \leq& \Upsilon([-\Delta-r,-\Delta+r]^d\times(\mathbb{R}^d\backslash\mathcal{K}))+\Upsilon([-\Delta-r,-\Delta+r]^d\times( (\mathcal{K}\backslash\mathcal{K}_0)\cup[\Delta-r,\Delta+r]^d))\nonumber\\
 \leq& \PX(\mathbb{R}^d\backslash\mathcal{K})+ \Upsilon([-\Delta-r,-\Delta+r]^d\times( (\mathcal{K}\backslash\mathcal{K}_0)\cup[\Delta-r,\Delta+r]^d))\nonumber\\
 \leq& d\psi(r)+\sqrt{\frac{\log(8\slash \delta)}{2n}}+\frac{\hat{\mathcal{R}}}{ 2(\Delta-r)}.
\end{align*}
Similarly, by \eqref{E4.22} and \eqref{E4.19},
\begin{align*}
     & \PX([-\Delta-r,-\Delta+r]^d)-P_{\hat{\theta}}([-\Delta-r,-\Delta+r]^d) \nonumber\\
   \leq& \PX([-\Delta-r,-\Delta+r]^d)-\Upsilon([-\Delta-r,-\Delta+r]^d\times[-\Delta-r,-\Delta+r]^d)\nonumber\\
   =&\Upsilon(((\mathbb{R}^d\backslash\mathcal{K})\cup (\mathcal{K}\backslash\mathcal{K}_0)\cup[\Delta-r,\Delta+r]^d)\times [-\Delta-r,-\Delta+r]^d)\nonumber\\
   \leq& \Upsilon((\mathbb{R}^d\backslash\mathcal{K})\times[-\Delta-r,-\Delta+r]^d)+\Upsilon(
    ((\mathcal{K}\backslash \mathcal{K}_0)\cup [\Delta-r,\Delta+r]^d)
    \times [-\Delta-r,-\Delta+r]^d)\nonumber\\
 \leq& P_{\hat{\theta}}(\mathbb{R}^d\backslash\mathcal{K})+ \Upsilon(
    ((\mathcal{K}\backslash \mathcal{K}_0)\cup [\Delta-r,\Delta+r]^d)
    \times [-\Delta-r,-\Delta+r]^d)\nonumber\\
 \leq& d\psi(r)+8d\sqrt{M\max_{j\in[d]}\{\epsilon_j\}}+2d\sqrt{\frac{2\log(8d\slash \delta)}{n}}+\frac{\hat{\mathcal{R}}}{ 2(\Delta-r)}.
\end{align*}
Hence
\begin{align}\label{E4.22n}
    &|\PX([-\Delta-r,-\Delta+r]^d)-P_{\hat{\theta}}([-\Delta-r,-\Delta+r]^d)|\nonumber\\
    \leq& d\psi(r)+8d\sqrt{M\max_{j\in[d]}\{\epsilon_j\}}+2d\sqrt{\frac{2\log(8d\slash \delta)}{n}}+\frac{\hat{\mathcal{R}}}{ 2(\Delta-r)}.
\end{align}
By \eqref{E4.23} and \eqref{E4.22n},
\begin{align}
    &\big|\hat{w}_{2,r}-w_2\big|=\big|P_{\hat{\theta}}([-\Delta-r,-\Delta+r]^d)-w_2\big|\nonumber\\
    \leq& d\psi(r)+8d\sqrt{M\max_{j\in[d]}\{\epsilon_j\}}+5d\sqrt{\frac{\log(16 d\slash \delta)}{2n}}+\frac{\hat{\mathcal{R}}}{ 2(\Delta-r)}+\big|P_X([-\Delta-r,-\Delta+r]^d)-w_2\big|\nonumber\\
    \leq& 2d\psi(r)+8d\sqrt{M\max_{j\in[d]}\{\epsilon_j\}}+5d\sqrt{\frac{\log(16 d\slash \delta)}{2n}}+\frac{\hat{\mathcal{R}}}{ 2(\Delta-r)}.
\end{align}
Similarly, we have 
\begin{align}
    \big|\hat{w}_{1,r}-w_1\big| 
    \leq 2d\psi(r)+8d\sqrt{M\max_{j\in[d]}\{\epsilon_j\}}+5d\sqrt{\frac{\log(16 d\slash \delta)}{2n}}+\frac{\hat{\mathcal{R}}}{ 2(\Delta-r)}.
\end{align}

\end{proof}

\begin{proof}[Proof of Corollary 4.3.1]

For any $j\in [d]$, $\rho_j(x)=(2\pi\sigma^2)^{-1\slash 2}\exp(-x^2\slash (2\sigma^2))$ for all $x\in\mathbb{R}$. Hence for any $j\in [d]$ and $r\geq 0$,
\begin{align*}
    \sup_{x\in\mathbb{R}}\rho_j(x)\leq (2\pi\sigma^2)^{-1\slash 2}, \quad \rho_j([-r,r]^c)\leq 2\exp\Big(-\frac{r^2}{2\sigma^2}\Big).
\end{align*}
Thus we can take $M=(2\pi\sigma^2)^{-1\slash 2}$ and $\psi(r)=2\exp(-r^2\slash (2\sigma^2))$ in Theorem 4.3. By Theorem 4.3, with probability at least $1-\delta$, for all $k\in [2]$ and any $r\in (0,\Delta)$, we have 
\begin{align*}
    \big|\hat{w}_{k,r}-w_k\big|&\leq 2d\psi(r)+8d\sqrt{M\max_{j\in[d]}\{\epsilon_j\}}+5d\sqrt{\frac{\log(16 d\slash \delta)}{2n}}+\frac{\hat{\mathcal{R}}}{ 2(\Delta-r)}\nonumber\\
    &\leq 4d\exp\Big(-\frac{r^2}{2\sigma^2}\Big)+8d\sqrt{\frac{\max_{j\in[d]}\{\epsilon_j\}}{\sigma}}+5d\sqrt{\frac{\log(16 d\slash \delta)}{2n}}+\frac{\hat{\mathcal{R}}}{ 2(\Delta-r)}.
\end{align*}
    
\end{proof}

\begin{proof}[Proof of Theorem 4.4]

We fix any $\delta\in (0,1)$, and define $\mathcal{E}_{\delta}$ as in Lemma \ref{Lemma2.2}. By Lemma \ref{Lemma2.2}, $\mathbb{P}(\mathcal{E}_{\delta}^c)\leq \delta$. For the rest of the proof, we assume that the event $\mathcal{E}_{\delta}$ holds.

By the definition of $\hat{\theta}$ (the solution to the constrained minimization problem in equation (\textcolor{red}{4.3}) of the paper), for any $j\in [d]$, $W_1(\PXj,P_{\hat{\theta},j})\leq \epsilon_j$. Hence by Lemma \ref{Lem:KS}, for any $j\in [d]$, $t\in\mathbb{R}$, and $\gamma>0$,
\begin{equation*}
   P_{\hat{\theta},j}((-\infty,-t])\geq \PXj((-\infty,-t-\gamma])-\gamma^{-1}\epsilon_j.
\end{equation*}
Similarly,
\begin{equation*}
    P_{\hat{\theta},j}([t,\infty))\geq \PXj([t+\gamma,\infty))-\gamma^{-1}\epsilon_j.
\end{equation*}
Hence for any $j\in [d]$ and $t,\gamma>0$, 
\begin{equation}\label{E4.7}
    P_{\hat{\theta},j}((-t,t)^c)\geq  \PXj((-(t+\gamma),t+\gamma)^c)-2\gamma^{-1}\epsilon_j.
\end{equation}

Below we consider any sufficiently large $r$ such that
\begin{equation*}
    P_X(\{\mathbf{x}\in\mathbb{R}^d:\|\mathbf{x}\|_2\geq r\})\geq c_0\exp(-C_0 r^{\alpha}).
\end{equation*}
As
\begin{equation*}
    \{\mathbf{x}\in\mathbb{R}^d:\|\mathbf{x}\|_2\geq r\}\subseteq \bigcup_{j=1}^d \{\mathbf{x}\in\mathbb{R}^d:|x_j|\geq r d^{-1\slash 2}\},
\end{equation*}
we have 
\begin{align*}
  c_0\exp(-C_0 r^{\alpha})&\leq   P_X(\{\mathbf{x}\in\mathbb{R}^d:\|\mathbf{x}\|_2\geq r\})\leq P_X\Big(\bigcup_{j=1}^d \{\mathbf{x}\in\mathbb{R}^d:|x_j|\geq r d^{-1\slash 2}\}\Big)\nonumber\\
    &\leq \sum_{j=1}^d P_X(\{\mathbf{x}\in\mathbb{R}^d:|x_j|\geq r d^{-1\slash 2}\}).
\end{align*}
Hence there exists $j\in [d]$, such that
\begin{equation}\label{E4.8}
  P_{X,j}((-rd^{-1\slash 2}, rd^{-1\slash 2})^c) =  P_X(\{x\in\mathbb{R}^d:|x_j|\geq r d^{-1\slash 2}\})\geq c_0 d^{-1} \exp(-C_0 r^{\alpha}). 
\end{equation}
By \eqref{E4.7}-\eqref{E4.8} and Lemma \ref{Lemma2.2}, for any $\gamma\in (0,rd^{-1\slash 2})$, we have  
\begin{align}\label{E4.9}
  & P_{\hat{\theta}}(\{\mathbf{x}\in\mathbb{R}^d:\|\mathbf{x}\|_2\geq rd^{-1\slash 2}-\gamma\})\geq P_{\hat{\theta},j}((-rd^{-1\slash 2}+\gamma, rd^{-1\slash 2}-\gamma)^c)\nonumber\\
  \geq & \PXj((-rd^{-1\slash 2},rd^{-1\slash 2})^c)-2\gamma^{-1}\epsilon_j\nonumber\\
  \geq & P_{X,j}((-rd^{-1\slash 2},rd^{-1\slash 2})^c)-\sqrt{\frac{2\log(4d\slash \delta)}{n}}-2\gamma^{-1}\epsilon_j\nonumber\\
  \geq & c_0 d^{-1} \exp(-C_0 r^{\alpha})-\sqrt{\frac{2\log(4d\slash \delta)}{n}}-2\gamma^{-1}\max_{j\in [d]}\{\epsilon_j\}.
\end{align}
Taking $\gamma=1$, the conclusion of Theorem 4.4 follows from \eqref{E4.9} (taking $r$ sufficiently large).
    
\end{proof}

\section{Additional Experiment Results and Implementation Details}

\subsection{Optimizing the Generator Network}

The update rule for each parameter $\theta_i$ at iteration $t$ is given by:
\begin{align*}
\theta_i^{(t+1)} = \theta_i^{(t)} - \eta^{(t)} \left( \frac{\alpha \hat{m}_i^{(t)}}{\sqrt{\hat{v}_i^{(t)}} + \epsilon} + \lambda\theta_i^{(t)}\right)
\end{align*}
where $\alpha$ denotes the learning rate, $\eta^{(t)}$ is the scheduled learning rate multiplier, $\hat{m}_i^{(t)}$ and $\hat{v}_i^{(t)}$ are the bias-corrected first and second moment estimates, $\lambda$ is the weight decay factor, and $\epsilon$ is a small constant for numerical stability.
We also include a cosine annealing scheduler \citep{loshchilov2017sgdr} to modulate the learning rate; the cosine annealing schedule for the learning rate facilitates initial rapid exploration of the parameter space followed by gradual decrease of learning rate over the course of training. 

\subsection{Rapid Convergence of \potnet}
To demonstrate the rapid empirical convergence of \potnet, we generate a fixed set of 3,000 noise variables and propagate them through the generator as training progresses. 
The simulation setup is the same as that of the Bayesian Likelihood-Free Inference example. 
The results are illustrated in the figure below, where cyan represents the ground truth and orange depicts the generated samples. The figure clearly illustrates that \potnet converges around epoch 30 and maintains robustness without experiencing mode collapse in subsequent epochs.

\begin{figure}[H]
\begin{center}
    \includegraphics[width=0.45\textwidth]{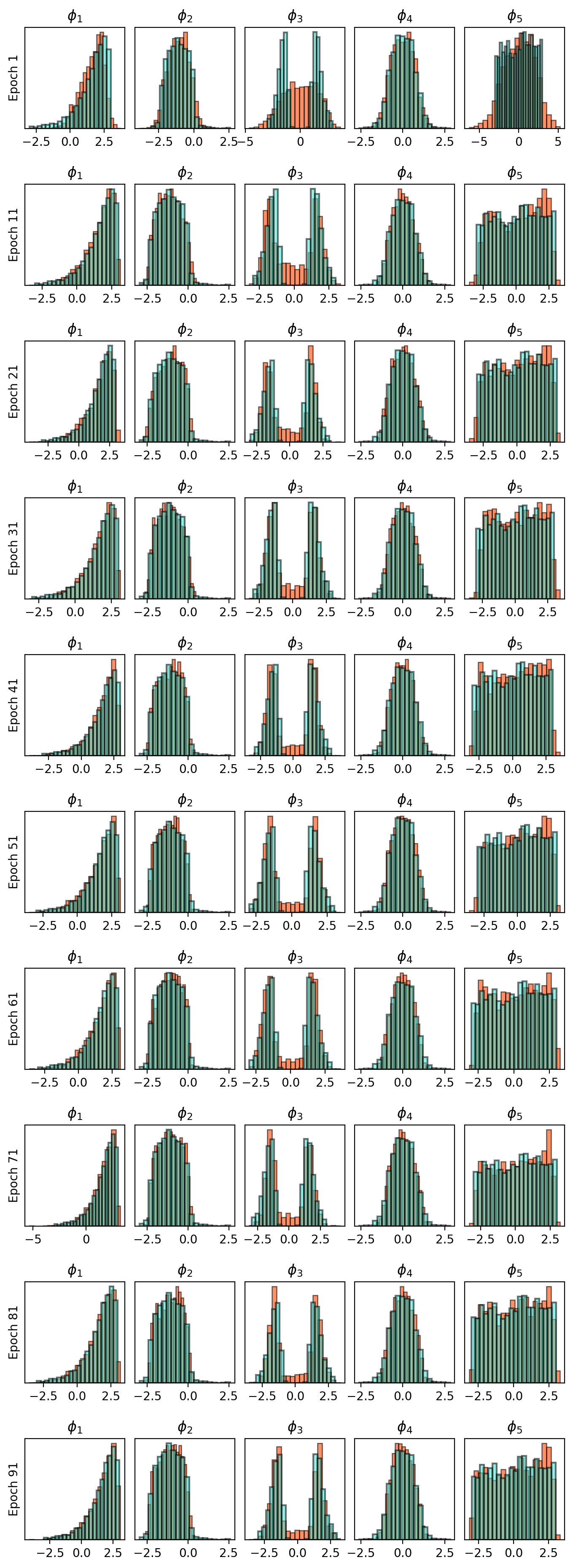}
\caption{Evolution of generated samples through training. Cyan: ground truth; orange: synthetic samples generated by \potnet. \potnet converges around epoch 30.}
\end{center}
\label{fig:potnet-rapid-converge}
\end{figure}

\subsection{Robustness of \potnet across Random Initializations}
To demonstrate \potnet's robustness to initialization, we trained ten independently initialized instances for 100 epochs each. 
The simulation setup is the same as that of the Bayesian Likelihood-Free Inference example. 
The figure below illustrates the generated samples across these initializations. The consistent alignment of orange synthetic samples with the cyan ground truth across all trials indicates \potnet's stability and insensitivity to initial conditions. This robust performance shows stability of \potnet in generating high-quality samples regardless of the random initialization.
\begin{figure}[H]
\begin{center}
    \includegraphics[width=0.3\textwidth]{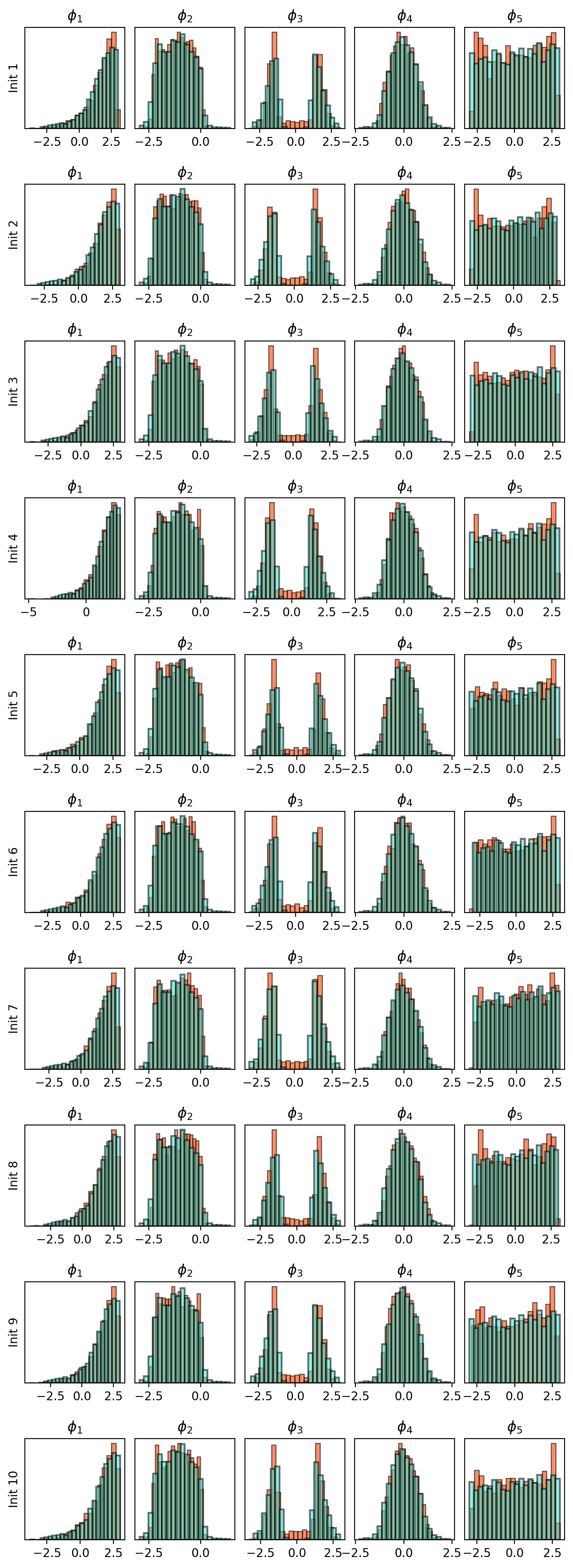}
\caption{Generated samples across 10 random initializations. Cyan: ground truth; orange: synthetic samples generated by \potnet. \potnet demonstrates stable and robust performance across random initializations.}
\end{center}
\label{fig:potnet-random-init}
\end{figure}

\subsection{Recursive Training with Synthetic Data}\label{appendix:recursive-training}

In this example, we employ the same simulation setup as in the Mixture of Gaussians experiment. The first iteration of training is identical to that of the Mixture of Gaussians experiment. 
Subsequently, we generate 2,000 synthetic data points using each model. These synthetic data points are then used to train a new generative model of the same type, constituting the second iteration of the process. The results are displayed in Figure 1.

Notably, both OT and SW methods propagate significant bias into Iteration 2. OT exhibits exacerbated mode collapse and fails to capture the full diversity of the data distribution (observe the shrinkage of the red countour relative to the blue contour). 
SW intensifies its distortion of modes (see the horizontal compression of the left mode). In contrast, synthetic data generated by \potnet maintains overall accuracy across iterations.

\begin{figure}[H]
\begin{center}
    \includegraphics[width=1\textwidth]{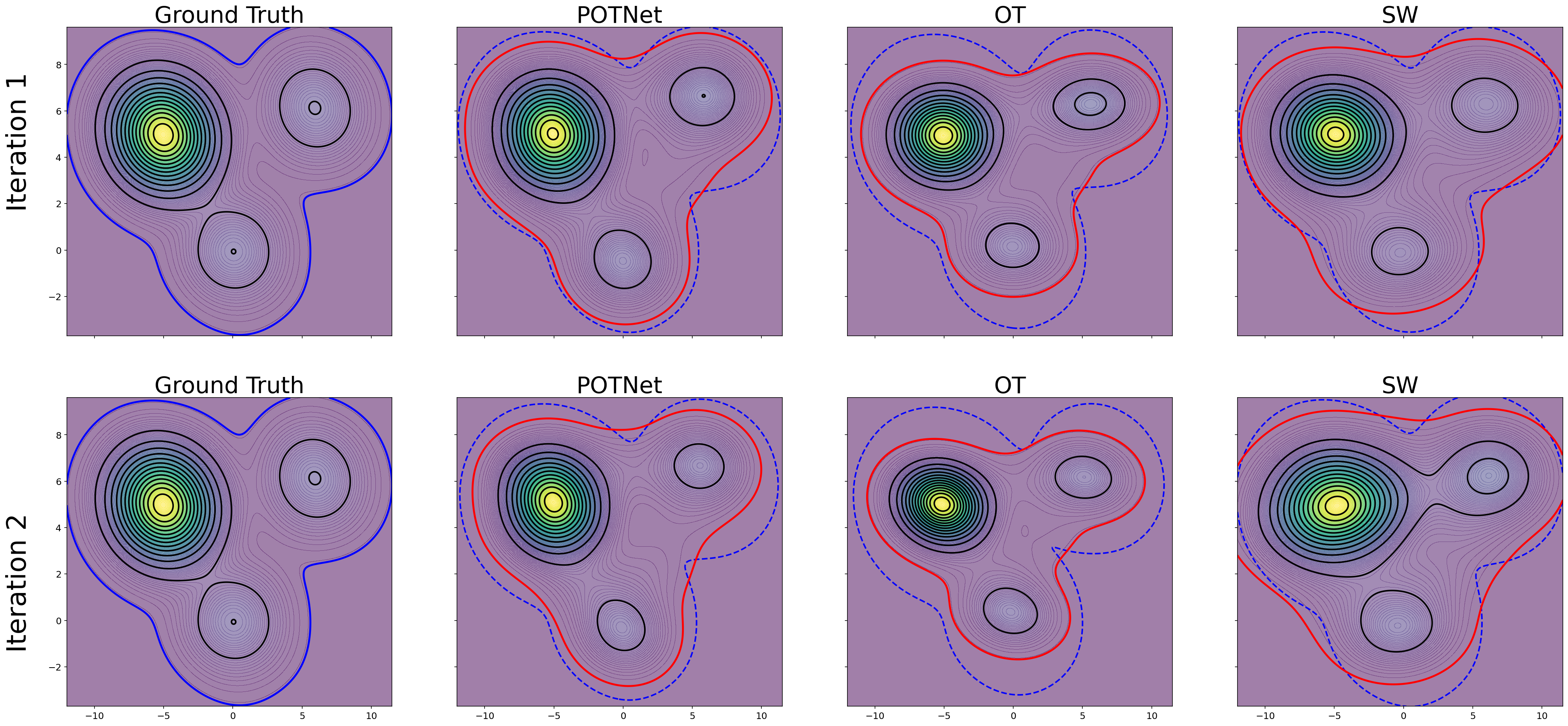}
\caption{Bivariate contour plots of the first two dimensions of a 20-dimensional Gaussian mixture model with three components, using synthetic data generated by recursive model training. Top panel: results from Iteration 1. Bottom panel: results from Iteration 2. Blue contours represent the tail distribution of the ground truth data. Red contours depict the tail distribution of the synthetic data.}
\end{center}
\label{Fig:gmm-seq-retrain}
\end{figure}

\subsection{Bayesian Likelihood-Free Inference}
\paragraph{Evaluation of total variation distance via empirical distribution}
We divide the space $[-3,3]^5$ into $3,125$ equally sized bins. For a set $P$ of synthetic samples, we estimate its empirical distribution using these bins:
\begin{equation*}
    \mathbb{Q}_P:=\sum_{x} P(x) \delta_x,
\end{equation*}
where the sum of $x$ is over the centers of the bins and $P(x)$ is the proportion of synthtic samples from $P$ that fall into the bin centered at $x$. For two sets $P,Q$ of synthetic samples, we evaluate the total variation distance between their empirical distributions by
\begin{equation*}
    d_{TV}(\mathbb{Q}_P,\mathbb{Q}_Q)=\frac{1}{2}\sum_x |P(x)-Q(x)|.
\end{equation*}

\paragraph{Visualization of mode collapse in three-dimensional space}
The plot below compares the generated samples in three dimensions, highlighting the pronounced type I and type II mode collapse behaviors exhibited by OT. While SW does not suffer from mode collapse, it fails to clearly distinguish between the two modes and thus produces in low-quality generated samples.
\begin{figure}[h!]
\begin{center}
    \includegraphics[width=.9\textwidth]{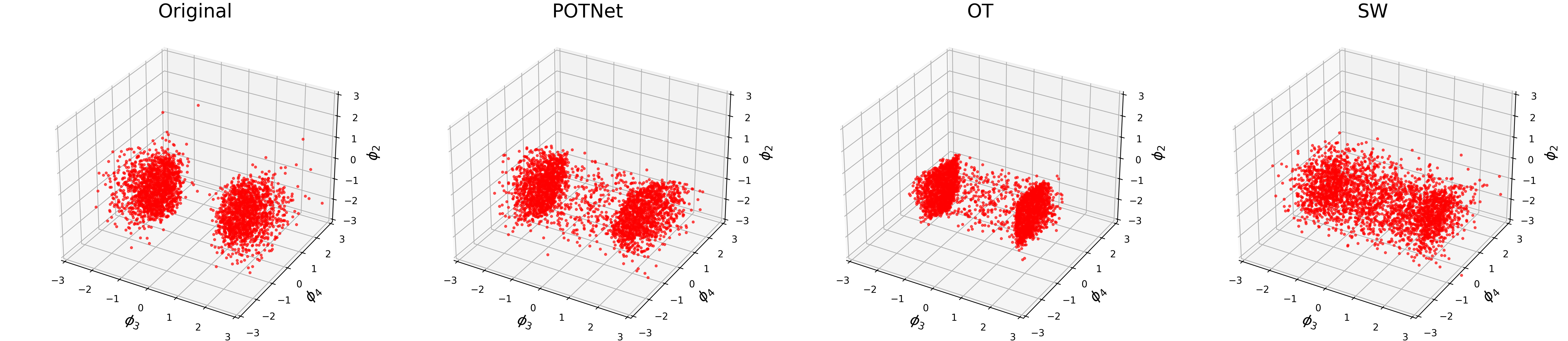}
\caption{3D comparison of generated samples. OT-generated samples exhibit severe shrinkage towards the mode of the cluster in the $\phi_3$ direction. SW-generated samples fail to produce distinct and distinguishable clusters. \potnet-generated samples captures cluster characteristics while preserving the cluster diversity.}
\end{center}
\label{fig:abc-3d}
\end{figure}

\subsection{Mixture of Gaussians}
For this example, the true data generating distribution is defined by component weights, $\mathbf{w} = (0.1, 0.1, 0.8)^\top$, means $\pmb{\mu}_1 = \mathbf{0}_{20}$, $\pmb{\mu}_2 = 6 \cdot \mathbf{1}_{20}$, and $\pmb{\mu}_3 = (-5, 5)_{10}^\top$, with covariance matrices,
\begin{align*}
\mathbf{\Sigma}_1 &= \mathbf{\Sigma}_2 = \begin{bmatrix}
1 & 0 & 0 & \cdots & 0 \\
0 & 1 & 0 & \cdots & 0 \\
0 & 0 & 1 & \cdots & 0 \\
\vdots & \vdots & \vdots & \ddots & \vdots \\
0 & 0 & 0 & \cdots & 1
\end{bmatrix}_{20 \times 20} \\[10pt]
\mathbf{\Sigma}_3 &= \begin{bmatrix}
1 & -0.09 & 0 & \cdots & 0 \\
-0.09 & 1 & 0 & \cdots & 0 \\
0 & 0 & 1 & \cdots & 0 \\
\vdots & \vdots & \vdots & \ddots & \vdots \\
0 & 0 & 0 & \cdots & 1
\end{bmatrix}_{20 \times 20}
\end{align*}
The real dataset consists of 2,000 observations. 
We use the following generator network architecture for \potnet, WGAN, OT, and SW,
\begin{align*}
    z \in \R^{20} &\to \mathrm{FC}_{500} \to \mathrm{BatchNorm1d} \to \mathrm{Dropout} \to \mathrm{ReLU} \\
    &\to \mathrm{FC}_{200} \to \mathrm{BatchNorm1d} \to \mathrm{Dropout} \to \mathrm{ReLU} \\
    &\to \mathrm{FC}_{100} \to \mathrm{BatchNorm1d} \to \mathrm{Dropout} \to \mathrm{ReLU} \to \mathrm{FC}_{20}
\end{align*}
and use the default network architecture for CTGAN.
For the critic network of WGAN, we use the following architecture:
\begin{align*}
    x \in \R^{20} &\to \mathrm{FC}_{500} \to \mathrm{LeakyReLU(0.2)} \\
    &\to \mathrm{FC}_{200} \to \mathrm{LeakyReLU(0.2)} \\
    &\to \mathrm{FC}_{1}
\end{align*}

Following convention, for WGAN with gradient penalty, the critic network is updated 5 times for every generator update and the gradient penalty parameter is set to 10.
For each method, we set the latent dimension equal to the number of features of 20.
Each method is trained for 200 epochs, with the exception of WGAN with gradient penalty, which is trained for 400 epochs to ensure convergence. 

\subsection{MNIST Digit Generation}
\paragraph{Setting 1: Convoluational Neural Network Architecture} 
Under this setting, the generator network architecture consists of a fully connected layer, followed by three sequential blocks. Each block comprises upsampling, 2D convolution, batch normalization, and leaky ReLU layers (with the exception of the final layer which utilizes hyperbolic tangent activation). This design progressively refines image features throughout the network.
We employ a batch size of 256 and a latent dimension of 100 for all networks.
The full generator network architecture is as follows:
\begin{align*}
    z \in \R^{100} &\to \mathrm{FC}_{128\times 7\times 7} \to \mathrm{BatchNorm2d} \to \mathrm{Upsample}(2.0) \\
    &\to \mathrm{Conv2d}_{128\times 128} \to \mathrm{BatchNorm2d} \to \mathrm{ReLU}\\
    & \to \mathrm{Upsample}(2.0) \to \mathrm{Conv2d}(128 \times 64) \to \mathrm{BatchNorm2d} \to \mathrm{ReLU} \\
    &\to \mathrm{Conv2d}(64 \times 1) \to \mathrm{Tanh}
\end{align*}
For WGAN, we update the critic network five times for every generator update and apply a gradient penalty of 10.
We adopt the following architecture for the critic network of WGAN:
\begin{align*}
    x \in \R^{784} &\to \mathrm{Conv2d}_{784 \times 64} \to \mathrm{LeakyReLU}(0.2) \\
    &\to \mathrm{Conv2d}(64 \times 128) \to \mathrm{LeakyReLU}(0.2) \\
    &\to \mathrm{Flatten} \to \mathrm{FC}_{1}
\end{align*}

\end{appendices}

\end{document}